\def\BibTeX{{\rm B\kern-.05em{\sc i\kern-.025em b}\kern-.08em
    T\kern-.1667em\lower.7ex\hbox{E}\kern-.125emX}}
\renewcommand{\b}[1]{\boldsymbol{#1}} %
\newcommand{\DG}[1]{{\textcolor{black}{{#1}}}}
\newcommand{\ho}[1]{#1}%
\newcommand{\lf}[1]{#1}%
\newcommand{\h}[1]{{{#1}}}
\newcommand{\hh}[1]{{{#1}}}
\newcommand\bluesout{\bgroup\markoverwith{\textcolor{blue}{\rule[0.5ex]{2pt}{0.4pt}}}\ULon}
\newcommand{\systems}{robot }
\newcommand{\system}{robot}
\newcommand{\Systems}{Robot }
\newcommand{\exceeds}[1]{#1}%
\theoremstyle{definition}
\newtheorem{theorem}{Theorem}
\newtheorem{problem}{Problem}
\newcommand{\probref}[1]{Problem~\ref{#1}}
\newtheorem{definition}{Definition}
\newtheorem{lemma}{Lemma}
\newtheorem{assumption}{Assumption}
\theoremstyle{remark}
\begin{document}

\title{Scenario-Based Motion Planning with \\Bounded Probability of Collision}%

\author{Oscar de Groot, Laura Ferranti, Dariu Gavrila, Javier Alonso-Mora
\thanks{The authors are with the Dept. of Cognitive Robotics, TU Delft, 2628 CD Delft, The Netherlands. \texttt {Email: o.m.degroot@tudelft.nl}}
\thanks{This work received support from the Dutch Science Foundation \DG{NWO-TTW}, within the SafeVRU project (14667) and Veni project HARMONIA (18165), and the European Union, within the ERC Starting Grant INTERACT (101041863). Views and opinions expressed are however those of the author(s) only and do not necessarily reflect those of the European Union or the European Research Council Executive Agency. Neither the European Union nor the granting authority can be held responsible for them.}}

\maketitle

\begin{abstract} %

Robots will increasingly operate near humans that introduce uncertainties in the motion planning problem due to their \ho{complex} nature. Typically, chance constraints are introduced in the planner to optimize performance while guaranteeing probabilistic safety. However, existing methods do not consider the actual probability of collision for the planned trajectory, but rather its marginalization, that is, the independent collision probabilities for each planning step and/or dynamic obstacle, resulting in conservative trajectories. To address this issue, we introduce a novel \ho{real-time capable} \ho{method termed Safe Horizon MPC,} that explicitly constrains the joint probability of collision with all obstacles over the duration of the motion plan. This is achieved by reformulating the chance-constrained planning problem using scenario optimization and predictive control. \ho{Our method is less conservative than state-of-the-art approaches, applicable to arbitrary probability distributions of the obstacles' trajectories, computationally tractable and scalable.} We demonstrate our proposed approach using a mobile robot and an autonomous vehicle in an environment shared with humans.
\end{abstract}

\begin{IEEEkeywords}
Motion and Path Planning, Optimization and Optimal Control, Collision Avoidance, Probability and Statistical Methods
\end{IEEEkeywords}

\IEEEpeerreviewmaketitle

\section{Introduction}\label{sec:introduction}
\IEEEPARstart{M}{obile} robots can improve our quality of life, from transporting goods in warehouses~\cite{simon_inside_2019} to helping us commute more efficiently and safely using self-driving vehicles~\cite{walker_self-driving_2019}. In most applications where robots are currently deployed, the operating domain does not allow the robots to operate near humans (e.g., using dedicated lanes) to simplify the robot-navigation task. To deploy mobile robots in real-world environments, such as our cities, they need to be capable of navigating among humans, which remains challenging. 

In crowded environments, the robot needs to understand and infer the motion of humans in order to move safely and efficiently. Unfortunately, human behavior is hard to predict and varies per person. In addition, human intentions cannot be explicitly communicated to the \system. This inherently makes the motion prediction of humans non deterministic. Recent perception methods, \lf{such as}\ho{~\cite{kooij_context-based_2019}, infer human intentions,} returning probabilistic information concerning humans. These probabilistic predictions need to be considered and exploited by the motion planner.\looseness=-1

In a probabilistic setting, the occurrence of a collision is a \textit{probabilistic} event. Our goal is to assess and bound the \textit{Collision Probability} (CP) of the planned trajectory. This is challenging because of the non-Gaussian and multi-modal, i.e., multiple paths are possible, distributions involved when predicting human behavior. In addition, when the collision probability spans a duration, e.g., over the planned trajectory, then it must consider the correlation over time. The time correlation exists because the first collision in a trajectory renders it unsafe, such that all subsequent collisions can be ignored. Almost all of previous works\ho{, e.g., \cite{zhu_chance-constrained_2019, luders_chance_2010, blackmore_probabilistic_2006, van_den_berg_lqg-mp_2011, wang_non-gaussian_2020, wang_fast_2020, de_groot_scenario-based_2021},} do not account for this correlation,  which leads to conservative trajectories. Similarly, the collision probability \ho{in these works} is computed per obstacle, which degrades performance when more obstacles influence the plan, i.e., in crowded environments. By accounting for the correlation in time and by considering all obstacles, one would be able to plan more efficient trajectories, without compromising safety.

\begin{figure}[t]
    \centering
    \includegraphics[width=0.45\textwidth]{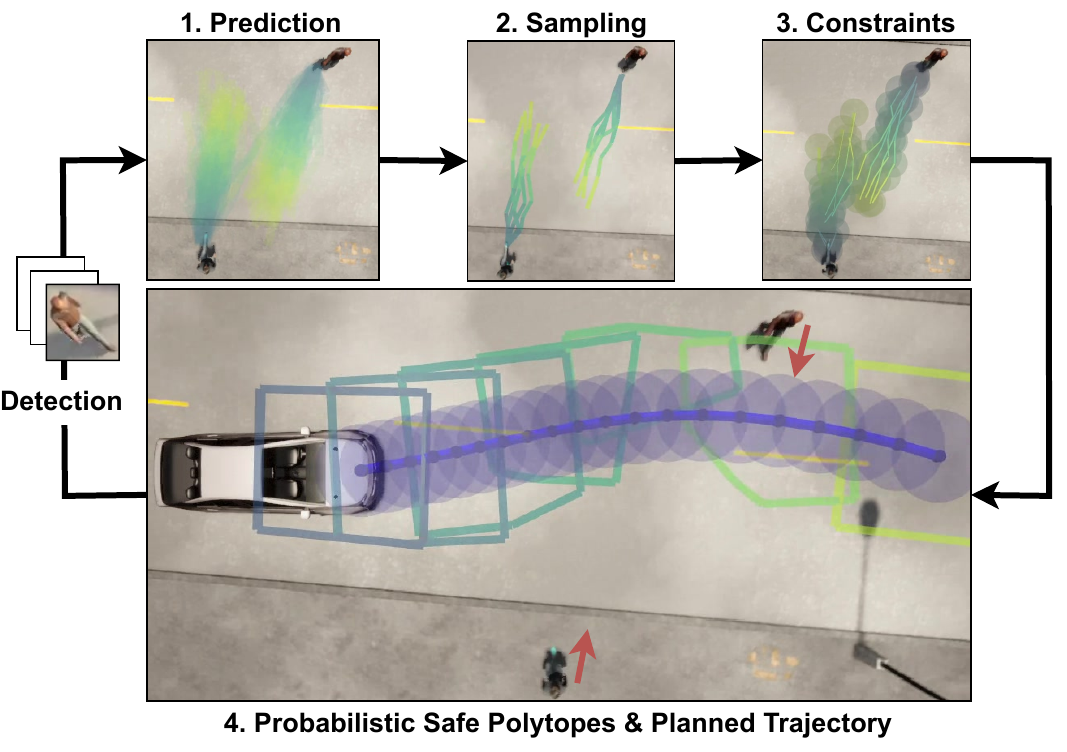}
    \caption{Overview of the proposed scenario-based motion planner. \ho{Predicted obstacle trajectories are sampled to obtain scenarios, where each scenario represents a trajectory for all obstacles over the planning horizon. By ensuring safety for all scenarios, probabilistic safety of the motion plan is guaranteed.}}
    \label{fig:eye_catcher}
\end{figure}

We achieve this goal by devising a novel probabilistic safe motion planner based on scenario optimization and nonlinear Model Predictive Control (MPC). Nonconvex Scenario optimization~\cite{campi_general_2018} is a sampling based approach for assessing the probability that a constraint holds under uncertainty. In the context of this paper, the imposed constraint \ho{probabilistically} enforces collision avoidance between the controlled \systems and the dynamic obstacles over the duration of the motion plan (see Fig.~\ref{fig:eye_catcher}). The uncertainty associated with the motion of detected obstacles is predicted forward in time (Step 1). We sample scenarios from these predictions that describe the trajectories of all dynamic obstacles during the planning horizon (Step 2) and construct collision avoidance constraints around each of the scenarios (Step 3). The \systems trajectory is optimized with respect to the constraints (Step 4) which provides guaranteed probabilistic collision avoidance. \ho{The exact guarantees are tied to the number of sampled scenarios, which we determine before deploying the controller.} This is, to our knowledge, the first real-time capable method that directly provides probabilistic safety guarantees on the planned trajectory, in contrast with the existing state-of-the-art where the same quantity is conservatively approximated through its marginals. We refer to our novel probabilistic safe motion planner as \textit{Safe Horizon MPC} (SH-MPC).

We compare our method, for which we provide theoretical guarantees, to several baselines, in multiple scenarios, both with mobile robots and self-driving vehicles, and for varied probability distributions. We show that our planner is less conservative and improves efficiency, in terms of traveling time, while remaining safe.

\section{Related Work and Contribution}
A sizable body of literature exists on motion planning for safe autonomous navigation. In this section, we review some of the existing work on trajectory optimization with collision avoidance under uncertainty. For a general overview of autonomous navigation, we refer the reader to~\cite{paden_survey_2016} and~\cite{schwarting_planning_2018}. 

\subsection{Collision Avoidance under Uncertainty}
An important problem in autonomous navigation is to prevent collisions with \ho{dynamic obstacles}. In trajectory optimization, the navigation problem is formulated as an optimization problem where performance criteria are optimized (e.g., lane following and progress towards the goal) under constraints (e.g., collision avoidance and actuator limits)~\cite{schwarting_safe_2018},~\cite{brito_model_2019}. Due to large and multi-modal uncertainties in human motion prediction, collision avoidance in the mean or nominal case (e.g., constant velocity) may lead to collisions in practice. Many works therefore consider how to address this uncertainty. %

One can consider the collision avoidance problem as a special case of optimization under uncertainty, for which two common approaches exist. \textit{Robust optimization}~\cite{ben-tal_robust_1998} requires the constraints to be satisfied for all possible realizations of the uncertainty, while \textit{stochastic optimization}~(see \cite{mesbah_stochastic_2016} for an overview) allows for violation of the constraints, as long this happens with a probability smaller than an upper bound $\epsilon$. \ho{Because the set of all possible realizations is often not available or too conservative, we focus in the remainder of this section on stochastic optimization.} 
We refer to~\cite{kouvaritakis_model_2016} for more details on both methods in the context of Model Predictive Control (MPC). 

\subsection{Collision Avoidance Chance Constraints}
The probability of constraint violation in stochastic optimization is specified through \textit{chance constraints}, which constrain the \textit{probability} that a nominal constraint is satisfied. Exact evaluation of these chance constraints is however intractable in almost all applications. Existing works therefore focus on an approximation of the constraints, through additional assumptions on the probability distribution associated with the uncertainty (e.g., Gaussian~\cite{zhu_chance-constrained_2019}, \cite{fisac_probabilistically_2018}) or on the controlled \systems (e.g., linear dynamics~\cite{blackmore_probabilistic_2006}). Recent works~\cite{wang_fast_2020},~\cite{de_groot_scenario-based_2021} have resolved many of the assumptions, making the framework applicable to nonlinear \systems dynamics and arbitrary probability distributions. However, the chance constraints in these and many other works are not imposed on the {\system}'s planned trajectory, i.e., the timed sequence of planned positions, but rather on each of its individual positions over the planning horizon. \ho{This fails to accurately bound the probability of colliding at \textit{any} time\footnote{\ho{Similarly, chance constraints imposed per obstacle fail to estimate the probability of colliding with \textit{any} obstacle.}}.} In this regard, three types of chance constraint formulations have been considered in previous work: \textit{Marginal}, \textit{Conditional} and \textit{Joint}, summarized in Table~\ref{tab:related_work}.%

\begin{table}[t] %
    \centering
    \begin{tabular}{|l|c|c|c|}
    \hline \multicolumn{1}{|c|}{\textbf{Method}}&\begin{tabular}{@{}c@{}}\textbf{Non} \\ \textbf{Gaussian}\end{tabular} & \textbf{Dynamics} & \begin{tabular}{@{}c@{}}\textbf{Computation} \\ \textbf{Times\textsuperscript{*}}\end{tabular} \\\hline
    \begin{tabular}{@{}l@{}}Additive Marginal \\ \cite{zhu_chance-constrained_2019, luders_chance_2010, blackmore_probabilistic_2006, masahiro_ono_iterative_2008}\end{tabular} & & Linear & $10$~ms\\\hline
    \begin{tabular}{@{}l@{}}Multiplicative Marginal\\ \cite{van_den_berg_lqg-mp_2011} \end{tabular} & & Linear & $100$~ms\\\hline
    \begin{tabular}{@{}l@{}}Additive Marginal \\ \cite{wang_non-gaussian_2020, wang_fast_2020, de_groot_scenario-based_2021} \end{tabular} & X & Nonlinear & $10$~ms\\\hline\hline
    \begin{tabular}{@{}l@{}}Conditional Marginal \\ \cite{patil_estimating_2012} \end{tabular} & & Linear & $10$~ms\\\hline\hline
    \begin{tabular}{@{}l@{}}Joint Trajectory\\ \cite{blackmore_probabilistic_2010, janson_monte_2015}\end{tabular} & X & Linear & $1000$~ms\\\hline
    \begin{tabular}{@{}l@{}}Joint Trajectory\\ \cite{schmerling_evaluating_2017} (Estimation only)\end{tabular} & X & Nonlinear & $1000$~ms\\\hline
    \begin{tabular}{@{}l@{}}Joint Trajectory\\ \textbf{Ours} \end{tabular} & \textbf{X} & \textbf{Nonlinear} & \textbf{$\b{10}$~ms}\\\hline
    \end{tabular}
    \caption{Comparison of motion planning method with collision avoidance chance constraints. \textsuperscript{*}Order of magnitude, exact times vary with the acceptable CP, application and computational hardware.}
    \label{tab:related_work}
\end{table}

\subsection*{\ho{Marginal Chance Constraints}}
Constraints on each position along the trajectory are referred to as \textit{marginal} chance constraints. 
\h{Let event $A_k$ denote the case that no collisions occur at time $k$ and {\small$\mathbb{P}(A_k)$} therefore be the probability that the robot is safe at timestep $k$. Then} the exact probability that a trajectory is safe \h{over $N$ steps} is given by {\small$\mathbb{P}(A) = \h{\prod_{k=1}^N} \mathbb{P}(A_k \ | \ A_{0:k-1})$}. That is, the CP for each position is conditional on the probability of avoiding collisions up until the position is reached at time $k$. The problem is simplified if we assume instead that this event is independent for all states ({\small$\Tilde{\mathbb{P}}(A) \approx \prod_k \mathbb{P}(A_k)$}). In~\cite{janson_monte_2015}, these marginal methods are further divided into \textit{additive} and \textit{multiplicative} approaches. 

\textit{Additive} approaches impose constraints on each marginal probability (i.e., $\mathbb{P}(A_k)\leq \epsilon_k$). Using Boole's inequality {(\small$\mathbb{P}(\cup_k A_k) \leq \sum_k\mathbb{P}(A_k)$)} the CP of the trajectory is bounded by the sum of the individual CPs. 
Under Gaussian uncertainty, the work~\cite{blackmore_probabilistic_2006} reformulated the constraints as an analytical constraint on the 1D Cumulative Density Function (CDF). The same idea is used in~\cite{zhu_chance-constrained_2019} in an MPC framework to prevent collision between \systems and obstacle volumes. In~\cite{masahiro_ono_iterative_2008} the bound on each marginal probability is updated, known as \textit{risk allocation}, while maintaining the same total risk bound (i.e., $\sum_k \epsilon_k = \epsilon$). 
In~\cite{luders_chance_2010, aoude_probabilistically_2013} marginal chance constraints are applied to the Rapidly expanding Random Trees (RRT) algorithm such that each node in the tree is statistically safe. When the uncertainty is non Gaussian, the CDF of the probability distribution is typically not available. In~\cite{wang_non-gaussian_2020, wang_fast_2020} an MPC for motion planning is formulated where inequalities are posed on stochastic moments of the marginal probability distribution. \h{A similar approach, for linear dynamics, is applied in~\cite{ren_chance-constrained_2023} where the conditional Variance-at-Risk (cVaR) is used to minimize constraint violation.} In our previous work~\cite{de_groot_scenario-based_2021}, we ensure safety for a finite set of sampled chance constraints and generalize the associated safety properties using the scenario approach~\cite{campi_general_2018}. 

The \textit{multiplicative} formulation explicitly constrains the product of the marginal probabilities. It was applied in~\cite{van_den_berg_lqg-mp_2011} to plan motion under sensing and actuation uncertainty. An alternative marginal formulation is proposed in~\cite{fisac_probabilistically_2018}, which bounds the largest marginal constraint violation.

The limitation of marginal approaches is that the bound on the CP of the trajectory is inaccurate, as noted by~\cite{patil_estimating_2012}  and~\cite{janson_monte_2015}. It is shown in~\cite{janson_monte_2015} that the trajectory CP approaches $\infty$ and $1$, for the additive and multiplicative formulation, respectively, when the number of evaluations in the trajectory increases, regardless of the real CP. \h{Marginal constraints only asses the risk correctly for the first time step and a single obstacle. The risk of the remainder is under- or overestimated.} \hh{Overestimation of the risk and the associated unsafe space along the time horizon can cause the planning problem to become infeasible and may cause the robot to freeze.} \ho{Due to these limitations,~\cite{patil_estimating_2012} conditioned marginal chance constraints on being collision free at prior times and evaluated them by truncating the part of the distribution in collision in each time instance. This formulation is more accurate, but is limited to Gaussian distributions.}

\subsection*{Joint Chance Constraint}
Some authors formulate a \textit{joint} chance constraint on the CP of the planned trajectory. \hh{Joint chance constraints estimate the open-loop risk over the time horizon more accurately, making it less likely that the problem becomes infeasible and improving performance.}
\h{The joint CP} can be evaluated by using sampling-based methods~\cite{janson_monte_2015}. In particular, prior works~\cite{blackmore_probabilistic_2010, janson_monte_2015, schmerling_evaluating_2017} consider Importance Sampling Monte-Carlo (ISMC) sampling to \textit{approximate} the CP. An empirical estimate of the constraint violation is obtained by sampling the joint distribution and evaluating the constraint for each sampled \textit{trajectory}. %
ISMC methods \h{are well suited for estimating risk, but are computationally expensive when planning a trajectory. An alternative is to formulate a mixed-integer problem (e.g.,~\cite{blackmore_probabilistic_2010}) to decide which samples may be violated, but these problems are hard to solve in real-time.} %

\subsection{Scenario Optimization}
Rather than constraining each of the marginal CPs along the planned trajectory we directly constrain the joint CP of the trajectory. We achieve this through Nonconvex Scenario Optimization (NSO)~\cite{campi_general_2018} under an explicit chance constraint on the joint CP. Scenario optimization is well established for convex optimization under uncertainty~(see e.g.,~\cite{calafiore_scenario_2006, campi_exact_2008, campi_sampling-and-discarding_2011, schildbach_randomized_2013}) and has recently been extended to the general nonconvex case~\cite{campi_general_2018}. NSO is a sampling-based framework for optimization under uncertainty, similar to ISMC, but instead of \textit{averaging} the samples, it \textit{constrains} the solution to the samples, \h{which is computationally more efficient when planning trajectories.}%

The NSO framework~\cite{campi_general_2018} cannot directly be applied to the considered application. \hh{The traditional safety assessment needs to repeat the optimization problem once for each scenario and thus cannot provide guarantees in real-time. In addition, unbounded distributions (e.g., Gaussian distributions) cannot be readily incorporated.} We address these limitations in this work, \hh{significantly reducing the computational complexity of the online safety assessment (Sec.~\ref{sec:algorithms}) and including distributions with unbounded support (Sec.~\ref{sec:removal}).} This allows us to apply the framework to online motion planning in real-time.

\subsection{Contribution}
The contribution of this paper is three-fold:
\begin{enumerate} %
    \item A novel trajectory optimization method, \textit{Safe Horizon MPC}, that explicitly constrains the collision probability over the full duration of the planned trajectory. This distinguishes our work from previous work, e.g., \cite{zhu_chance-constrained_2019, luders_chance_2010, blackmore_probabilistic_2006, van_den_berg_lqg-mp_2011, wang_non-gaussian_2020, wang_fast_2020, de_groot_scenario-based_2021}, where the collision probability is constrained per planning time instance. The idea is to formulate a single collision avoidance constraint over the horizon and to evaluate it by sampling from the distribution of the uncertainty\h{, where each sample represents a trajectory for all obstacles.} The more \h{samples are drawn}, the higher the probability that the constraint is satisfied. By relying on sampling, our planner is distribution agnostic.

    \item An approach that, under a convexity assumption on the iterations of the underlying optimization algorithm, \h{identifies the scenarios that hold the solution in place (known as the \textit{support}) during optimization, in contrast with the framework~\cite{campi_general_2018} where the support is computed after optimization. \hh{We leverage this information to certify the motion plan online.}}

    \item A method to incorporate \textit{scenario removal} in the NSO framework of~\cite{campi_general_2018}. Scenario removal is an extension of the scenario approach where more scenarios are sampled initially such that some of the most limiting scenarios can be removed before optimization. This approach, which has been studied in the convex case (see for example~\cite{campi_sampling-and-discarding_2011}, \cite{garatti_risk_2019}), reduces how conservative the solution of the scenario program is \hh{and makes the scenario approach applicable to distributions with unbounded support.}  
    As a result, scenario removal makes the online optimization consistent and reduces infeasibility.
    
\end{enumerate}
We compare our approach in two simulation environments, on a mobile robot and a self-driving vehicle, with pedestrians against marginal baselines and show that our method achieves faster trajectories while maintaining the probabilistic safety specification. In addition, we show that the computation times scale well with regards to the number of obstacles and the type of distribution. \ho{Our planner is implemented in C++/ROS and will be released open source.}

\subsection{Notation}%
Vectors and matrices are expressed in bold and capital bold notation, respectively. We use $\bigwedge$ to denote the ``and'' operation and $\bigvee$ to denote the ``or'' operation. The notation $\hh{\mathcal{C}} \ \textrm{\textbackslash} \ C_i$ refers to the set $\hh{\mathcal{C}}$ from which the element $C_i$ is removed. %

\section{PROBLEM FORMULATION}\label{sec:problem_formulation}
We consider a controlled \systems with the general nonlinear discrete-time dynamics
\begin{equation*}
 \h{\b{x}_{k + 1} = f(\b{x}_k, \b{u}_k),\label{eq:dynamics}}
\end{equation*}
where $\b{x}_k \in \mathbb{R}^{n_x}$ and $\b{u}_k \in \mathbb{R}^{n_u}$ denote the states and inputs, respectively. \h{The robot state is assumed to contain its $x$-$y$ position $p =[x,y] \in \mathbb{R}^2 \subseteq \mathbb{R}^{n_x}$.} Humans in the environment of the \systems pose constraints on the navigation envelope. 
\h{We consider that the motion prediction of the dynamic obstacles is uncertain by modeling the positions of at most $M$ obstacles as random variables. In particular, we denote the uncertain position of Obstacle $j$ at time step $k$ as $\b{\delta}_{k, j}\subseteq \b{\delta}_k$, where $\b{\delta}_k$ contains all obstacle positions at time $k$. The \textit{joint uncertainty} $\b{\delta} = \left[\b{\delta}_1^T, \hdots, \b{\delta}_N^T\right]^T \in \Delta$ stacks the uncertainty over all time steps.} 
\h{Here, $\Delta$ denotes the probability space\footnote{See e.g., \cite{billingsley_probability_1995-1} for more details.} of the joint uncertainty, which is endowed with a $\sigma$-algebra $\mathcal{D}$ and a probability measure $\mathbb{P}$.}

\subsection{Chance Constrained Planning Problem}
\h{The problem setup is visualized in Fig.~\ref{fig:overall-brief}.} We model the \systems area and obstacle area with $n_d$ discs and a single disc, respectively. \h{The dynamic uncertainty of other road-users affects the navigation envelope of the robot.} To constrain the probability of a collision with \textit{any} obstacle along the horizon, we formulate a single chance constraint for collision avoidance. This leads to the following Chance Constrained Problem (CCP)

\begin{problem}[CCP]\label{prob:ccp}
\begin{subequations}
\label{eq:motion_planning_ccp}
\begin{align}
    \min_{\b{u} \in \mathbb{U}, \b{x}\in\mathbb{X}} \ & \sum_{k = 0}^N J(\b{x}_k, \b{u}_k)\\
    \textrm{s.t.} \quad & \b{x}_0 = \b{x}_{\textrm{init}}\\
    & \b{x}_{k + 1} = f(\b{x}_k, \b{u}_k), \ k = 0, \hdots, N-1\\
    & \mathbb{P}\!\left[\bigwedge_{k = 1}^N\bigwedge_{d = 0}^{n_d}\bigwedge_{j = 0}^M\! \left(||\b{p}^d_k - \bm{\delta}_{k, j}||_{\h{2}} \geq r \right)\right] \geq 1 - \epsilon%
    \label{eq:motion_planning_constraints}
\end{align}
\end{subequations}
\end{problem}
where $\b{x}_{\textrm{init}}$ denotes the initial state of the {\system}, objective $J$ is designed to achieve control objectives, \h{$\mathbb{X} \subseteq \mathbb{R}^{N n_x}$ and $\mathbb{U} \subseteq \mathbb{R}^{N n_u}$ denote state and input constraints over the horizon}, respectively, and \eqref{eq:motion_planning_constraints} is the collision avoidance chance constraint. Our goal is to compute a control input $\b{u}$ and trajectory $\b{x}$ under the uncertainty $\bm{\delta}$ that is collision free with a probability of at least $1-\epsilon$, where $\epsilon$ denotes the maximum collision probability over the planned trajectory. 

\h{When $\mathbb{P}$ is estimated by a prediction model, the collision avoidance constraint is formulated with respect to an estimate $\hat{\mathbb{P}}$ of $\mathbb{P}$ and guarantees relate to the estimate of the probability distribution.}

\subsection{Scenario-Based Planning Problem}
Directly evaluating chance constraint \eqref{eq:motion_planning_constraints} is not computationally feasible in closed loop. Our goal is to formulate a sampled deterministic version of the CCP, known as a Scenario Program (SP)~\cite{campi_general_2018}. Because \eqref{eq:motion_planning_constraints} can be enforced when $\bm{\delta}$ is deterministic, we are able to reformulate the CCP into a problem that can be solved in closed loop. %
\h{The challenges for safe robot navigation within this framework are to determine the number of samples that must be drawn and, consequently, identify the samples that affect the optimization.}

\begin{figure}[t]
    \centering
    \includegraphics[angle=-90, width=0.3\textwidth]{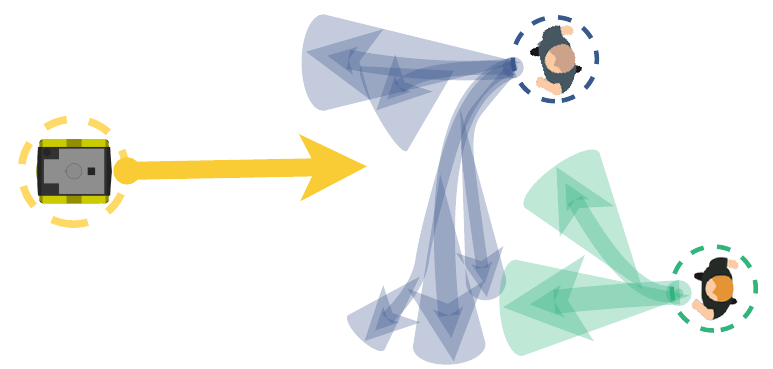}
    \caption{\h{The \h{chance constrained} motion planning problem considered in this paper with the robot (yellow) navigating under probabilistic motion predictions of obstacles (blue/green). \h{The distribution of motion prediction can take any form, but is visualized here with several modes (arrows/shaded regions).}}}
    \label{fig:overall-brief}
\end{figure}

\subsection{Paper Organization}

In the following, we first consider a more general CCP formulation that can be solved by the proposed framework. We provide a brief summary of the nonconvex scenario optimization framework~\cite{campi_general_2018} to show how this general class of CCPs can be solved via its associated SP. We then present our main results in Section~\ref{sec:algorithms} and Section~\ref{sec:removal} which allows the SP to be solved in closed loop, while maintaining feasibility with respect to the CCP. \h{Section~\ref{sec:shmpc} shows how to solve \probref{prob:ccp} by solving its associated SP in an MPC framework. Finally, Sections~\ref{sec:motion_planning} and \ref{sec:results} apply this MPC framework to generate safe motion plans for a \systems navigating among pedestrians.}%

\section{NONCONVEX SCENARIO OPTIMIZATION}\label{sec:preliminaries}
\h{In the following, we summarize the main results of the NSO framework~\cite{campi_general_2018} \h{that we use to build our motion planning framework}. To this end, consider the following generalization of Problem~\ref{prob:ccp}:}
\begin{problem}[General CCP]\label{prob:general_ccp}
\begin{subequations}
\label{eq:general_problem}\label{eq:chance_constrained_problem}
\begin{align}
    \min_{\b{u} \in \mathbb{U}, \b{x}\in\mathbb{X}} \quad & \sum_{k = 0}^N J(\b{x}_k, \b{u}_k)\\
  \textrm{s.t.} \quad & \b{x}_0 = \b{x}_{\textrm{init}}\\
  & \b{x}_{k + 1} = f(\b{x}_k, \b{u}_k),\ k = 0, \hdots, N-1 \label{eq:opt_dynamics}\\
  &\mathbb{P}\left[g(\b{x}, \bm{\delta}) \leq 0\right] \geq 1 - \epsilon, \ \bm{\delta} \in \Delta.\label{eq:opt_constraints}
\end{align}
\end{subequations}
\end{problem}
\h{The} constraints $g(\b{x}, \bm{\delta})\leq 0$ must be satisfied with a probability of at least $1 - \epsilon$. The main idea of scenario optimization~\cite{campi_general_2018} is to solve \probref{prob:general_ccp} by imposing deterministic constraints for a set of \textit{scenarios} $\bm{\omega} = \{\bm{\delta}^{(1)}, \hdots, \bm{\delta}^{(S)}\} \ \h{\in \Delta^S}$, where each scenario is independently extracted from $\mathbb{P}$\h{ and $\Delta^S$ represents the S-fold Cartesian product of $\Delta$ associated with drawing $S$ random samples. %
The number of sampled scenarios is known as the \textit{sample size} $S$. The SP associated with \probref{prob:general_ccp} is given by}
\begin{problem}[General SP]\label{prob:general_sp}
\begin{subequations}
\label{eq:scenario_program}
\begin{align}
    \min_{\b{u} \in \mathbb{U}, \b{x}\in\mathbb{X}} \quad & \sum_{k = 0}^N J(\b{x}_k, \b{u}_k)\\
    \textrm{s.t.} \quad & \b{x}_0 = \b{x}_{\textrm{init}}\\
    & \b{x}_{k + 1} = f(\b{x}_k, \b{u}_k), \ k = 0, \hdots, N-1\label{eq:scenario_eq_constraints}\\
    &g(\b{x}, \bm{\delta}^{(i)}) \leq 0, \ \bm{\delta}^{(i)} \in \h{\b{\omega}}, \ i = 1 , \hdots, S,\label{eq:scenario_constraints}
\end{align}
\end{subequations}
\end{problem}
where chance constraint~\eqref{eq:opt_constraints} has been replaced by the deterministic constraints for each of the scenarios in~\eqref{eq:scenario_constraints}.

\h{To simplify notation, w}e define a \textit{decision} as $\bm{\theta} \coloneqq \begin{bmatrix}\b{x}^T, \b{u}^T\end{bmatrix}^T\in \Theta$, where $\Theta = \mathbb{X} \times \mathbb{U}$. Each scenario $\bm{\delta}^{(i)}$ \h{imposes a constraint} $\bm{\theta} \in \Theta_{\bm{\delta}^{(i)}} \h{\subset \Theta}$ \h{on the decision}. Formally, to make a decision based on the scenarios we have a \textit{decision algorithm} $\mathcal{A}$, that maps the scenarios to a decision (i.e., solving \probref{prob:general_sp}). This \h{(sub)optimal} decision $\b{\theta}^* = \mathcal{A}(\h{\b{\omega}})$ is called the \textit{scenario decision}. \hh{The following assumption needs to hold.}

\begin{assumption}\label{as:decision_algorithm}%
\hh{For any finite $S = 1, 2, \hdots$ and for any sample $\h{\b{\omega}}\in \Delta^S$, it holds that}
\begin{equation}
    \h{\mathcal{A}(\b{\omega}) \in \Theta_{\b{\delta}^{(i)}}, \ i = 1, \hdots, S.}
\end{equation}
\end{assumption}
\h{We impose a finite sample size, contrary to~\cite{campi_general_2018}, to focus on the considered application in which infinite sample sizes are not relevant.} \hh{Assumption~\ref{as:decision_algorithm} implies that the decision algorithm identifies a feasible solution with respect to the scenario constraints for all possible extractions of samples. We show in Sec.~\ref{sec:removal} how this assumption can be satisfied under unbounded uncertainty.} %
Intuitively, the more scenarios that were used to compute the scenario decision, the lower is the probability that the resulting decision will violate the constraints.
Formally, the \textit{violation probability}, $V : \Theta \rightarrow [0, 1]$, given by
\begin{equation}
    V(\bm{\theta}) \coloneqq \mathbb{P}\left[ \bm{\delta} \in \Delta : \bm{\theta} \notin \Theta_{\bm{\delta}} \right],\label{eq:violation_probability}
\end{equation}
defines the probability that a decision $\bm{\theta}$ {violates} a newly observed scenario. We also refer to this probability as the \textit{risk} of the decision. Since the decision $\bm{\theta}$ depends on the realization of the randomly sampled scenarios, the violation probability $V(\bm{\theta})$ is in itself a random variable over the product probability measure, given by $\mathbb{P}^{\textrm{S}}$ = $\mathbb{P} \times \hdots \times \mathbb{P}$ (S times). We are therefore interested in lower bounding the \textit{confidence}, which is the \textit{probability} that the scenario decision achieves a risk of at most $\epsilon$. The key variable to obtain this bound is the support subsample, defined as follows.

\begin{definition}\label{def:support}
\cite{campi_general_2018}: Given a multi-sample $\h{\b{\omega}} = \h{\{}\bm{\delta}^{(1)}, \hdots, \bm{\delta}^{(S)}\h{\}}$, a \textit{support subsample} $\hh{\mathcal{C}} = \h{\{}\bm{\delta}^{(i_1)}, \hdots, \bm{\delta}^{(i_n)}\h{\}}$ is a tuple of $n$ elements extracted from the multi-sample with $i_1 < i_2 < \hdots < i_n$, which gives the same solution as the original sample, that is,
\begin{equation}
    \mathcal{A}(\bm{\delta}^{(i_1)}, \hdots, \bm{\delta}^{(i_n)}) = \mathcal{A}(\bm{\delta}^{(1)}, \hdots, \bm{\delta}^{(S)}).
\end{equation}
\end{definition}%
The cardinality of the support subsample is referred to as the \textit{support size}, that is, $n \coloneqq |\hh{\mathcal{C}}|$. In the context of this paper, a scenario is said to be \textit{of support} if it is an element of the considered support subsample. A scenario that can be excluded from a support subsample without changing the solution is said to be \textit{not of support}. \h{For example, a sampled human trajectory $\b{\delta}^{(i)}$ can be excluded from the support if it does not change the robot's optimal behavior under the current set of human trajectory samples.}

The support size captures the number of scenarios necessary to hold the solution of the SP in place and is strongly correlated with its risk. This correlation can be used to derive a probabilistic guarantee on the solution of the SP using only the support and sample size. Denoting the \textit{confidence} as $1 - \beta$, Theorem 1 in \cite{campi_general_2018} provides the following bound
\begin{equation}%
   \mathbb{P}^{\textrm{S}}[V(\b{\theta}^*) > \epsilon(n)] \leq \sum_{n = 0}^{S - 1} {S \choose n} \left[1 - \epsilon(n)\right]^{S - n} = \beta. \label{eq:nonconvex_relation}
\end{equation}
That is, the probability that the scenario decision $\bm{\theta}^*$ exceeds the acceptable risk $\epsilon$, is upper bounded by $\beta$. 
The function $\epsilon(n) : \{0, \hdots, S\}\rightarrow [0, 1]$ is designed subject to (8) and $\epsilon(S) = 1$, \h{which divides the risk over the range of the support from $0$ to $S$.} \hh{The following mapping devides the risk evenly over all support values}
\begin{equation}
\epsilon(n) =
\begin{cases}
    1, & n > \bar{n},\\
    \h{1 - \left(\frac{\beta}{\hh{S}{S \choose n}}\right)^{\frac{1}{S-n}}}, \quad &n \leq \bar{n}.
\end{cases}\label{eq:epsilon_bounded}
\end{equation}
Notice that the violation probability increases with the support size. The more scenarios that are necessary to support a decision, the higher risk that decision is.

\h{Two limitations prevent the NSO framework from being used in online control. Firstly, t}o find a support subsample one typically has to solve $S$ optimization problems after the original problem, each time removing one of the scenarios. The set of scenarios for which the solution changes form a support subsample. This is known as the \textit{greedy} algorithm. \h{In closed loop control, solving the problem $S$ times is computationally intractable. We develop a more suitable algorithm in Sec.~\ref{sec:algorithms}.} \hh{\h{Secondly, Assumption~\ref{as:decision_algorithm} is not satisfied for distributions with unbounded support, such as Gaussian distributions, as outliers can make the problem infeasible. We consider \textit{scenario removal}, similar to the convex case (e.g.,~\cite{campi_sampling-and-discarding_2011},~\cite{garatti_risk_2019}) in Sec.~\ref{sec:removal}. Scenario removal ensures feasibility and reduces the variance of the planned trajectory over consecutive iterations.}}

\section{COMPUTING A SAFE SAMPLE SIZE}\label{sec:algorithms}
In the following, we propose an algorithm for estimating the support of an SP during optimization. We then show how this online support estimation can be \hh{used to determine if the solution of the scenario program is certifiably safe.} In the main application of this paper, our results determine for how many sampled road-user trajectories safety must be guaranteed to attain a specified risk. We establish this relation by identifying the sampled trajectories that affect the motion plan.

\subsection{\hh{Computing the Sample Size Before Optimization}}\label{sec:compute_sample_size}
\hh{First, let us consider \eqref{eq:nonconvex_relation} and \eqref{eq:epsilon_bounded} in a different perspective where the sample size $S$ is computed before optimization for some support $\bar{n}$ that we expect not to exceed. We can formulate the following result which will be useful for online applicability of the proposed method.}
\hh{\begin{theorem}\label{theory:safety-guarantee}
Suppose that Assumption \ref{as:decision_algorithm} holds true, and set a desired $\epsilon \in [0, 1]$, $\beta \in (0, 1)$. Given a support limit $\bar{n}$ and a sample size $S$ computed from \eqref{eq:nonconvex_relation} such that $\epsilon(\bar{n}) \leq \epsilon$, with $\epsilon(n)$ as in \eqref{eq:epsilon_bounded}. Then the solution of Problem 3 computed by decision algorithm $\mathcal{A}$ is feasible for Problem 2 for any $n \leq \bar{n}$.
\end{theorem}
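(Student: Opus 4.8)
The plan is to read the statement as a ``pre-computation'' version of Theorem~1 of~\cite{campi_general_2018}: rather than observing the realized support size after optimization and only then evaluating the risk bound $\epsilon(n)$, we fix a target support limit $\bar n$, choose $S$ so that $\epsilon(\bar n)\le\epsilon$, and argue the guarantee persists as long as the realized support does not exceed $\bar n$. Throughout, ``feasible for \probref{prob:general_ccp}'' must be understood in the confidence sense, namely that the scenario decision $\b{\theta}^*=\mathcal{A}(\b{\omega})$ satisfies chance constraint~\eqref{eq:opt_constraints}, i.e. $V(\b{\theta}^*)\le\epsilon$ with $V$ as in~\eqref{eq:violation_probability}, with probability at least $1-\beta$ over the random multi-sample. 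First I would invoke Assumption~\ref{as:decision_algorithm} so that $\mathcal{A}$ qualifies as a decision algorithm and~\eqref{eq:nonconvex_relation} applies. Then I would check that the piecewise map~\eqref{eq:epsilon_bounded} is an admissible risk allocation: $\epsilon(S)=1$ is immediate because $\bar n<S$ places $S$ in the first branch, and substituting~\eqref{eq:epsilon_bounded} into the summation of~\eqref{eq:nonconvex_relation} makes every term with $n>\bar n$ vanish (there $1-\epsilon(n)=0$ and $S-n\ge1$), while each surviving term equals ${S\choose n}\cdot\frac{\beta}{S\,{S\choose n}}=\beta/S$, so the $\bar n+1$ surviving terms sum to $(\bar n+1)\beta/S\le\beta$. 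Hence~\eqref{eq:nonconvex_relation} yields $\mathbb{P}^{\textrm{S}}[V(\b{\theta}^*)>\epsilon(n^*)]\le\beta$, where $n^*$ denotes the realized support size.

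The second step is to bridge from the data-dependent threshold $\epsilon(n^*)$ to the fixed target $\epsilon$. Here I would use that $\epsilon(\cdot)$ in~\eqref{eq:epsilon_bounded} is non-decreasing in $n$ (as noted immediately after its definition), so that $n^*\le\bar n$ gives $\epsilon(n^*)\le\epsilon(\bar n)\le\epsilon$, the last inequality being the hypothesis of the theorem. Consequently, on the event $\{n^*\le\bar n\}$ one has the set inclusion $\{V(\b{\theta}^*)>\epsilon\}\subseteq\{V(\b{\theta}^*)>\epsilon(n^*)\}$, whence
\[
\mathbb{P}^{\textrm{S}}\!\left[\{V(\b{\theta}^*)>\epsilon\}\cap\{n^*\le\bar n\}\right]\le\mathbb{P}^{\textrm{S}}[V(\b{\theta}^*)>\epsilon(n^*)]\le\beta.
\]
In words, with confidence at least $1-\beta$, whenever the realized support stays within the budget $\bar n$ the scenario decision meets the risk specification $V(\b{\theta}^*)\le\epsilon$ and is therefore feasible for \probref{prob:general_ccp}. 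Feasibility with respect to the scenario constraints of \probref{prob:general_sp} is automatic from Assumption~\ref{as:decision_algorithm}, so nothing further is required on that side.

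The hard part will be the monotonicity of the allocation~\eqref{eq:epsilon_bounded} in $n$: although it is asserted in the text, the expression couples a changing base $\frac{\beta}{S\,{S\choose n}}$ with a changing exponent $\frac{1}{S-n}$, and ${S\choose n}$ is itself non-monotone in $n$, so a rigorous argument requires showing that $\left(\frac{\beta}{S\,{S\choose n}}\right)^{1/(S-n)}$ is non-increasing on $\{0,\dots,\bar n\}$ (for instance by taking logarithms and signing the discrete forward difference, or by citing the corresponding monotonicity established in~\cite{campi_general_2018}); this is the only place where $\bar n$ being the maximizer of $\epsilon(\cdot)$ over $\{0,\dots,\bar n\}$ is genuinely used. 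A secondary, essentially bookkeeping, point is to phrase the conclusion as a statement conditioned on $\{n^*\le\bar n\}$ rather than an unconditional one, since the realized support is itself random; the event-inclusion above is exactly what lets me avoid reasoning about the distribution of $n^*$.
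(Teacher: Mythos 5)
Your proof is correct and takes essentially the same route as the paper's: invoke \cite[Theorem 1]{campi_general_2018} under Assumption~\ref{as:decision_algorithm} to get \eqref{eq:nonconvex_relation}, then use monotonicity of $\epsilon(\cdot)$ in \eqref{eq:epsilon_bounded} together with $\epsilon(\bar{n})\leq\epsilon$ to conclude $\epsilon(n)\leq\epsilon$ for all $n\leq\bar{n}$. Your additional details---the explicit check that the allocation \eqref{eq:epsilon_bounded} makes the surviving terms of the sum in \eqref{eq:nonconvex_relation} equal $\beta/S$ each (totaling $(\bar{n}+1)\beta/S\leq\beta$), the careful event-inclusion on $\{n^*\leq\bar{n}\}$, and your flag that the paper asserts monotonicity of \eqref{eq:epsilon_bounded} without proof (the coupled base $\beta/(S\binom{S}{n})$ and exponent $1/(S-n)$ do require a short signing argument)---are sound elaborations of steps the paper either delegates to the citation or states without argument.
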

\begin{proof}
    Under Assumption \ref{as:decision_algorithm} and for $\epsilon(n)$ as in \eqref{eq:epsilon_bounded}, \cite[Theorem 1]{campi_general_2018} ensures that~\eqref{eq:nonconvex_relation} holds. Since $\epsilon(n)$ in~\eqref{eq:epsilon_bounded} is monotonically increasing in $n$ (i.e., $\epsilon(n + 1) > \epsilon(n)$, $\forall n < S$) and given that the computed $S$ ensures that $\epsilon(\bar{n}) \leq \epsilon$, we have that $\epsilon(n) \leq \epsilon, \forall n \leq \bar{n}$. 
\end{proof}}

\hh{This shows that, for a precomputed $S$, the solution to the SP in Problem 3 can be used to solve the CCP in Problem 2, as long as $n \leq \bar{n}$.}

\subsection{Support of Iterations} \label{sec:optimization}%
\hh{We now consider how to compute the support $n$ for a nonconvex optimization problem.} The support of a convex SP can easily be computed as, under convexity, support constraints are satisfied with equality, i.e., support constraints are active~\cite{calafiore_scenario_2006, garatti_risk_2019}. This property is lost in the nonconvex case. We aim here to reestablish this property by considering, instead of the support of the decision algorithm as a whole, the support of its iterations. With \textit{iteration} we refer to the procedure that is repeated to solve the nonconvex optimization, such that the decision algorithm can be described by a repeated sequence of iterations $\mathcal{A}^l \h{: \Theta \times \Delta^S \to \Theta,} \ \h{l \in 0, \hdots, L}$ as
\begin{equation}
    \h{\mathcal{A} = \mathcal{A}^L(\hdots \mathcal{A}^1(\mathcal{A}^0(\b{\theta}^0, \b{\omega}),\b{\omega}) \hdots, \b{\omega})}
    .\label{eq:separate_algorithm}
\end{equation}
We define the support of an iteration as the set of scenarios that changes its outcome when \h{excluded}. To connect the support of the decision algorithm with that of its iterations, consider the following lemma.

\begin{lemma}\label{lemma:separation}
Consider a decision algorithm $\mathcal{A}$, separated according to \eqref{eq:separate_algorithm}. Its support $\hh{\mathcal{C}}$ satisfies
\begin{equation}
    \hh{\mathcal{C}} \subseteq  \bm{\omega} \ \textrm{\textbackslash} \ \bm{\omega}_{\textrm{ns}}, \quad \bm{\omega}_{\textrm{ns}} = \bigcap_{l = 0}^L  \bm{\omega}^l_{\textrm{ns}},\label{eq:support_property}
\end{equation}
where 
\begin{equation}
    \bm{\omega}_{\textrm{ns}}^l = \left\{\bm{\delta}^{(i)} \in \bm{\omega} \ | \ \mathcal{A}^l(\bm{\theta}^l, \bm{\omega}) = \mathcal{A}^l(\bm{\theta}^l, \bm{\omega} \textrm{\textbackslash} \bm{\delta}^{(i)}) \right\}, \label{eq:subalgorithm_property}
\end{equation}
is the set of scenarios not of support in iteration $l$.
\end{lemma}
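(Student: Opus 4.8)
The plan is to exploit the sequential structure \eqref{eq:separate_algorithm} and argue that a scenario which never influences any single iteration cannot influence the final decision. I would run the algorithm twice from the same seed $\boldsymbol{\theta}^0$: once on the full sample $\boldsymbol{\omega}$, producing the iterates $\boldsymbol{\theta}^{l+1} = \mathcal{A}^l(\boldsymbol{\theta}^l, \boldsymbol{\omega})$ with $\boldsymbol{\theta}^{L+1} = \mathcal{A}(\boldsymbol{\omega})$, and once on the reduced sample $\boldsymbol{\omega}' = \boldsymbol{\omega} \setminus \boldsymbol{\omega}_{\textrm{ns}}$, producing $\hat{\boldsymbol{\theta}}^{l+1} = \mathcal{A}^l(\hat{\boldsymbol{\theta}}^l, \boldsymbol{\omega}')$ with $\hat{\boldsymbol{\theta}}^0 = \boldsymbol{\theta}^0$. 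The target is $\hat{\boldsymbol{\theta}}^{L+1} = \boldsymbol{\theta}^{L+1}$: discarding all of $\boldsymbol{\omega}_{\textrm{ns}}$ leaves the solution unchanged, so $\boldsymbol{\omega}'$ is itself a support subsample in the sense of Definition~\ref{def:support}, which yields the inclusion \eqref{eq:support_property}.

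I would establish $\hat{\boldsymbol{\theta}}^l = \boldsymbol{\theta}^l$ by induction on $l$. The base case $l = 0$ holds by the shared seed. For the step, assuming $\hat{\boldsymbol{\theta}}^l = \boldsymbol{\theta}^l$ gives $\hat{\boldsymbol{\theta}}^{l+1} = \mathcal{A}^l(\boldsymbol{\theta}^l, \boldsymbol{\omega}')$, so it remains to show $\mathcal{A}^l(\boldsymbol{\theta}^l, \boldsymbol{\omega}') = \mathcal{A}^l(\boldsymbol{\theta}^l, \boldsymbol{\omega})$. The scenarios deleted in passing from $\boldsymbol{\omega}$ to $\boldsymbol{\omega}'$ all lie in $\boldsymbol{\omega}_{\textrm{ns}} = \bigcap_{l} \boldsymbol{\omega}^l_{\textrm{ns}} \subseteq \boldsymbol{\omega}^l_{\textrm{ns}}$, i.e. they are exactly the scenarios declared not of support for iteration $l$ \emph{at the point} $\boldsymbol{\theta}^l$ by \eqref{eq:subalgorithm_property}. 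It is worth noting that the inductive hypothesis is precisely what makes \eqref{eq:subalgorithm_property} applicable, since that definition is phrased at the full-sample iterate $\boldsymbol{\theta}^l$ and not at $\hat{\boldsymbol{\theta}}^l$.

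The main obstacle is that \eqref{eq:subalgorithm_property} only certifies invariance of iteration $l$ under removal of \emph{one} scenario at a time, whereas $\boldsymbol{\omega}'$ removes all of $\boldsymbol{\omega}_{\textrm{ns}}$ simultaneously, and for a generic map one-at-a-time invariance need not imply joint invariance. I would close this gap at the level of a single iteration, where the required statement is that deleting a collection of individually non-supporting constraints does not move the minimizer. This is where the convexity assumption on the iterations enters: for a convex iteration the optimizer is pinned by its active (support) constraints, the members of $\boldsymbol{\omega}^l_{\textrm{ns}}$ are exactly the inactive ones, and removing any subset of inactive constraints — in particular $\boldsymbol{\omega}_{\textrm{ns}}$ — leaves $\mathcal{A}^l(\boldsymbol{\theta}^l, \cdot)$ unchanged. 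Hence $\hat{\boldsymbol{\theta}}^{l+1} = \boldsymbol{\theta}^{l+1}$ and the induction carries through to $l = L$. (If instead one reads $\mathcal{C}$ as the greedily computed support, the induction applied to a single $\boldsymbol{\delta}^{(i)} \in \boldsymbol{\omega}_{\textrm{ns}}$ already shows $\mathcal{A}(\boldsymbol{\omega} \setminus \boldsymbol{\delta}^{(i)}) = \mathcal{A}(\boldsymbol{\omega})$ without invoking convexity, so every element of $\boldsymbol{\omega}_{\textrm{ns}}$ is individually droppable.)

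Finally, from $\mathcal{A}(\boldsymbol{\omega} \setminus \boldsymbol{\omega}_{\textrm{ns}}) = \mathcal{A}(\boldsymbol{\omega})$ I would conclude that $\boldsymbol{\omega} \setminus \boldsymbol{\omega}_{\textrm{ns}}$ is a support subsample, and therefore any minimal support subsample $\mathcal{C}$ obtained by pruning it satisfies $\mathcal{C} \subseteq \boldsymbol{\omega} \setminus \boldsymbol{\omega}_{\textrm{ns}}$, giving \eqref{eq:support_property}; equivalently, by contraposition, no scenario in $\boldsymbol{\omega}_{\textrm{ns}}$ is of support. I would finish by checking the indexing conventions of \eqref{eq:separate_algorithm} (whether the seed step is counted as $\mathcal{A}^0$) and by confirming that $\boldsymbol{\omega}_{\textrm{ns}} = \bigcap_{l=0}^{L} \boldsymbol{\omega}^l_{\textrm{ns}}$ is used only through its containment $\boldsymbol{\omega}_{\textrm{ns}} \subseteq \boldsymbol{\omega}^l_{\textrm{ns}}$ at each step, so that the argument is valid iteration-by-iteration.
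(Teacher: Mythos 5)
Your proposal is correct, and its core mechanism---the induction along the iterates of \eqref{eq:separate_algorithm} from a shared seed $\bm{\theta}^0$---is the same one the paper uses, but your main line proves a genuinely stronger statement than the paper attempts. The paper's proof is exactly your parenthetical fallback: it removes a single $\bm{\delta}^{(i)} \in \bm{\omega}_{\textrm{ns}}$ at a time, observes that since $\bm{\delta}^{(i)} \in \bm{\omega}^l_{\textrm{ns}}$ for every $l$ the iterates are unchanged, concludes $\mathcal{A}(\bm{\theta}^0, \bm{\omega} \setminus \bm{\delta}^{(i)}) = \bm{\theta}^*$, declares every element of $\bm{\omega}_{\textrm{ns}}$ ``not of support'' by Definition~\ref{def:support}, and passes to the complement; it never addresses simultaneous removal and never invokes convexity. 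Your main argument instead establishes that $\bm{\omega} \setminus \bm{\omega}_{\textrm{ns}}$ is itself a support subsample in the strict sense of Definition~\ref{def:support}---which is what the inclusion \eqref{eq:support_property} literally requires---and you correctly identify that one-at-a-time invariance does not give joint invariance for a generic map; you close that gap by importing Assumption~\ref{as:convex_iterations}, which the lemma does not assume (convexity only enters later, for Theorem~\ref{theory:support_estimate}). What each route buys: the paper's version is assumption-free and suffices for how the lemma is actually deployed, since in Theorem~\ref{theory:support_estimate} the complement of $\bm{\omega}^l_{\textrm{ns}}$ is replaced by the active set under convexity anyway, where joint removability is available; your version makes the set inclusion rigorous under the strict reading of the definition, whereas the paper's leap from ``each scenario individually droppable'' to the inclusion is precisely the one-vs-many gap you flag. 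One caveat in your convexity step: $\bm{\omega}^l_{\textrm{ns}}$ equals the inactive constraints only absent degeneracy---two identical scenarios that are both active and binding are each individually removable, hence both land in $\bm{\omega}_{\textrm{ns}}$, yet removing both changes the solution (in that tie case even the stated inclusion fails, since any support subsample must retain one duplicate). This is measure-zero for continuous distributions, and the paper's own argument is equally tie-sensitive, so it is a shared non-degeneracy caveat rather than a defect of your proof relative to the paper's.
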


\begin{proof}
Scenarios in $\bm{\omega}_{\textrm{ns}}$ can be excluded for all $l$ without changing the solution, that is, by \eqref{eq:support_property} and \eqref{eq:subalgorithm_property},
\begin{equation}
\mathcal{A}(\b{\theta}^0, \bm{\omega} \textrm{\textbackslash} \bm{\delta}^{(i)}) = \mathcal{A}(\b{\theta}^0, \bm{\omega}) = \b{\theta}^*,
\end{equation}
for all $\bm{\delta}^{(i)}\in\bm{\omega}_{\textrm{ns}}$. Therefore, by Definition~\ref{def:support} all scenarios in $\bm{\omega}_{\textrm{ns}}$ are \textit{not} of support for $\mathcal{A}$. The support of $\mathcal{A}$ is therefore in the complement of this set with respect to the set $\bm{\omega}$ and the result follows.
\end{proof}
The support set obtained through Lemma \ref{lemma:separation} is an overestimation. It is possible that a scenario changes the solution of an intermediate iteration without changing the final solution.

\subsection{Solving the SP Through Convex Iterations}\label{sec:convex_iterations}
We now consider the general SP in \probref{prob:general_sp}. For this problem, a local optimum can be computed by iteratively linearizing the problem and solving a convex optimization. This makes each iteration a convex scenario optimization\footnote{\hh{For example, in Sequential Quadratic Programming (SQP), each iteration $\mathcal{A}^l$ refers to the inner QPs.}} for which the support constraints are active~\cite{campi_exact_2008}. \hh{Formally we pose the following two assumptions.}
\begin{assumption}\label{as:convex_iterations}
    Each iteration $\mathcal{A}^l$ of decision algorithm $\mathcal{A}$ is convex.
\end{assumption}
\begin{assumption}\label{as:inequality_feasibility}
    \hh{The solution computed by each iteration $\mathcal{A}^l$ is feasible with respect to its inequality constraints.}
\end{assumption}
\hh{The constraints of each iteration are usually linearized scenario constraints (possibly constructed with linearized robot dynamics). It is sufficient to require that these constraints are satisfied since the active linearized scenario constraints still identify the scenarios that affect the solution.} %
\hh{\begin{theorem}\label{theory:support_estimate}
    If iterations $\mathcal{A}^l$ of the decision algorithm satisfy Assumptions \ref{as:convex_iterations} and \ref{as:inequality_feasibility}, then the support of Problem \ref{prob:general_sp} satisfies
    \begin{equation}
    \hh{\mathcal{C}} \subseteq\bigcup_{j = 0}^L \b{\omega}_{\textrm{active}}^j = \hat{\hh{\mathcal{C}}}, \quad \hat{n} = |\hat{\hh{\mathcal{C}}}|,\label{eq:support}
    \end{equation}
    where, with $g^l$ the inequality constraints of $\mathcal{A}^l$,
\begin{equation}
    \b{\omega}_{\textrm{active}}^l = \{\b{\delta}^{(i)} \in \b{\omega} \ | \ \exists k \ g^l(\b{x}^l_k, \b{\delta}_k^{(i)}) = 0\},\label{eq:active_constraints}
\end{equation}
denote the active constraints in iteration $l$.
\end{theorem}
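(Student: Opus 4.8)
The plan is to combine the support overestimate already supplied by Lemma~\ref{lemma:separation} with the classical fact from convex scenario optimization that support constraints must be active~\cite{campi_exact_2008}, applied separately to each iteration $\mathcal{A}^l$. Lemma~\ref{lemma:separation} gives $\hh{\mathcal{C}} \subseteq \bm{\omega} \ \textrm{\textbackslash} \ \bm{\omega}_{\textrm{ns}}$ with $\bm{\omega}_{\textrm{ns}} = \bigcap_{l=0}^L \bm{\omega}^l_{\textrm{ns}}$, so it is enough to bound this right-hand side by $\hat{\hh{\mathcal{C}}} = \bigcup_{l=0}^L \bm{\omega}^l_{\textrm{active}}$. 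The proof therefore reduces to two inclusions: a purely set-theoretic one (turning the complement of the intersection into a union), and an optimization-theoretic one (relating the per-iteration support to per-iteration active constraints).

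First I would rewrite the complement of the intersection using De Morgan's law,
\begin{equation}
    \bm{\omega} \ \textrm{\textbackslash} \ \bm{\omega}_{\textrm{ns}} = \bm{\omega} \ \textrm{\textbackslash} \ \bigcap_{l=0}^L \bm{\omega}^l_{\textrm{ns}} = \bigcup_{l=0}^L \left(\bm{\omega} \ \textrm{\textbackslash} \ \bm{\omega}^l_{\textrm{ns}}\right),
\end{equation}
so that the support of the full algorithm is contained in the union, over iterations, of the per-iteration support sets $\bm{\omega} \ \textrm{\textbackslash} \ \bm{\omega}^l_{\textrm{ns}}$. This step needs no assumptions and just reorganizes the bound from Lemma~\ref{lemma:separation}.

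The crux is the per-iteration inclusion $\bm{\omega} \ \textrm{\textbackslash} \ \bm{\omega}^l_{\textrm{ns}} \subseteq \bm{\omega}^l_{\textrm{active}}$, i.e., every scenario of support for iteration $\mathcal{A}^l$ has an active constraint. I would argue by contraposition. Fix a scenario $\b{\delta}^{(i)}$ that is \emph{not} active at the iterate $\b{x}^l$, so that $g^l(\b{x}^l_k, \b{\delta}_k^{(i)}) < 0$ for all $k$; this is well posed because Assumption~\ref{as:inequality_feasibility} guarantees the iterate is feasible for its inequality constraints. Since $\mathcal{A}^l$ is convex (Assumption~\ref{as:convex_iterations}), any minimizer is global, and strict satisfaction of $\b{\delta}^{(i)}$ means the feasible set is unchanged in a neighborhood of that minimizer when this single scenario is dropped. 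Hence the minimizer stays optimal for the reduced problem, $\mathcal{A}^l(\b{\theta}^l, \bm{\omega}) = \mathcal{A}^l(\b{\theta}^l, \bm{\omega} \ \textrm{\textbackslash} \ \b{\delta}^{(i)})$, so $\b{\delta}^{(i)} \in \bm{\omega}^l_{\textrm{ns}}$. Contraposing, every scenario in $\bm{\omega} \ \textrm{\textbackslash} \ \bm{\omega}^l_{\textrm{ns}}$ must be active, which is the desired inclusion and reproduces the convex result of~\cite{campi_exact_2008} at the level of a single iteration. Chaining the two inclusions then yields $\hh{\mathcal{C}} \subseteq \bigcup_{l=0}^L \left(\bm{\omega} \ \textrm{\textbackslash} \ \bm{\omega}^l_{\textrm{ns}}\right) \subseteq \bigcup_{l=0}^L \bm{\omega}^l_{\textrm{active}} = \hat{\hh{\mathcal{C}}}$, with $\hat{n} = |\hat{\hh{\mathcal{C}}}|$ immediate by definition.

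I expect the main obstacle to be the per-iteration convex step rather than the bookkeeping: one must justify carefully that convexity together with feasibility (Assumptions~\ref{as:convex_iterations}--\ref{as:inequality_feasibility}) truly lets a locally optimal iterate survive the removal of any inactive constraint. The neighborhood/global-optimum argument is exactly where degeneracy could otherwise break the support-equals-active correspondence, and where both assumptions earn their keep; Assumption~\ref{as:inequality_feasibility} ensures that ``active'' is meaningful at the iterate, while Assumption~\ref{as:convex_iterations} promotes local optimality to global optimality so that the unchanged local feasible set suffices to conclude invariance of the solution.
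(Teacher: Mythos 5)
Your proposal is correct and follows essentially the same route as the paper's proof: Lemma~\ref{lemma:separation} combined with De Morgan's law to aggregate over iterations, and the convex fact from~\cite{campi_exact_2008} that per-iteration support constraints must be active (which the paper cites while you spell out via the contraposition/neighborhood argument). Your explicit treatment of the inactive-constraint-removal step, including the caveat that degeneracy (non-unique minimizers or inconsistent tie-breaking) is where the support-equals-active correspondence could fail, makes the argument slightly more careful than the paper's terse version, but it is the same proof.
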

\begin{proof}
    Under Assumption \ref{as:convex_iterations}, the support constraints of iteration $l$ are in the set $\omega_{\textrm{active}}^l$ (all constraints are satisfied by Assumption \ref{as:inequality_feasibility}). Under convexity, $\omega_{\textrm{active}}^l$ is therefore the complement of the set $\omega_{\textrm{ns}}^l$. Invoking Lemma 1 and using De Morgans Law [33], we obtain \eqref{eq:support}. 
\end{proof}}
\hh{Theorem~\ref{theory:support_estimate} estimates the support through the aggregated set of active scenarios across all iterations.} Assumption~\ref{as:convex_iterations} (convexity of iterates) and \hh{Assumption}~\ref{as:inequality_feasibility} (feasibility of intermediate iterates) restrict the solvers that can be used with the proposed approach. \hh{However, widely available solvers, such as Sequential Quadratic Programming (SQP)~\cite[Chapter 18]{nocedal_numerical_2006}, satisfy these criteria.}

\h{\subsection{Illustrating Example: SP for 1-D Obstacle Avoidance}}\label{sec:illustrating_example}
In the following, we illustrate the proposed support estimation on a simplified example. We consider a \systems following Euler-discretized unicycle dynamics, for $dt = 0.2$s,
\h{\begin{equation}
    \begin{aligned}
        \b{p}_{k+1} &= \b{p}_k + v_k \begin{bmatrix}\cos{(\eta_k)} \\ \sin{(\eta_k)}\end{bmatrix}\ dt\\
        \eta_{k+1} &= \eta_k + \omega_k \ dt.
    \end{aligned}
\end{equation}}%
\h{The environment contains a single obstacle with the same $x$ position as the robot, but uncertain $y$ position,}
\h{\begin{equation}
   p_{k+1}^{y, \textrm{obs}} = p_k^{y, \textrm{obs}} + \zeta_k, \ \zeta_k \sim \mathcal{N}(-1.3, 0.07),
\end{equation}}%
\h{in which the uncertainty at time $k$ is given by $\delta_k = \zeta_k$.} \h{The robot is subject to state constraints $0\leq v_k \leq 2$ and $-2.0\leq \omega_k \leq 2.0$. The goal for the robot is to stay above the obstacle along the horizon of $N = 5$ steps, which we enforce with a horizontal collision avoidance constraint, while minimizing its deviation in the $y$-direction. The sum of robot and obstacle radius is $r = 1.0$m. We formulate the following CCP:}
\h{\begin{subequations}
\label{eq:illustrative_example}
\begin{align}
    \min_{\b{u}\in \mathbb{U}, \b{x}\in\mathbb{X}} \quad & \sum_{k=1}^N (p_k^y)^2 + \omega_k^2 + (v_k - 2.0)^2\\
  \textrm{s.t.} \quad \quad & \b{x}_{k + 1} = f(\b{x}_k, \b{u}_k), \: \forall k, \label{eq:example_equality}\\
  &\mathbb{P}\left[\bigwedge_{k=1}^N (p^y_{k} \geq \delta_k + r) \right] \geq 1 - \epsilon, \ \b{\delta} \in \Delta.
\end{align}
\end{subequations}}%
\h{The associated SP (see Fig.~\ref{fig:toy_0}) is given by}
\begin{figure}[t]
     \centering
     \begin{subfigure}[b]{0.20\textwidth}
         \centering
         \includegraphics[width=\textwidth, trim={1cm 0cm 1cm 1cm},clip]{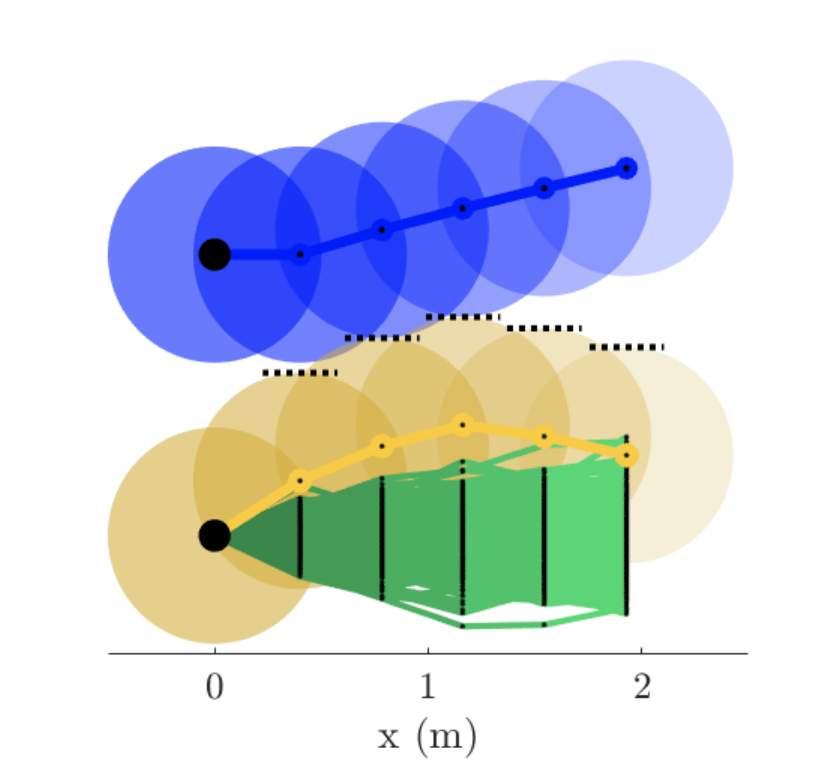}
         \caption{No Removal}
         \label{fig:toy_0}
     \end{subfigure}
     \hspace{0.005\textwidth}
     \begin{subfigure}[b]{0.20\textwidth}
         \centering
         \includegraphics[width=\textwidth, trim={1cm 0cm 1cm 1cm},clip]{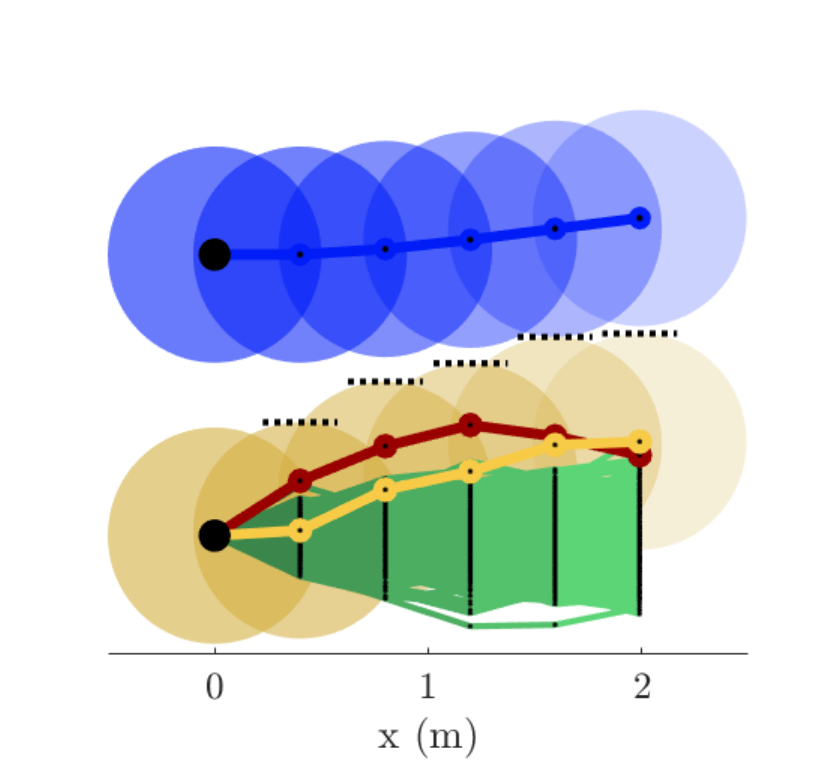}
         \caption{Removed 1}
         \label{fig:toy_8}
     \end{subfigure}
        \caption{\h{The optimized robot trajectory (blue) for the SP in Eq.~\ref{eq:illustrative_example} and the sampled obstacle trajectories (green) for $S = 400$. Samples of support (yellow) or that were removed (red) are highlighted.}}%
        \label{fig:illustrating_example}
\end{figure}
\h{\begin{subequations}
\label{eq:illustrative_example}
\begin{align}
    \min_{\b{u}\in \mathbb{U}, \b{x}\in\mathbb{X}} \quad & \sum_{k=1}^N (p_k^y)^2 + \omega_k^2 + (v_k - 2.0)^2\\
    \textrm{s.t.} \quad \quad & \b{x}_{k + 1} = f(\b{x}_k, \b{u}_k), \: \forall k,\label{eq:example_equality}\\
    &p^y_{k} \geq \delta_k^{(i)} + r, \ \forall k, \ \b{\delta}^{(i)} \in \b{\omega}, \ i = 1 , \hdots, S,
\end{align}
\end{subequations}}%
\h{which is nonconvex through the equality constraints \eqref{eq:example_equality}. We do not consider the risk $\epsilon$ achieved by \eqref{eq:illustrative_example} in this example. Instead, we compare greedy support estimation~\cite{campi_general_2018} with ours (Eq.~\ref{eq:support}) by solving $25$ realizations of SP \eqref{eq:illustrative_example} \h{using SQP} for each of the sampling sizes $S = \{100, 200, \hdots, 1000\}$. The proposed support estimate is larger than the greedy support estimate in $8/250$ experiments and smaller in $1$ experiment (i.e., it is slightly more conservative)\footnote{We note that both support estimations are not necessarily minimal.}. However, the computation times, as visualized in Fig.~\ref{fig:toy_example_runtime} indicate that the proposed method is faster and scales more favorably. For example, with $1000$ samples it requires approximately $13.2$\% of the computation time, improving further as the sample size increases.}

\hh{\subsection{The Support Limit and Associated Guarantees}\label{sec:support_limit}
We recall that, through Theorem~\ref{theory:safety-guarantee}, the risk of the planned trajectory is certified by the support of the decision algorithm with which Problem~\ref{prob:general_sp} is solved and for which Theorem~\ref{theory:support_estimate} provides an estimate. If we solve Problem~\ref{prob:general_sp} and find that $\hat{n} > \bar{n}$, then Theorem~\ref{theory:safety-guarantee} does not provide guarantees at the specified $\epsilon$. There are two reasons why this could happen. Either the CCP is infeasible for the desired $\epsilon$ (including conservatism of the scenario approach) or there may exist a safe solution under the specified $\epsilon$, but under the applied values of $\bar{n}$ and the associated $S$ the SP did not find it.}

\hh{In the first case, nothing can be done to provide a safe solution. The second case is more problematic because a feasible solution exists. Several methods can be used to deal with this case, for example, running the SP for increasing sample sizes (similarly to~\cite{calafiore_repetitive_2017}) or running several SPs in parallel (similarly to~\cite{mustafa_probabilistic_2023}). We consider these options future work and apply in this paper a constant value of $\bar{n}$ (and $S$) that is high enough that in practice Theorem~\ref{theory:safety-guarantee} holds for almost all iterations in our experiments. This approach can be conservative, as in many iterations the support may be smaller than our bound.} 

\begin{figure}[t]
    \centering
    \includegraphics[width=0.40\textwidth,trim={0.8cm, 0, 0.5cm, 0},clip]{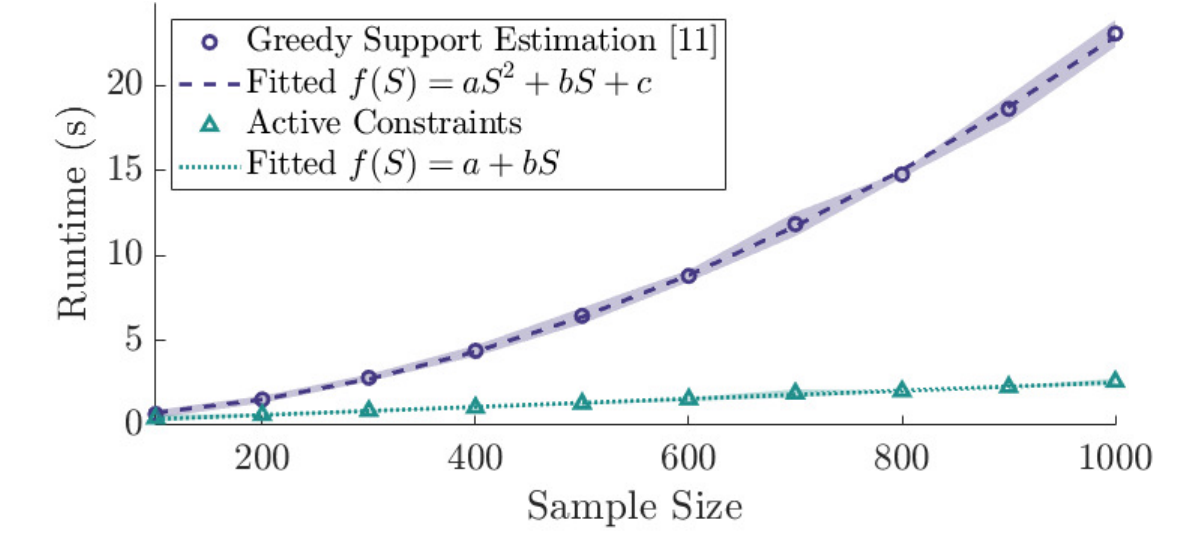}
    \caption{Mean (markers) and standard deviation (shaded region) of support estimation computation times.}
    \label{fig:toy_example_runtime}
\end{figure}%

\hh{We leverage two ideas to keep the support (and sample size) of the SP small. First, we formulate scenario constraints (see Sec.~\ref{sec:motion_planning}) such that other scenarios may become \textit{redundant}, meaning that the optimization can be described without them. Redundant samples cannot be of support and therefore reduce the complexity of the problem. For example, when sampling human trajectories, only the most extreme cases affect the motion plan. Appendix~\ref{ap:shadow} provides mathematical and empirical insights for this idea. Second, we use the support estimate from Theorem~\ref{theory:support_estimate} to evaluate during optimization if the optimization problem should continue iterating, or should be stopped to provide a suboptimal, but safe solution. The support estimate in iteration $l$ is given by}
\begin{equation}
    \hh{\mathcal{C}}^l \subseteq \bigcup_{j = 0}^l \bm{\omega}^j_{\textrm{active}} = \hat{\hh{\mathcal{C}}}^l , \quad \hat{n}^l = |\hat{\hh{\mathcal{C}}}^l|.\label{eq:support_l}
\end{equation}

\hh{\begin{theorem}\label{theory:termination}
    Suppose that that in iteration $l$ of $\mathcal{A}$, $\hat{n}^l > \bar{n}$. If, in at least one previous iteration $m < l$, all constraints of Problem~\ref{prob:general_sp} were satisfied and $\hat{n}^m \leq \bar{n}$. Then the solution $\theta^m$ is a feasible solution with support $n^m \leq \bar{n}$.
\end{theorem}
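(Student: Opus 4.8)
The plan is to split the claim into its two assertions—that $\theta^m$ is feasible for Problem~\ref{prob:general_sp} and that its support obeys $n^m \leq \bar{n}$—and to dispatch each by reusing machinery already established for the full algorithm. Feasibility is essentially definitional: the hypothesis states that at iteration $m$ every constraint of Problem~\ref{prob:general_sp} (the dynamics equalities together with all scenario inequalities $g(\b{x}^m, \b{\delta}^{(i)}) \leq 0$) is satisfied by $\theta^m$, which is precisely the statement that $\theta^m$ is a feasible scenario decision. No further work is needed there.

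The substantive step is the support bound, and the key idea I would use is to reinterpret the optimization truncated at iteration $m$ as a standalone decision algorithm $\mathcal{A}_{[0,m]} = \mathcal{A}^m(\ldots \mathcal{A}^0(\b{\theta}^0, \b{\omega}) \ldots, \b{\omega})$ whose output is exactly $\theta^m$. Each of its iterations $\mathcal{A}^0, \ldots, \mathcal{A}^m$ inherits convexity (Assumption~\ref{as:convex_iterations}) and intermediate feasibility (Assumption~\ref{as:inequality_feasibility}) from the full algorithm, and by the feasibility established above its output $\theta^m$ satisfies the scenario constraints, so Assumption~\ref{as:decision_algorithm} holds for $\mathcal{A}_{[0,m]}$ as well. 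Theorem~\ref{theory:support_estimate} then applies verbatim to $\mathcal{A}_{[0,m]}$, yielding that its true support $\mathcal{C}^m$ is contained in the aggregated active set $\hat{\mathcal{C}}^m = \bigcup_{j=0}^m \b{\omega}^j_{\textrm{active}}$ defined in~\eqref{eq:support_l}. Taking cardinalities gives $n^m = |\mathcal{C}^m| \leq |\hat{\mathcal{C}}^m| = \hat{n}^m$, and combining with the hypothesis $\hat{n}^m \leq \bar{n}$ delivers $n^m \leq \bar{n}$.

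I expect the main obstacle, and the place where the precise wording of the hypothesis must be invoked, to be the justification that Theorem~\ref{theory:support_estimate} may legitimately be applied to the truncated algorithm rather than to the fully converged one. The running estimate $\hat{n}^l$ of~\eqref{eq:support_l} naturally over-approximates the support of whatever solution the algorithm ultimately returns; interpreting it as a bound on the support of the intermediate iterate $\theta^m$ is only valid once $\theta^m$ is itself a bona fide feasible scenario decision. The condition ``all constraints of Problem~\ref{prob:general_sp} were satisfied at iteration $m$'' is exactly what licenses closing off the sequence $\mathcal{A}^0, \ldots, \mathcal{A}^m$ into a complete decision algorithm, so that Lemma~\ref{lemma:separation} and the De Morgan / active-set argument underlying Theorem~\ref{theory:support_estimate} transfer without change.

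Once that identification is in place the support bound is immediate, and—notably—neither monotonicity of $\epsilon(\cdot)$ nor any fresh probabilistic argument is required in this proof. The risk certification at level $\epsilon$ for the returned iterate $\theta^m$ is not part of this statement; it follows separately by feeding $n^m \leq \bar{n}$ into Theorem~\ref{theory:safety-guarantee}, which is precisely how the early-termination rule is intended to be used in the closed loop.
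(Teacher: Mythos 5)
Your proposal is correct and takes essentially the same route as the paper's proof: both reduce the run truncated at iteration $m$ to a decision algorithm of constant length $L = m$ (so the scenario-dependent termination rule contributes no additional support) and then apply Theorem~\ref{theory:support_estimate} to that fixed-length algorithm, giving $n^m \leq \hat{n}^m \leq \bar{n}$, with feasibility of $\b{\theta}^m$ immediate from the hypothesis. One cosmetic remark: your claim that Assumption~\ref{as:decision_algorithm} holds for the truncated algorithm is neither established by feasibility on the observed sample (the assumption quantifies over \emph{all} extractions $\b{\omega} \in \Delta^S$) nor needed here, since Theorem~\ref{theory:support_estimate} relies only on Assumptions~\ref{as:convex_iterations} and~\ref{as:inequality_feasibility}.
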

\begin{proof} 
    The solution at iteration $m$ is feasible given that all constraints of Problem~\ref{prob:general_sp} are satisfied. Terminating the optimization in principle adds to the support since it influences the outcome of the decision algorithm. However, since the decision algorithm with termination is equivalent to a decision algorithm with constant iteration length (with $L=m$), this support does not need to be included. Hence, the support is estimated by $\hat{n}^m$ and by Theorem~\ref{theory:support_estimate}, $\hat{n}^m \geq n^m$, such that $n^m \leq \bar{n}$.
\end{proof}}
\hh{Because Theorem~\ref{theory:termination} needs the iterates to be feasible before the support limit is exceeded, we cannot ensure via termination that $n \leq \bar{n}$. In practice, termination may reduce the support.} %

\subsection{Summary}
We modeled the planning problem with a constraint on the joint probability of collision (\probref{prob:ccp}) as a specific case of \probref{prob:general_ccp}. This more general CCP can be solved via the SP in \probref{prob:general_sp} as long as the sample size \h{(e.g., number of sampled road-user trajectories)} is sufficiently large. \hh{Theorem~\ref{theory:safety-guarantee} showed that for a constant sample size, the motion plan can be certified by the support \h{(i.e., number of samples that influence the plan)}.} \hh{When iterations of this algorithm are convex, Theorem~\ref{theory:support_estimate} provides a cheap support estimate that is computed during optimization.} \hh{This support estimate, through Theorem~\ref{theory:termination}, could be used to reduce the support of the optimization.} 
Through this technique \probref{prob:ccp} can be solved efficiently.

\section{Scenario Removal}\label{sec:removal} 
\hh{So far we have shown under what conditions the solution to the SP in Eq.~\ref{eq:scenario_program} solves the CCP in Eq.~\ref{eq:chance_constrained_problem} if Assumption~\ref{as:decision_algorithm} is satisfied. In this section, we consider how Assumption~\ref{as:decision_algorithm} can be satisfied, particularly for distributions with unbounded support. The SP in Eq.~\ref{eq:scenario_program} does not satisfy this assumption outright. For example, in the motion planning application there is always a non-zero probability that a sampled pedestrian overlaps with the robot, making the SP infeasible even when the CCP admits a feasible solution. We introduce in this section a modified SP with \textit{scenario removal}~\cite{garatti_risk_2019} to satisfy Assumption~\ref{as:decision_algorithm} under unbounded support. Scenario removal allows the SP to remove limiting samples (potential outliers) after sampling, at the cost of drawing more samples initially.}

\hh{Scenario removal additionally reduces the variance of the solution to the SP. This makes the solution more consistent over consecutive iterations. In a motion planning context, scenario removal makes it possible to ignore extremely unlikely road-user trajectories that lead to too conservative motion plans under the specified risk.}

\subsection{Scenario Removal SP}
\hh{Consider a \textit{removal algorithm} $\mathcal{R}$} that identifies scenarios to be removed from the SP in \probref{prob:general_sp} with the goal to reduce the cost \hh{or to make the problem feasible}. That is, a function $\mathcal{R} : \Theta \times \Delta^S \to \Delta^r$,
\begin{equation}
    \mathcal{R}(\bm{\theta}, \bm{\omega}) = (\bm{\delta}^{(i_1)}, \hdots, \bm{\delta}^{(i_r)}).\label{eq:removal_function}
\end{equation}
The intent is that constraints indexed by the removal function are \textit{removed} from the scenario constraints in \probref{prob:general_sp} when solving the SP (Eq.~\ref{eq:scenario_program}). \hh{With regards to Assumption~1 we pose the following assumption on $\mathcal{R}$.}

\begin{assumption}\label{as:removal_feasibility}
    \hh{The removal algorithm removes any scenarios that prevent a feasible solution. Formally, Problem \ref{prob:general_sp}, where \eqref{eq:scenario_constraints} excludes removed scenarios as
\begin{equation}
    g(\b{x}, \b{\delta}^{(i)})\leq 0, \b{\delta}^{(i)} \in \b{\omega} \backslash\mathcal{R}(\b{\theta}, \b{\omega}),
\end{equation}
is assumed to be feasible.}
\end{assumption}
\hh{We now show how the SP in Problem~\ref{prob:general_sp} can be modified under scenario removal.}
\hh{\begin{theorem}\label{theory:removal}
The SP in Problem~\ref{prob:general_sp}, including a scenario removal algorithm $\mathcal{R}$ satisfying Assumption~\ref{as:removal_feasibility} is a decision algorithm that satisfies Assumption \ref{as:decision_algorithm}.
\end{theorem}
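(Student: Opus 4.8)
The plan is to treat the SP-with-removal as a single composite decision algorithm $\mathcal{A}_{\mathcal{R}}$ that, given a sample $\boldsymbol{\omega} = \{\boldsymbol{\delta}^{(1)}, \ldots, \boldsymbol{\delta}^{(S)}\} \in \Delta^S$, first invokes $\mathcal{R}$ to discard the scenarios indexed in \eqref{eq:removal_function} and then returns the scenario decision of Problem~\ref{prob:general_sp} with the constraints \eqref{eq:scenario_constraints} restricted to the surviving scenarios $\boldsymbol{\omega} \backslash \mathcal{R}(\boldsymbol{\theta}, \boldsymbol{\omega})$. The claim then splits into two parts matching the two clauses of the statement: that $\mathcal{A}_{\mathcal{R}}$ is a genuine decision algorithm (a well-defined map $\Delta^S \to \Theta$), and that it satisfies Assumption~\ref{as:decision_algorithm}. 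Since Assumption~\ref{as:decision_algorithm} is purely a feasibility requirement — it asks only that the algorithm never fail to return a decision lying in every imposed constraint set $\Theta_{\boldsymbol{\delta}^{(i)}}$ — the substantive content reduces to showing that the reduced problem always admits such a decision.

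The steps I would carry out, in order, are: (i) establish well-definedness, addressing the fact that $\mathcal{R}(\boldsymbol{\theta}, \boldsymbol{\omega})$ formally depends on the decision $\boldsymbol{\theta}$; this is resolved by fixing the first argument of $\mathcal{R}$ to the decision obtained on the full sample (or, in the iterative implementation, to the current iterate), so that the removal set is fixed before the final solve and no circularity remains; (ii) invoke Assumption~\ref{as:removal_feasibility} directly, which guarantees that Problem~\ref{prob:general_sp} with $g(\boldsymbol{x}, \boldsymbol{\delta}^{(i)}) \leq 0$ imposed only for $\boldsymbol{\delta}^{(i)} \in \boldsymbol{\omega} \backslash \mathcal{R}(\boldsymbol{\theta}, \boldsymbol{\omega})$ is feasible; (iii) conclude that the scenario decision returned on the surviving scenarios satisfies all of those constraints, i.e.\ $\mathcal{A}_{\mathcal{R}}(\boldsymbol{\omega}) \in \Theta_{\boldsymbol{\delta}^{(i)}}$ for every surviving $\boldsymbol{\delta}^{(i)}$, which is exactly the feasibility that Assumption~\ref{as:decision_algorithm} demands of the constraints $\mathcal{A}_{\mathcal{R}}$ actually imposes. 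The closing bookkeeping would check that this conclusion is independent of the particular (possibly non-unique) minimiser selected by the solver.

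The delicate point, and what I expect to be the main obstacle, is the literal index range ``$i = 1, \ldots, S$'' in Assumption~\ref{as:decision_algorithm}: the discarded scenarios are in general \emph{violated} by the returned decision and hence cannot lie in their constraint sets, so a naive reading would make the assumption unattainable under removal. The resolution I would argue is that removal \emph{redefines} the constraint family that $\mathcal{A}_{\mathcal{R}}$ imposes — the algorithm constrains only the $S - r$ surviving scenarios, and Assumption~\ref{as:decision_algorithm} is to be read with respect to those. The $r$ discarded scenarios are not silently dropped but are instead charged to the support, and therefore to the risk, of $\mathcal{A}_{\mathcal{R}}$ when the guarantee \eqref{eq:nonconvex_relation} is subsequently applied, exactly as in the sampling-and-discarding analysis of~\cite{garatti_risk_2019}. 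With this reading in place, the feasibility content of Assumption~\ref{as:decision_algorithm} follows immediately from Assumption~\ref{as:removal_feasibility}, and the proof is short; the real work is the conceptual step of separating the feasibility role of Assumption~\ref{as:decision_algorithm} from the risk-accounting role played by the removed scenarios.
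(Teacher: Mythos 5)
You have correctly located the crux---the literal quantifier $i = 1, \hdots, S$ in Assumption~\ref{as:decision_algorithm}, which the discarded scenarios appear to violate---but your resolution is a genuine gap: you propose to \emph{re-read} the assumption as quantifying only over the $S-r$ surviving scenarios, which amends the statement rather than proving it. The theorem claims that the SP with removal satisfies Assumption~\ref{as:decision_algorithm} as written, and this matters because everything downstream---the bound \eqref{eq:nonconvex_relation} from Theorem~1 of \cite{campi_general_2018}, and hence Theorem~\ref{theory:safety-guarantee}---is invoked for decision algorithms satisfying that assumption verbatim, with the violation probability \eqref{eq:violation_probability} measured against the same constraint sets $\Theta_{\boldsymbol{\delta}}$ that the assumption references. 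Under your weakened reading you would have to re-derive those guarantees for a ``partially feasible'' algorithm, which is precisely the work the theorem is meant to avoid; the informal appeal to the sampling-and-discarding accounting of \cite{garatti_risk_2019} does not close this, because the nonconvex framework of \cite{campi_general_2018} has no separate discarding theorem to cite---the paper has to manufacture one, and does so inside this very proof.

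The missing idea is the lifting construction. The paper's proof extends the decision to $\boldsymbol{\theta} = (\boldsymbol{x}, \boldsymbol{u}, \boldsymbol{\xi})$ with Boolean variables $\boldsymbol{\xi} \in \{0,1\}^S$ and replaces each scenario constraint by the big-$M$ form $g(\boldsymbol{x}, \boldsymbol{\delta}^{(i)}) \leq \xi^{(i)} M$ (Problem~\ref{prob:removal_sp}), with $\xi^{(i)} = 1$ if and only if scenario $i$ is removed by $\mathcal{R}$. In the lifted space each constraint set $\Theta_{\boldsymbol{\delta}^{(i)}}$ is enlarged so that membership holds in one of two ways: the original inequality is met, or $\xi^{(i)} = 1$. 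Assumption~\ref{as:removal_feasibility} supplies feasibility for the scenarios with $\xi^{(i)} = 0$, and thus $\mathcal{A}(\boldsymbol{\omega}) \in \Theta_{\boldsymbol{\delta}^{(i)}}$ holds for \emph{all} $i = 1, \hdots, S$---Assumption~\ref{as:decision_algorithm} is satisfied literally, with no reinterpretation of its index range. The lift also performs, formally, the risk bookkeeping you only gesture at: removed scenarios pin the $\boldsymbol{\xi}$-components of the solution in place, which is exactly how Theorem~\ref{theory:removal_support} proves $\mathcal{R}(\boldsymbol{\theta}, \boldsymbol{\omega}) \subseteq \mathcal{C}$, so that removal is charged to the support within the unmodified framework. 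Your steps (ii)--(iii) (feasibility of the reduced problem via Assumption~\ref{as:removal_feasibility}) and your well-definedness remark (the removal set is fixed before or during optimization, so no mixed-integer problem is actually solved) do match the paper, but without the Boolean lift the central claim of the theorem is asserted by redefinition rather than established.
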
}
\begin{proof}
Inspired by \cite[Chapter 5]{garatti_risk_2019}, we may equivalently represent the SP with removal by extending the decision variables with $S$  Boolean variables $\b{\xi} = \begin{bmatrix} \xi^{(1)}, \hdots, \xi^{(S)}\end{bmatrix}$ , $\b{\xi} \in \{0, 1\}^S$, such that ${\b{\theta}} = (\b{x},\b{u}, \b{\xi})$ and where each boolean variable is $1$ if and only if the scenario it indexes is contained in $\mathcal{R}$. We then reformulate the original SP in Problem~\ref{prob:general_sp} as 
    \begin{problem}[SP with Removal]\label{prob:removal_sp}
        \begin{subequations}
            \begin{align}
            \min_{(\b{{x}}, \b{{y}}, \b{{\xi}})\in\Theta} \quad & \sum_{k = 0}^N J(\b{{x}}_k, \b{{u}}_k)\\
            \textrm{s.t.} \qquad & \b{{x}}_0 = \b{{x}}_{\textrm{init}}\\
            & \b{{x}}_{k + 1} = f(\b{{x}}_k, \b{{u}}_k, \b{{\delta}}_k^{(i)})\\
            &g(\b{{x}}, \b{{\delta}}^{(i)}) \leq {\xi}^{(i)} M, \ \b{{\delta}}^{(i)} \in \b{{\omega}}, \forall i,
            \end{align}
        \end{subequations}
    \end{problem}
    The key difference is that the constraint formulation $g(\b{{x}}, \b{{\delta}}^{(i)})\leq 0$ is modified to $g(\b{{x}}, \b{{\delta}}^{(i)}) \leq {\xi^{(i)}}M$, where a big M formulation ($M \gg 0$) removes the scenario constraint when $\xi^{(i)} = 1$.
    
    \hh{To show that Problem~\ref{prob:removal_sp} satisfies Assumption \ref{as:decision_algorithm} (i.e., it admits a feasible solution for all possible sample extractions $\b{\omega}$), we note that under the modified formulation $\b{\theta} \in \Theta_{\delta^{(i)}}$ can hold in two ways:} the original scenario constraint is satisfied (i.e., $g(\b{{x}}, \b{{\delta}}^{(i)})\leq 0$) or the scenario is removed (i.e., $\xi^{(i)} = 1$). \hh{Under Assumption \ref{as:removal_feasibility}, a feasible solution to Problem~\ref{prob:removal_sp} exists for all scenarios where $\xi^{(i)} = 0$, while scenarios where $\xi^{(i)} = 1$ are removed. Therefore, for any sample extraction $\b{\omega}$, $\mathcal{A}(\b{\omega}) \in \Theta_{\delta^{(i)}}, \forall i = 1, \hdots, S$.}
\end{proof}
Because the removal of constraints may increase the risk\h{\footnote{\h{Note that removal of constraints is different than excluding constraints that are not of support. Removing constraints changes the solution, adds support and increases the risk.}}}, the sample size of the SP must be increased to guarantee that the risk remains the same while some scenarios are removed after sampling. We derive the following result.
\hh{\begin{theorem}\label{theory:removal_support}
The support of Problem~\ref{prob:removal_sp} includes all removed scenarios, that is,
\begin{equation}
    \mathcal{R}(\b{\theta}, \b{\omega}) \subseteq \hh{\mathcal{C}}.
\end{equation}
\end{theorem}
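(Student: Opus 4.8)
The plan is to prove the inclusion $\mathcal{R}(\b{\theta}, \b{\omega}) \subseteq \mathcal{C}$ by showing that every removed scenario is \emph{of support} in the sense of Definition~\ref{def:support}, i.e., that deleting it from the multi-sample changes the decision $\mathcal{A}(\b{\omega})$. I would work directly with the reformulation from Theorem~\ref{theory:removal}, treating $\b{\theta}^* = \mathcal{A}(\b{\omega})$ as the augmented optimizer $(\b{x}^*, \b{u}^*, \b{\xi}^*)$ of Problem~\ref{prob:removal_sp}, where a scenario is removed exactly when its indicator satisfies $\xi^{(i)*} = 1$ and the removal count is fixed at $r = |\mathcal{R}(\b{\theta}, \b{\omega})|$ by the signature of $\mathcal{R}$. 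The first fact I would record is that any removed scenario is violated by the realized trajectory, $g(\b{x}^*, \b{\delta}^{(i)}) > 0$: setting $\xi^{(i)*}=1$ only relaxes a constraint, so an optimal removal never spends budget on a scenario that is already satisfied, since doing so would neither restore feasibility (Assumption~\ref{as:removal_feasibility}) nor lower the cost.

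Next I would argue by contradiction. Suppose some $\b{\delta}^{(i)} \in \mathcal{R}(\b{\theta}, \b{\omega})$ is \emph{not} of support, so that $\mathcal{A}(\b{\omega} \ \textrm{\textbackslash} \ \b{\delta}^{(i)}) = \b{\theta}^*$. Applying $\mathcal{A}$ to the reduced multi-sample $\b{\omega} \ \textrm{\textbackslash} \ \b{\delta}^{(i)}$ must still remove $r$ scenarios, now drawn from the remaining $S-1$. Since $\b{x}^*$ violates exactly the $r-1$ scenarios $\mathcal{R}(\b{\theta},\b{\omega}) \ \textrm{\textbackslash} \ \b{\delta}^{(i)}$ among these, feasibility of $\b{\theta}^*$ forces those $r-1$ to be removed and leaves one unit of removal budget unspent on constraints that $\b{\theta}^*$ already satisfies. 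I would then exploit this slack: the freed removal can be assigned to one of the active scenario constraints holding $\b{\theta}^*$ in place, and discarding such a binding constraint yields a strictly smaller objective, contradicting $\mathcal{A}(\b{\omega} \ \textrm{\textbackslash} \ \b{\delta}^{(i)}) = \b{\theta}^*$. Hence $\b{\delta}^{(i)}$ must be of support, and since it was arbitrary the inclusion $\mathcal{R}(\b{\theta},\b{\omega}) \subseteq \mathcal{C}$ follows.

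I expect the main obstacle to be the step converting the freed removal budget into a \emph{strict} cost decrease, which requires that at least one scenario constraint is genuinely binding at $\b{\theta}^*$ on the reduced problem. The argument degenerates if $\b{\theta}^*$ already coincides with the unconstrained minimizer of $J$, where no removal can reduce cost; I would isolate this case by observing that it contradicts the premise that $r \geq 1$ limiting scenarios were removed to reduce cost or restore feasibility (Assumption~\ref{as:removal_feasibility}), and in the generic case lean on the convex-iteration characterization of support in Theorem~\ref{theory:support_estimate} to guarantee an active, cost-reducing constraint onto which the budget can be reassigned. A secondary subtlety is the fixed removal count: the budget-freeing step relies on $\mathcal{R}$ returning exactly $r$ scenarios, and if $\mathcal{R}$ were permitted to remove fewer, I would instead restate this step via monotonicity of the optimal cost under constraint relaxation rather than a rigid budget count.
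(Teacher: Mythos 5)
You set up exactly the right objects---the augmented decision $\b{\theta} = (\b{x},\b{u},\b{\xi})$ from Theorem~\ref{theory:removal}---but then abandon the one observation that makes the proof immediate, and replace it with a cost-based contradiction that has real holes. Since the decision of Problem~\ref{prob:removal_sp} \emph{includes} $\b{\xi}$, any removed scenario $\b{\delta}^{(i)}$ carries $\xi^{(i)}=1$ at $\mathcal{A}(\b{\omega})$, whereas on the reduced sample $\b{\omega}\setminus\b{\delta}^{(i)}$ that scenario is no longer available to $\mathcal{R}$, so the corresponding decision has $\xi^{(i)}=0$. The two decisions thus differ in the $\b{\xi}$-coordinate alone, and Definition~\ref{def:support} already yields $\b{\delta}^{(i)}\in\mathcal{C}$. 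This is the paper's entire proof; no claim about the $(\b{x},\b{u})$-part, about violation, or about cost is needed---and indeed your own hypothesis $\mathcal{A}(\b{\omega}\setminus\b{\delta}^{(i)})=\b{\theta}^*$ is refuted on the spot by the mismatch $\xi^{(i)}=1$ versus $\xi^{(i)}=0$, so your contradiction could have closed in one line before the budget argument even starts.

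The detour you take instead contains two steps that would fail. First, your ``first fact'' that every removed scenario is violated at the optimum, $g(\b{x}^*,\b{\delta}^{(i)})>0$, is unjustified and is actually false for the removal rule the paper itself employs: the strategy in Eq.~\eqref{eq:removal_algorithm} removes \emph{active} constraints, for which $g=0$, and nothing forces the solution to move strictly past them afterwards. The supporting premise that ``an optimal removal never spends budget on a satisfied scenario'' assumes $\mathcal{R}$ is cost-optimal, while the theorem only posits an arbitrary removal algorithm satisfying Assumption~\ref{as:removal_feasibility}; the paper's own footnote explicitly contemplates removals that fail to reduce cost. Second, converting the freed budget into a \emph{strict} cost decrease is not sound here: $\mathcal{A}$ returns a (sub)optimal local solution of a nonconvex program, so discarding a binding constraint need not strictly lower the value it computes, and your handling of the degenerate case (solution at the unconstrained minimizer with $r\geq 1$ removals) appeals to an intent behind $\mathcal{R}$ that is nowhere formalized. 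So the inclusion you are proving is true, but your argument as written does not establish it; the definitional $\b{\xi}$-coordinate argument you had already set up is both necessary and sufficient.
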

\begin{proof}
    Scenarios removed by $\mathcal{R}$ hold the solution over the variables $\b{\xi}$ in place, since excluding a removed scenario $i$ from $\b{\omega}$, would lead to $\xi^{(i)} = 0$ instead of $\xi^{(i)} = 1$. Thus, by definition, all removed scenarios must be included in the support of Problem~\ref{prob:removal_sp}.
\end{proof}}

\h{Although $\b{\xi}$ are incorporated \hh{in the optimization problem} in Theorem~\ref{theory:removal}, \hh{the scenarios to be removed} can be determined before or during optimization through $\mathcal{R}$ such that no mixed integer optimization needs to be solved. Rather, we solve the original SP in Problem~3 without the scenarios indexed by $\mathcal{R}$ and Theorem~\ref{theory:removal_support} shows how \hh{Theorem~\ref{theory:safety-guarantee}} can be applied to guarantee probabilistic safety.}

\begin{figure}[b]
    \centering
    \includegraphics[width=0.46\textwidth, trim={0.5cm 0 0cm 0cm}, clip]{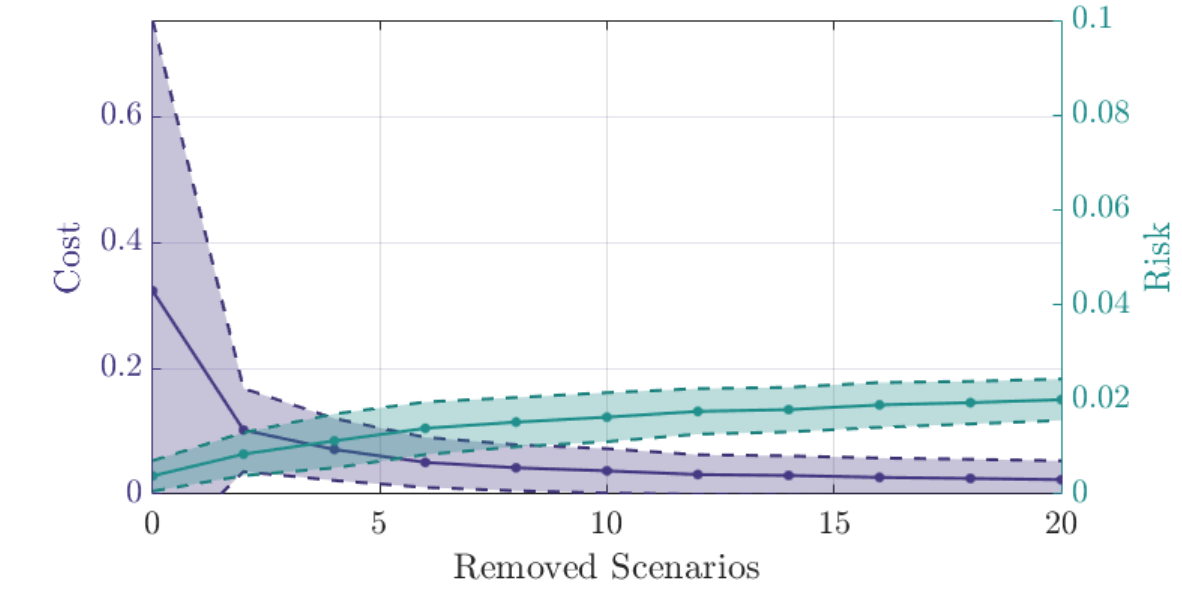}
    \caption{\h{Cost (purple) and empirical risk (green) distributions for the illustrating example \ho{in Section~\ref{sec:illustrating_contd}}. The sample size for each removal size is computed using Theorem~\ref{theory:removal}. Colored intervals denote the variance over $100$ repeated experiments.}}
    \label{fig:cost_vs_risk_revision}
\end{figure}%

\subsection{Illustrating Example Continued}\label{sec:illustrating_contd}
\hh{To illustrate that scenario removal improves consistency of the SP}, we continue the illustrating example. Fig.~\ref{fig:toy_8} depicts a solution to the example where \h{one scenarios is} removed, \h{which decreases the trajectory cost} compared to Fig.~\ref{fig:toy_0}. We perform a quantitative analysis on scenario removal, solving the SP in Eq.~\ref{eq:illustrative_example} \h{for $R \in \{0, 2, \hdots, 20\}$ while updating the sample size according to Theorem~\ref{theory:removal_support} with $\epsilon = 0.1, \beta = 10^{-6}$, resulting in $S\in\{290, 390, \hdots, 1250\}$. We solve the problem $100$ times for each $R$. Fig.~\ref{fig:cost_vs_risk_revision} visualizes the mean and variance of the resulting cost and the empirical risk, validated by monte-carlo sampling with $10^4$ samples.}

\h{Because outliers are removed, the cost decreases in value and variance for any non zero removal size. For larger removal sizes, the variance of the cost decreases mildly with diminishing returns. The risk is consistently less conservative with removal. }

\h{\section{SAFE HORIZON MODEL PREDICTIVE CONTROL}\label{sec:shmpc}}
\hh{So far, we have posed the robot navigation problem as a CCP under joint chance constraints (\probref{prob:ccp}), a specific case of \probref{prob:general_ccp}, that we can solve via an SP (\probref{prob:removal_sp}). In Sec.~\ref{sec:algorithms}, we showed how the safety of the solution computed by the SP can be assessed in online control. In this section, we describe how the proposed framework can be applied in closed loop for safe robot navigation.}

\begin{algorithm}[t]
\caption{Safe Horizon MPC}
\label{alg:sqp_procedure}
\textbf{Input: } $\epsilon$, $\beta$, $\bar{n}$, $R$ and $\mathbb{P}$\\
$S \ \leftarrow $ Bisection of Eq.~\ref{eq:epsilon_bounded} using $\epsilon, \beta, \bar{n}$

\While{True}{
    $\hh{\b{\omega}} \: \leftarrow$ Sample $\mathbb{P}$\\

    \For{$l = 1, \hdots, L$}{
        $\bm{\theta}^l \qquad \leftarrow \ $Solve an iteration of the SP (Eq.~\ref{eq:scenario_program})
        
        $\bm{\omega}_{\textrm{active}}^l \ \: \leftarrow \ $Determine active scenarios (Eq.~\ref{eq:active_constraints})

        $\hat{\hh{\mathcal{C}}}^l, \hat{n}^l \: \: \: \leftarrow \ $Aggregate the support (Eq.~\ref{eq:support_l})
        
        \If{$\hat{n}^l > \bar{n}$ \hh{and Eqs.~\eqref{eq:scenario_eq_constraints} and~\eqref{eq:scenario_constraints}} \hh{satisfied at $l=m$}}{
            $\bm{\theta}^* = \bm{\theta}^{m}$
            
            \textbf{break}
        }
        
        $\bm{\xi}^l \ \leftarrow \ $Remove scenarios (Eq.~\ref{eq:removal_function}) %
    }
    Actuate $\b{u}^*_0$
}
\end{algorithm}%

\h{\subsection{MPC Algorithm}}
\h{We formulate and solve the Model Predictive Control (MPC) problem in Algorithm~\ref{alg:sqp_procedure}.} Before deploying the controller, we compute the sample size from the desired risk, confidence and support \hh{limit} (line~2). For completeness, we provide with this publication a Jupyter notebook~\cite{de_groot_o_jupyter_2022} that performs this computation using a bisection of \eqref{eq:epsilon_bounded}. In each control iteration, we sample the modeled distribution of the uncertainty to obtain a set of samples (line~4). Then, we repeatedly solve \probref{prob:general_sp} with the samples and aggregate the support estimate by computing the active constraints (line~6-8). If the support estimate exceeds the support \hh{limit} \hh{and there was a previously safe solution, we return that solution} (line~9-11). \hh{If there was no previous solution, we slow down the robot.} The first input is sent to the system (line~13). %

\h{\subsection{Removing Scenarios}}
\h{In line~12 \hh{we remove scenarios to improve the solution and ensure feasibility} as detailed in Sec.~\ref{sec:removal}.} Function $\mathcal{R}$ updates the removed scenarios in each iteration\footnote{$\mathcal{R}$ must be carefully selected to reduce the cost. If it does not, then the added support quantified in Theorem~\ref{theory:removal_support} results in conservative solutions.}. Removing the active constraints is an effective way to reduce the cost. These scenarios restrict the optimization, are already of support and are available without additional computations. \h{They represent for example the most restrictive sampled human trajectories in our main application.} We therefore define the removal strategy for SH-MPC as
\begin{equation}
    \mathcal{R}(\bm{\theta}, \bm{\omega}) = \bigcup_{j = 0}^l \hh{\left(\bm{\omega}^j_{\textrm{active}} \cup \b{\omega}^j_{\textrm{infeasible}}\right)}, \label{eq:removal_algorithm}
\end{equation}
\hh{where $\b{\omega}^j_{\textrm{infeasible}}$ denote the scenarios that must be removed to satisfy Assumption~\ref{as:removal_feasibility}. Active constraints $\bm{\omega}^j_{\textrm{active}}$ can be removed up to some maximum of $R$ scenarios. Accordingly, the support increases by at least $R$ through Theorem~\ref{theory:removal_support}.}

Even when removed scenarios are already of support, as is the case in \eqref{eq:removal_algorithm}, the support of the optimization still increases. Removing scenarios causes underlying scenarios to become of support as illustrated in Fig.~\ref{fig:removal_figure}. Removing $2$ scenarios in Fig.~\ref{fig:removal_a} leads to Fig.~\ref{fig:removal_b}. The cost decreases, but all $4$ scenarios are of support, making its solution higher risk.

\begin{figure}[t!]
    \centering
    \begin{subfigure}[t]{0.20\textwidth}
    \centering
         \includegraphics[width=\textwidth]{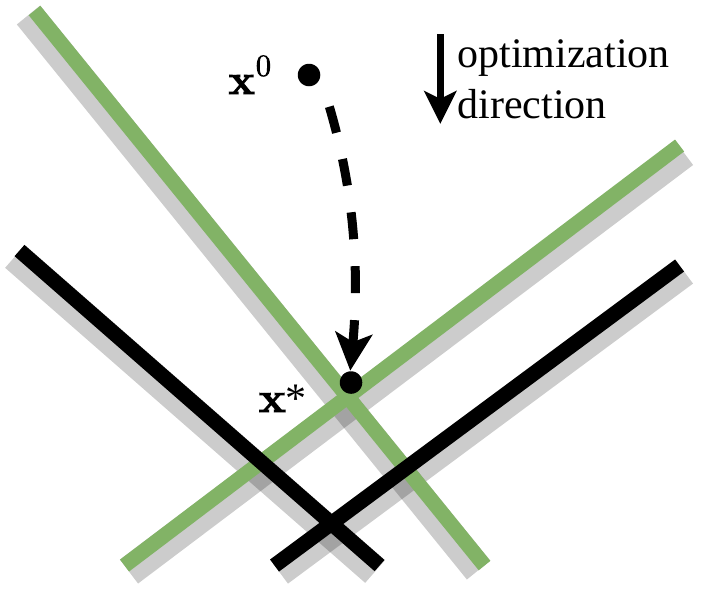}
            \caption{No scenarios are removed, the support is $2$.}%
         \label{fig:removal_a}
     \end{subfigure}
     \quad
     \begin{subfigure}[t]{0.20\textwidth}
         \centering
         \includegraphics[width=\textwidth]{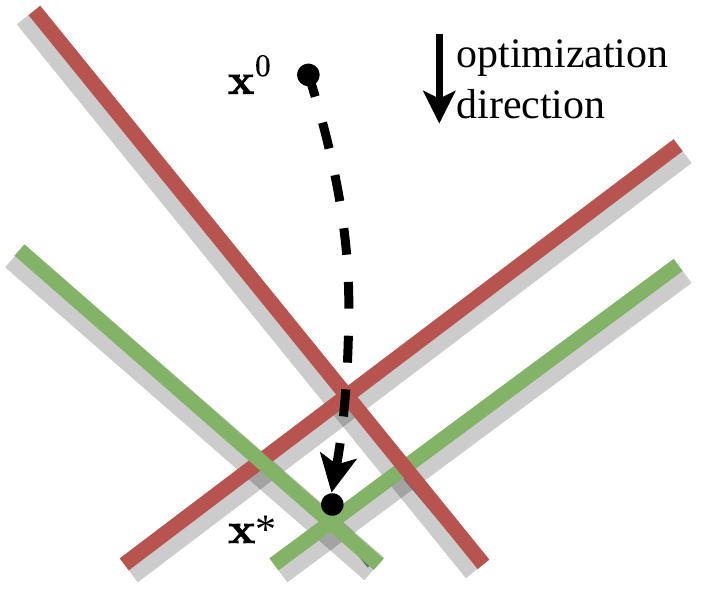}
        \caption{After removing 2 scenarios (linear constraints), the cost decreases but all 4 scenarios are in the support set.}%
         \label{fig:removal_b}
     \end{subfigure}
    \caption{An illustration of scenario optimization without (left) and with (right) scenario removal. The dots denotes the initial guess and solution to the optimization. The thick lines depict inactive (black), active (green) and removed (red) constraints.}
    \label{fig:removal_figure}
\end{figure}

\begin{figure*}
    \centering
     \begin{subfigure}[t]{0.30\textwidth}
         \centering
         \includegraphics[angle=-90,width=\textwidth]{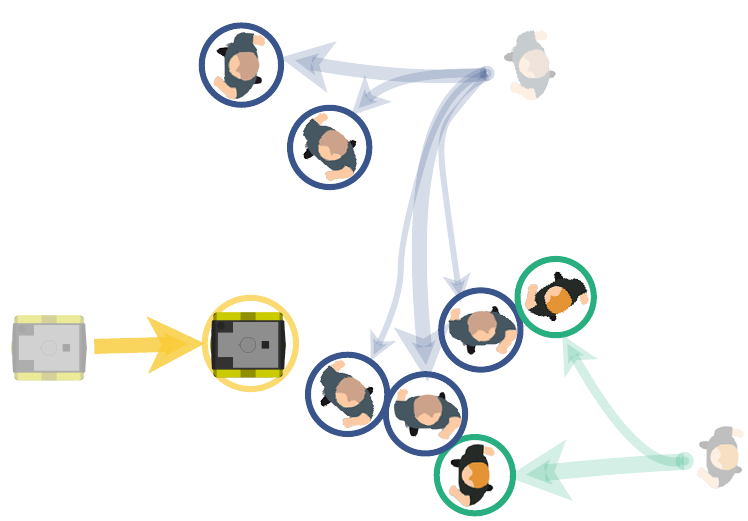} %
        \caption{Scenarios are sampled from the trajectory distributions. Each time instance of SP (Eq.~\ref{eq:quadratic_sp}) is associated with a set of sampled obstacle positions as visualized by the green and blue circled pedestrians.}
         \label{fig:overall_b}
     \end{subfigure}
    \qquad
     \begin{subfigure}[t]{0.30\textwidth}
         \centering
         \includegraphics[angle=-90,width=\textwidth]{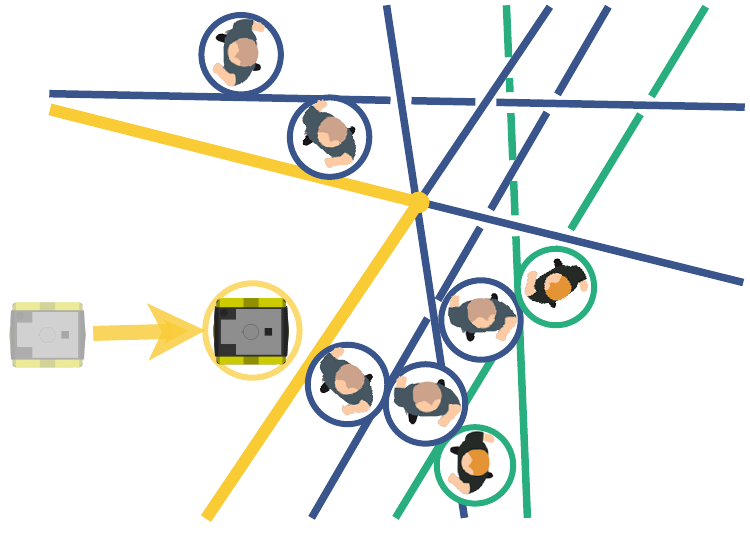}
        \caption{Linear constraints are constructed between sampled obstacles and the \system, and are reduced to a probabilistic safe polytope for each time instance and \systems disc.}
         \label{fig:overall_c}
     \end{subfigure}
        \qquad
     \begin{subfigure}[t]{0.30\textwidth}
         \centering
         \includegraphics[angle=-90,width=\textwidth]{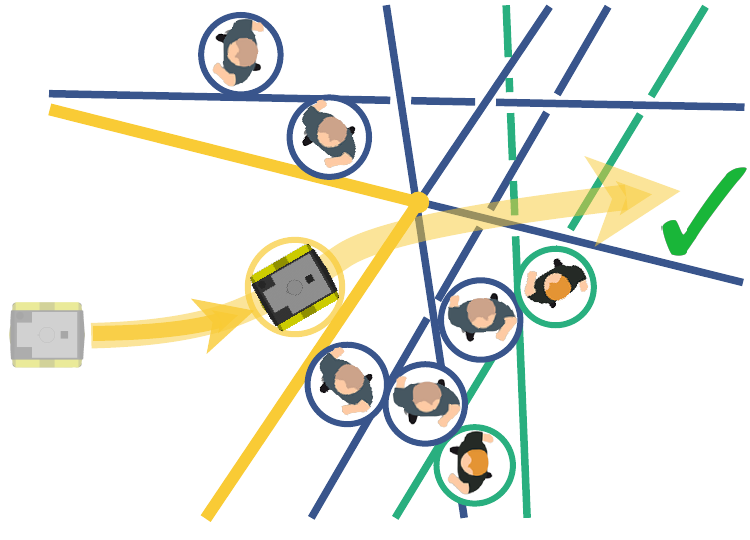}
        \caption{\h{SP (Eq.~\ref{eq:sp_final}) is solved via Algorithm~\ref{alg:sqp_procedure}. The resulting trajectory is certified up to a probabilistic bound.}}
         \label{fig:overall_d}
     \end{subfigure}
    \caption{\h{Schematic illustration of Safe Horizon MPC applied to a mobile robot.}}
    \label{fig:overall_figure}
\end{figure*}

\h{\section{APPLICATION: SAFE MOTION PLANNING AROUND PEDESTRIANS}\label{sec:motion_planning}}
\hh{In this section, we apply Safe Horizon MPC as derived in Sec.~\ref{sec:shmpc} to compute probabilistic safe trajectories in real-time for a mobile robot navigating around pedestrians.} %

\subsection{Collision Avoidance Scenario Program}
\hh{In the following, we let the joint uncertainty in~Problem~\ref{prob:ccp} represent} the future motion of all humans near the robot, that is, $\Delta = \mathbb{R}^{2MN}$, which represents the $x$-$y$ positions of $M$ obstacles over $N$ time steps. Based on obstacle tracking information, a perception module provides the joint probability distribution of obstacle trajectories $\mathbb{P}$. This distribution is sampled to obtain the scenarios that describe positions of all obstacles along the planning horizon. With these scenarios, $\h{\b{\delta}^{(i)} \in \b{\omega}}$, we can construct the SP for \probref{prob:ccp} as follows:
\begin{subequations}
\label{eq:quadratic_sp}
\begin{align}
    \min_{\b{u} \in \mathbb{U}, \b{x}\in\mathbb{X}} \quad & \sum_{k = 0}^N J(\b{x}_k, \b{u}_k)\\
  \textrm{s.t.} \quad \quad & \b{x}_0 = \b{x}_{\textrm{init}} \\
  & \b{x}_{k + 1} = f(\b{x}_k, \b{u}_k), \ k = 0, \hdots, N-1 \\
  &\bigwedge_{i = 0}^S\bigwedge_{k = 1}^N\bigwedge_{d = 0}^{n_d} \bigwedge_{j = 0}^M \left(||\b{p}_k^d - \bm{\delta}^{(i)}_{k, j}||_{\h{2}} \geq r\right),\label{eq:quadratic_sp_constraints}
\end{align}
\end{subequations}
where, for brevity, the scenario constraints are bundled using the ``and'' operator. The collision-free space described by~\eqref{eq:quadratic_sp_constraints} is nonconvex (see Fig.~\ref{fig:overall_b}) \hh{and can lead to high support.}
\hh{To reduce the support and computation time of the problem,} we first linearize the collision regions with respect to the previously planned \systems trajectory (denoted $\hat{\b{p}}$). After linearization, each scenario is associated with a linear constraint (depicted as lines in Fig.~\ref{fig:overall_c}). \hh{Details on how this reduces the support are included in Appendix~\ref{ap:shadow}.} For a previous \systems position $\hat{\b{p}}_k$ and obstacle position $\bm{\delta}_k$ the constraints are given by
\begin{equation}
    \mathcal{H}(\hat{\b{p}}_k, \bm{\delta}_k) = \{\b{p}_k \ | \ \b{A}(\hat{\b{p}}_k, \bm{\delta}_k)^T\b{p}_k \leq b(\hat{\b{p}}_k, \bm{\delta}_k)\},
\end{equation}
where
\begin{equation}
    \b{A}(\hat{\b{p}}_k, \bm{\delta}_k) = \frac{\bm{\delta}_k - \hat{\b{p}}_k}{||\bm{\delta}_k - \hat{\b{p}}_k||}, \ b(\hat{\b{p}}_k, \bm{\delta}_k) = {\b{A}}^T\bm{\delta}_k - r.\label{eq:hyperplane_definition}
\end{equation} 
The linearized collision region contains the original collision region, preserving the probabilistic guarantees. In addition, the linearization is applied locally to each timestep $k$ and \systems disc $d$, resulting in a locally accurate approximation of the original collision regions. For the linearized constraints, we obtain the following SP:
\begin{problem}[SP]\label{prob:sp}
\begin{subequations}
\label{eq:mp_scenario_program}
\begin{align}
    \min_{\b{u} \in \mathbb{U}, \b{x}\in\mathbb{X}} \quad & \sum_{k = 0}^N J(\b{x}_k, \b{u}_k)\\
    \textrm{s.t.} \quad \quad & \b{x}_0 = \b{x}_{\textrm{init}} \\
& \b{x}_{k + 1} = f(\b{x}_k, \b{u}_k), \ k = 0, \hdots, N-1 \\
  &\bigwedge_{i = 0}^S\bigwedge_{k = 1}^N\bigwedge_{d = 0}^{n_d} \bigwedge_{j = 0}^M \left(\b{p}^d_k \in\mathcal{H}\left(\hat{\b{p}}_k^d, \bm{\delta}_{k, j}^{ (i)}\right)\right).\label{eq:mp_scenario_constraints}%
\end{align}
\end{subequations}
\end{problem}
\subsection{Improved Computational Efficiency}
To further reduce the computational demand of this formulation, consider that Constraints \eqref{eq:mp_scenario_constraints} can be reordered as
\begin{equation}
      \bigwedge_{k = 1}^N\bigwedge_{d = 0}^{n_d} \left[\bigwedge_{i = 0}^S \bigwedge_{j = 0}^M \left(\b{p}^d_k \in \mathcal{H}\left(\hat{\b{p}}_k^d,\bm{\delta}_{k, j}^{(i)}\right)\right)\right], %
      \label{eq:permuted_scenario_constraints}
\end{equation}
to pair constraints that apply to a single \systems disc position $\b{p}_k^d$. Because of the overlap between the constraints, each of these constraint pairings can be described by a small subset of the constraints (see Fig.~\ref{fig:overall_c}). Thus, the constraints \eqref{eq:mp_scenario_constraints} can be reduced to a free-space polytope before optimization, which significantly reduces computation times. We denote the polytope for \systems disc $d$ and stage $k$ as
\begin{equation}
   \mathcal{P}_k^d = \left\{\b{p}_k^d  \ \bigg| \ \bigwedge_{c = 0}^{n_{\mathcal{H}}} \b{p}^d_k \in \mathcal{H}_c \right\}.\label{eq:polytope}
\end{equation}
By constructing this polytope, the number of constraints is reduced from $10^3 - 10^4$ to no more than $n_{\mathcal{H}} =20$ constraints. In addition, the computations necessary to construct the polytope are cheap in 2D. Our algorithm uses a recursive search that typically performs this computation in less than $100\:\mu$s. The final SP that is solved online is given by %
\begin{subequations}
\label{eq:sp_final}
\begin{align}
    \min_{\b{u} \in \mathbb{U}, \b{x}\in\mathbb{X}} \quad & \sum_{k = 0}^N J(\b{x}_k, \b{u}_k)\\
      \textrm{s.t.} \quad \quad & \b{x}_0 = \b{x}_{\textrm{init}} \\
& \b{x}_{k + 1} = f(\b{x}_k, \b{u}_k), \ k = 0, \hdots, N-1 \\
 &\b{p}^d_k \in \mathcal{P}_k^d, \quad \forall k, \forall d.
\end{align}
\end{subequations}
\h{Its solution is a safe trajectory as visualized in Fig.~\ref{fig:overall_d}.}

\subsection{Implementation Details}
We apply our framework in a Model Predictive Contouring Control formulation~\cite{brito_model_2019}, which tracks a reference path and reference velocity while penalizing \systems inputs\footnote{In contrast to~\cite{brito_model_2019}, our objective does not use repulsive forces from the obstacles.}. 
We solve the SP in Eq.~\ref{eq:sp_final} using the Forces Pro SQP solver~\cite{domahidi_forces_2014} with at most $15$ iterations. We reduce the variance of the solution by removing $R=\hh{1}$ scenarios through $\mathcal{R}$ as defined in Eq.~\eqref{eq:removal_algorithm}. Including the support of removal we set $\bar{n} = \hh{9}$ for which \hh{Theorem~\ref{theory:safety-guarantee} almost always certifies the solution (if not, we slow down the robot).} For a risk of $\epsilon = 0.05$ and confidence of $\beta = 0.01$ the required sample size is $S = \hh{1237}$. The linearization of the constraints requires the previous plan to be feasible. We make use of a projection step to ensure that this holds in all time steps. This projection step consists of a projection orthogonal to the direction of \systems movement, which almost always results in a feasible plan. In the remaining cases, we solve a feasibility program using Douglas-Rachford Splitting~\cite{artacho_douglas-rachford_2019}. \ho{Our motion planner is implemented in C++/ROS and will be released open source}.

\section{SIMULATION RESULTS}\label{sec:results}
This section compares our SH-MPC with baseline methods that constrain the marginal CP, that is, the independent CP per time instance and obstacle. In the first set of simulations, we consider a robot moving through an environment with pedestrians (see Fig.~\ref{fig:gaussian}) in which the \systems is modeled by a kinematic unicycle model~\cite{siegwart_introduction_2011}. We assume throughout that the distribution of pedestrian motion is known, in order to evaluate the performance of the planner in isolation (i.e., without prediction errors). We first validate on a Gaussian case, where the baselines may leverage the shape of the distribution to accurately approximate the probabilistic collision-free space. A video of the simulations accompanies this paper~\cite{o_de_groot_scenario-based_2022}.

\subsection{Baselines}
The first baseline~\cite{zhu_chance-constrained_2019} (referred to as ``Gaussian'') is strictly applicable to Gaussian distributions. It approximates the collision probability via the Cumulative Density Function (CDF) of the Gaussian distribution. The linearized collision avoidance chance constraint
\begin{equation}
    \mathbb{P}\left[\b{a}_{k, j}^T(\b{p}_k- \bm{\delta}_{k, j}) \geq r\right] \geq 1 - \epsilon_k, \ \b{a}_{k, j}\!=\!\frac{\b{p}_k - \bm{\delta}_{k, j}}{||\b{p}_k - \bm{\delta}_{k, j}||},
\end{equation}
is equivalent, under a Gaussian distribution of $\bm{\delta}_{k, j}$, to
\begin{equation}
    \b{a}_{k, j}^T(\b{p}_k - \bm{\delta}_{k, j}) - r \geq \textrm{erf}^{-1}(1 - 2\epsilon_k) \sqrt{2\b{a}_{k, j}^T\bm{\Sigma}\b{a}_{k, j}},
\end{equation}
where $\textrm{erf}^{-1}$ is the inverse standard error function and $\bm{\Sigma}$ is the covariance matrix of the uncertainty. This constraint is imposed separately for each time step and obstacle.

The second baseline S-MPCC~\cite{de_groot_scenario-based_2021} is generally applicable. It solves an SP where scenario constraints are posed on the marginal distributions. For each time $k$, it samples from the independent obstacle distribution at $k$ to obtain a collision-free polygon. It is assumed that all constraints in the polygon are of support (set to $20$ in these simulations), which for $\epsilon_k = 0.0025$ requires $S = 75946$. Samples in the center of the distribution are pruned in the Gaussian case to reduce the number of samples considered online.

Since SH-MPC is characterized by a single bound $\epsilon$ on the trajectory CP, while the baselines specify bounds $\epsilon_k$ on the CP for each $k$ and for each obstacle, we consider three versions of the baselines. The first sets $\epsilon_k = \epsilon$, which is not provably safe, but relies on updates of the controller to remain safe. The second version sets $\epsilon_k = \frac{\epsilon}{N}$, accounting for the marginal approximation over time since $\sum_k \epsilon_k = \epsilon$, but ignoring marginalization per obstacle. The third version accounts for both marginalizations, setting $\epsilon_k = \frac{\epsilon}{N M}$ such that $M \sum_k  \epsilon_k = \epsilon$. Only this last version attains the same safety guarantee as SH-MPC. We consider this version in crowded environments, where the marginal CP is violated otherwise.

\subsection{Weights and Parameters}
In the following simulations, the planners are only differentiated by their collision avoidance constraints. We use the same solver, weights and cost function for all methods. The weights of the MPC problem are given in Table~\ref{tab:weights}. To motivate overtaking behavior, the MPC is tuned differently for the two considered distributions. Aside from the weights, we define a horizon of $N = 20$ steps, with a discretization step of $0.2$ s, giving a time horizon of $4.0$ s. The control rate is $20$ Hz corresponding to a sampling time of $50$ ms. The computer running the simulations is equipped with an Intel i9 CPU@2.4GHz.

\begin{table}[h]
    \centering
    \caption{Weights of the MPC problem.}
    \begin{tabular}{|l|c|c|c|c|}
    \hline\textbf{Simulation (Sec.)} & \textbf{Velocity} & \textbf{Ang. Velocity} & \textbf{Contour} & \textbf{Lag} \\\hline
    Trajectories (\ref{sec:results_trajectories})& 0.05 & 0.05 & 0.02 & 0.1 \\\hline
    Gaussian (\ref{sec:results_gaussian}) & 0.05 & 0.05 & 0.001 & 0.1 \\\hline
    GMM (\ref{sec:results_gmm}) & 0.15 & 0.05 & 0.005 & 0.1 \\\hline
    \end{tabular}
    \label{tab:weights}
\end{table}

\subsection{Static Obstacle Trajectory Comparison}\label{sec:results_trajectories}
We first compare trajectories of SH-MPC and the Gaussian baseline in an environment with one static obstacle where the motion predictions follow a Gaussian distribution. The pedestrian dynamics are given by
\begin{equation}
\bm{\delta}_{k + 1} = \bm{\delta}_{k} + \bm{\delta}_{w, k}dt, \quad \bm{\delta}_{w, k} \sim \mathcal{N}(\b{0}, \bm{\Sigma}_{w}),\label{eq:ped_dynamics}
\end{equation}
where $\bm{\Sigma}_{w} \in \mathbb{R}^{2\times 2}$ is a diagonal covariance matrix (i.e., random variables in the $x$ and $y$ direction are independent). Its diagonal entries $\sigma_{wx} = 0.3, \sigma_{wy} = 0.3$ are kept constant over the horizon. For the marginal Gaussian baselines, the distribution of the uncertainty in future steps is propagated using
\begin{equation}
    \sigma_{x, k+1}^2 = \sigma_{x, k}^2 + \sigma_{wx}^2, \ \sigma_{y, k+1}^2 = \sigma_{y, k}^2 + \sigma_{wy}^2,
\end{equation}
while for SH-MPC we specify directly the dynamics in Eq.~\eqref{eq:ped_dynamics}. We bound the CP for SH-MPC at $\epsilon = 0.05$ and the marginal CP at $\epsilon_k = 0.05$ and $\epsilon_k = 0.0025$ ($5\%$ over $N=20$ steps) for the baselines. The obstacle is positioned at $(6.0, 0.0)$ and does not move in the simulation. The pedestrian and robot radius are $0.3$ m and $0.325$ m respectively. We tune the controller to follow the reference trajectory on the x-axis as close as possible and use the same weights for all methods (see Table~\ref{tab:weights}). The resulting trajectories over 25 repetitions are depicted in Fig.~\ref{fig:trajectories_zero}.

\begin{figure}[t]
     \centering
     \includegraphics[width=0.37\textwidth,trim={1cm, 1cm, 2.5cm, 0cm},clip]{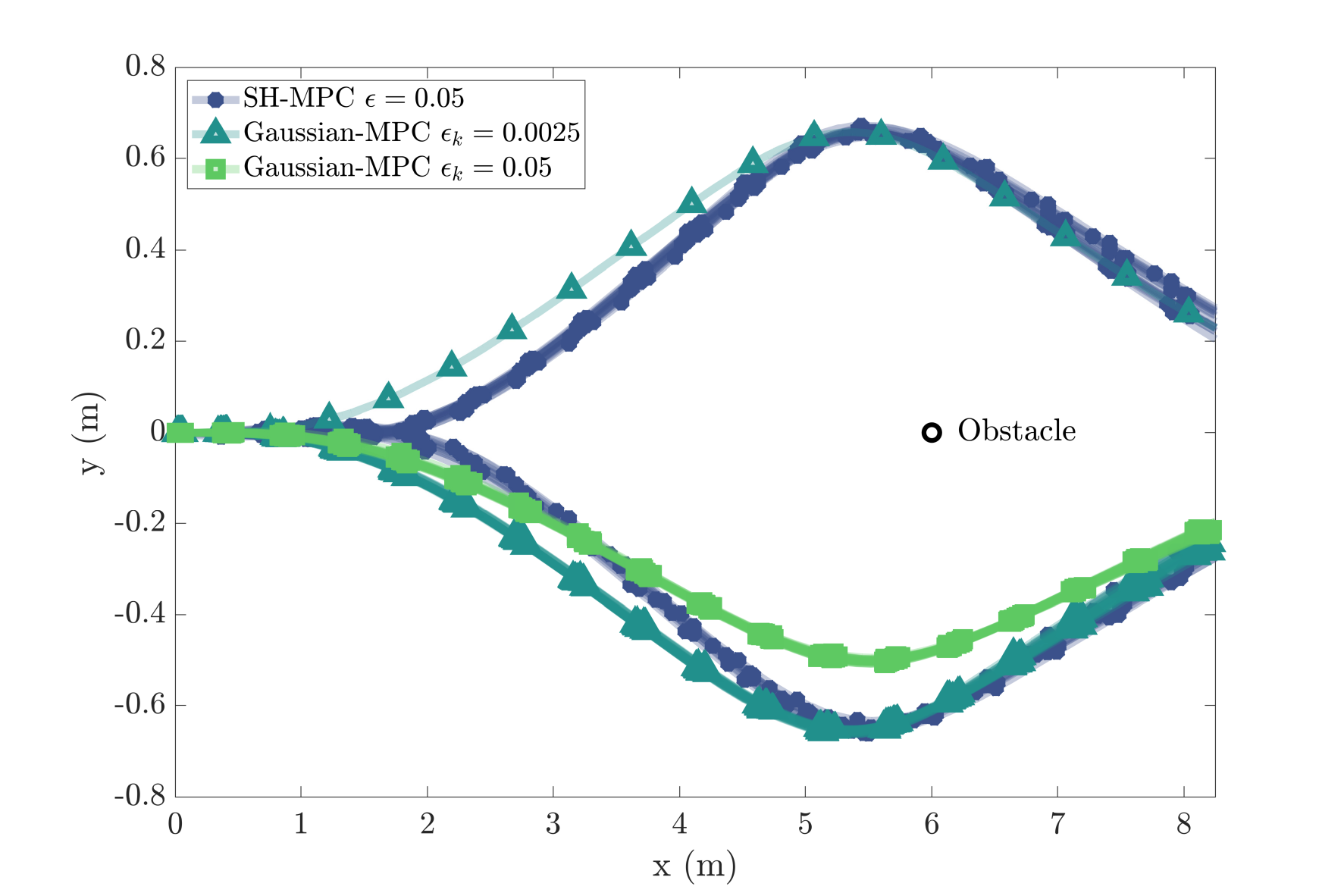}
        \caption{Trajectories around a static obstacle where motion predictions follow a Gaussian distribution.}
     \label{fig:trajectories_zero}
\end{figure}

The trajectories indicate that the Gaussian baselines need to react faster to the obstacle. SH-MPC and the $\epsilon_k = 0.0025$ Gaussian baseline pass at a similar distance to the obstacle, but under SH-MPC the \systems can steer later, possibly leading to faster trajectories. The Gaussian baseline at $\epsilon_k = 0.05$ is faster, but does not meet the safety specification, as we will show in the following simulations.%

\subsection{Mobile \Systems Simulations - Gaussian} \label{sec:results_gaussian}
We consider an environment with multiple dynamic pedestrians following dynamics similar to the previous case,
\begin{equation}
    \bm{\delta}_{k + 1} = \bm{\delta}_k + (\b{v} + \bm{\delta}_{w, k}) dt, \quad \bm{\delta}_{w,k} \sim \mathcal{N}(\b{0}, \bm{\Sigma}_{w}), \label{eq:ped_dynamics2}
\end{equation}
but where $\b{v}\in\mathbb{R}^2$ describes a constant velocity. We validate the actual CP of the motion plan offline after the experiments for all methods through Monte Carlo sampling in which the dynamics in Eq.~\eqref{eq:ped_dynamics2} are used to generate the samples. The CP is computed by dividing the number of samples where the \systems and obstacle discs overlap at any stage by the total number of samples (set to $10^5$). The marginal CP (CP$_k$) is computed without taking prior collisions into account. We validate in a scenario with 4 and 8 pedestrians, respectively. Fig.~\ref{fig:gaussian} depicts snapshots of the simulations with 4 pedestrians.

\begin{figure*}
    \centering
    \begin{subfigure}[b]{0.30\textwidth}
         \centering
         \includegraphics[width=\textwidth]{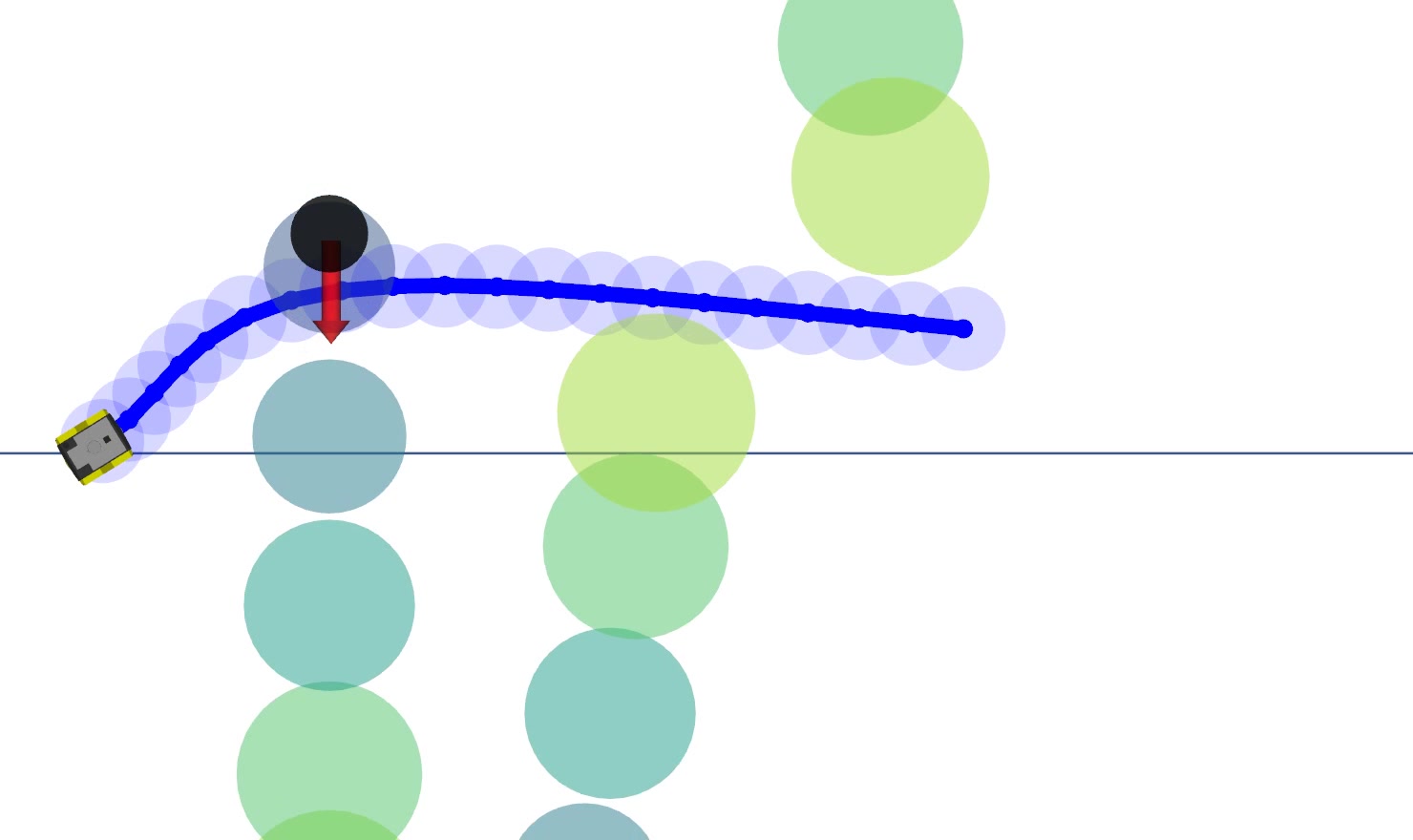}
            \caption{Gaussian baseline at $\epsilon_k=0.0025$.}
         \label{fig:gaussian_ellipsoid}
     \end{subfigure}
    \begin{subfigure}[b]{0.30\textwidth}
    \centering
         \includegraphics[width=\textwidth]{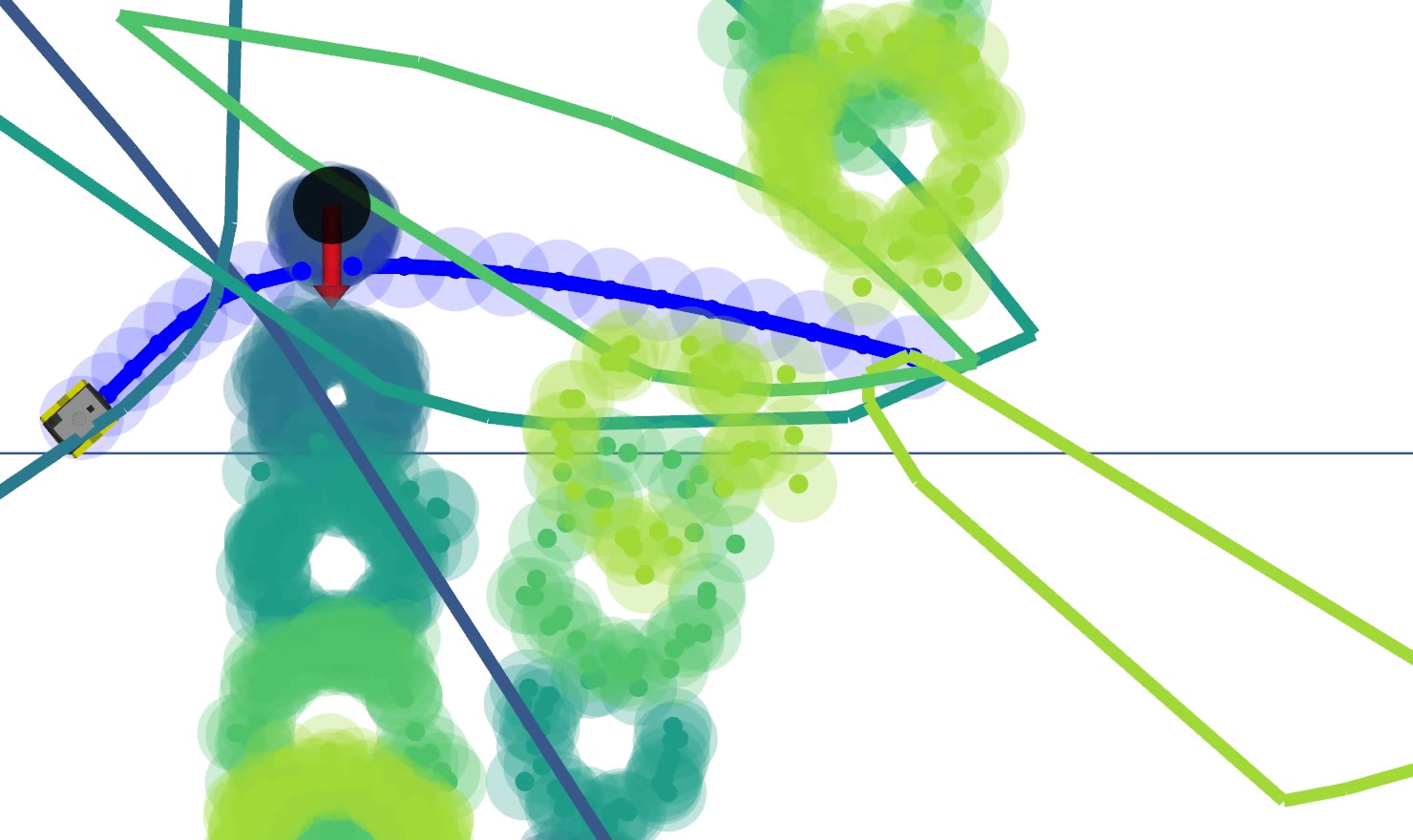}
            \caption{S-MPCC at $\epsilon_k=0.0025$.}
         \label{fig:gaussian_smpcc}
     \end{subfigure}
     \hspace{1pt}
     \begin{subfigure}[b]{0.30\textwidth}
         \centering
         \includegraphics[width=\textwidth]{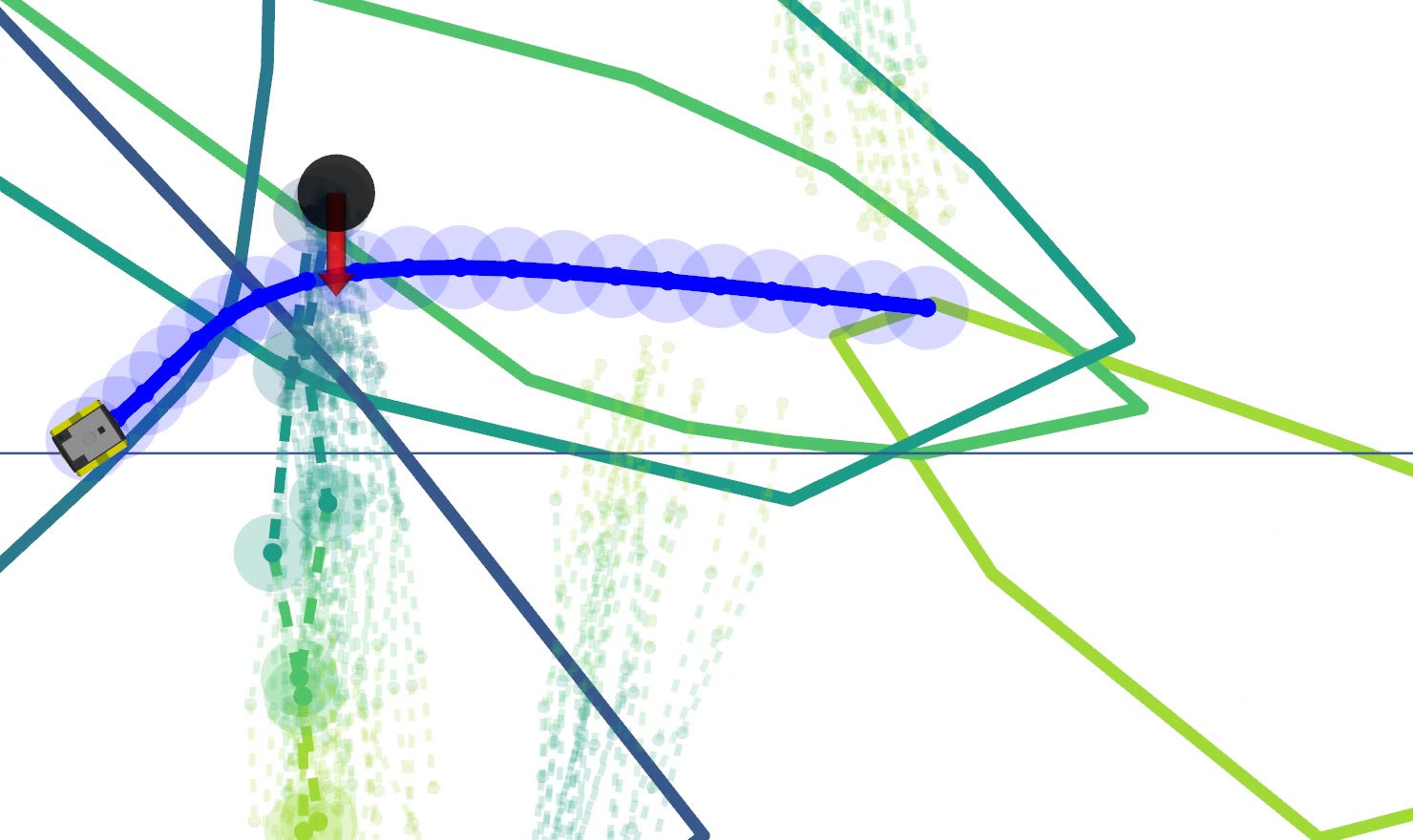}
            \caption{SH-MPC at $\epsilon=0.05$.}
         \label{fig:gaussian_shmpc}
     \end{subfigure}
    \caption{Snapshot of the simulation under Gaussian pedestrian motion. \h{Pedestrians are depicted by black circles with direction of motion highlighted in red. The robot plan and its collision area is drawn in blue. For all methods visualizations are shown in blue to green colors for stages $0, 5, 10, 15$ and $19$, respectively. (a) Circles show the level sets of the Gaussian distribution at the specified $\epsilon$. (b) Sampled pedestrian positions (excluding pruned samples in the center) are drawn as points with their collision area, borders of the safe polytopes are drawn as colored lines. (c) Similar to (b) but where sampled trajectories are drawn as dashed lines and support constraints are highlighted.}}
    \label{fig:gaussian}
\end{figure*}%
\begin{table*}
    \centering
    \caption{Statistical results \ho{over $100$ experiments} of the marginal \ho{CP (``CP$_k$'')} and trajectory CP \ho{(``CP'')}, the task duration, traveled distance, minimum distance to the pedestrians and computation times for \ho{the unimodal simulation with $4$ pedestrians}. \ho{For the CPs we report the maximum observed over all experiments and compare it to the specified bound (a dash indicates that no bound is specified on the particular CP).} Other results are reported as ``average (standard deviation)'' unless stated otherwise. \ho{Methods are grouped by their safety guarantee.}}
\begin{tabular}{|l|c@{\hspace{1pt}}c@{\hspace{1pt}}l@{\hspace{2pt}}c|c@{\hspace{1pt}}c@{\hspace{1pt}}l@{\hspace{2pt}}c|c|c|c|c|}
\hline \textbf{Method} & \multicolumn{4}{c|}{\textbf{Max CP$_k$/Spec. (\%)}} & \multicolumn{4}{c|}{\textbf{Max CP/Spec. (\%)}} & \textbf{Dur. [s]} &\textbf{Trav. [m]} &\textbf{Min Dist. [m]} &\textbf{Runtime (Max) [ms]}\\\hline
S-MPCC ($\epsilon_k=0.05$) & 0.0043 &/ &0.0500 &(9) & 0.0082 &/ &- &(-) & 5.56 (0.36) & 8.84 (0.08) & 0.27 (0.06) & 19 ~(98)\\\hline
Gaussian ($\epsilon_k=0.05$) & 0.0455 &/ &0.0500 &(91) & 0.1271 &/ &- &(-) & 5.27 (0.10) & 8.82 (0.05) & 0.13 (0.05) & 14 ~(64)\\
\hhline{|=============|}
S-MPCC ($\epsilon_k=0.0025$) & 0.0003 &/ &0.0025 &(10) & 0.0004 &/ &- &(-) & 6.39 (0.09) & 9.00 (0.05) & 0.39 (0.06) & 20 ~(77)\\\hline
Gaussian ($\epsilon_k=0.0025$) & 0.0025 &/ &0.0025 &(99) & 0.0069 &/ &- &(-) & \textbf{5.33} (0.08) & \textbf{8.87} (0.04) & 0.26 (0.06) & \textbf{12} (111)\\\hline
\hh{SH-MPC ($\epsilon = 0.05$)} & \hh{0.0070} &/ &- &(-) & \hh{0.0113} &/ &\hh{0.0500} &\hh{(23)} & \hh{5.64 (0.40) }& \hh{9.00 (0.10) }& \hh{0.28 (0.05) }& \hh{23 (61)}\\\hline
\end{tabular}
    \label{tab:gaussian}
\end{table*}%
\begin{table*}
    \centering
    \caption{\h{Results for the unimodal simulation with $8$ pedestrians. Displayed results follow the notation in Table \ref{tab:gaussian}.}}%
\begin{tabular}{|l|c@{\hspace{1pt}}c@{\hspace{1pt}}l@{\hspace{2pt}}c|c@{\hspace{1pt}}c@{\hspace{1pt}}l@{\hspace{2pt}}c|c|c|c|c|}
\hline \textbf{Method} & \multicolumn{4}{c|}{\textbf{Max CP$_k$/Spec. (\%)}} & \multicolumn{4}{c|}{\textbf{Max CP/Spec. (\%)}} & \textbf{Dur. [s]} &\textbf{Trav. [m]} &\textbf{Min Dist. [m]} &\textbf{Runtime (Max) [ms]}\\\hline
Gaussian ($\epsilon_k=0.05$) & \exceeds{0.0865} &/ &0.0500 &(173) & \exceeds{0.2309} &/ &- &(-) & 11.76 (3.07) &  18.53 (0.43) & 0.14 (0.05) &14 (73)\\
\hhline{|=============|}
Gaussian ($\epsilon_k=0.0025$) & \exceeds{0.0041} &/ &0.0025 &(162) & 0.0140 &/ &- &(-) & 16.11 (1.58)  & 19.11 (0.43)& 0.30 (0.07) & 13 (58)\\
\hhline{|=============|}
Gaussian ($\epsilon_k=0.0003125$) & 0.0004 &/ &0.0025 &(16) & 0.0016 &/ &- &(-) & 18.25 (3.31)  & \textbf{19.24} (0.42)& 0.37 (0.07) & \textbf{14} (59)\\\hline
\hh{SH-MPC ($\epsilon = 0.05$)} & \hh{0.0047} &/ &- &(-) & \hh{0.0128} &/ &\hh{0.0500} &\hh{(26)} & \hh{\textbf{15.63} (1.73) }& \hh{19.32 (0.22) }& \hh{0.34 (0.05) }& \hh{30 (79)}\\\hline
\end{tabular}
    \label{tab:gaussian_crowded}
\end{table*}%

The results with 4 pedestrians are summarized in Table~\ref{tab:gaussian}. The Gaussian method bounds the marginal CP (CP$_k$) accurately. However when $\epsilon_k = 0.05$, the CP of the plan is $0.1271$ which exceeds $0.05$. The trajectories are safe under $\epsilon_k = 0.0025$, but are conservative with a maximum overall CP of $0.0069$. In contrast, SH-MPC attains a higher, but still safe maximum CP of $\hh{0.0113}$. We do not observe a significant difference in the performance metrics of this simulation, the Gaussian method at $0.0025$ is slightly faster than SH-MPC but travels more distance. The repeated computations of the controller are likely responsible for the observed similarity in performance. Because marginal methods are accurate over the short term, they become less conservative close to obstacles. It is worth noting that while the Gaussian method uses information of the distribution, SH-MPC and S-MPCC are only using samples of the distribution. S-MPCC is more conservative than the other two methods as a result of its conservative support estimate.

For the case of 8 pedestrians, results are summarized in Table~\ref{tab:gaussian_crowded}. In these simulations, the marginal CP of the Gaussian method exceeds the specification. Fig.~\ref{fig:marginal_obstacle} illustrates on a 1-dimensional example that this is a result of the marginalization per obstacle. In this simulation we therefore also compare against the Gaussian baseline at $\epsilon_k =\frac{0.0025}{8}= 0.0003125$, which attains the same safety guarantees as SH-MPC. The results indicate that SH-MPC is able to move through this environment significantly faster than the baseline with the same safety guarantees. Our method shows no significant change in its overall CP compared to the previous simulation as it considers the joint distribution, while the risk of the baselines has more than doubled. Additionally, the safe baseline with $\epsilon_k = 0.0003125$ is excessively conservative with a CP of $0.0016$ versus \hh{$\h{0.0128}$} of SH-MPC.
\begin{figure}[t]
    \centering
    \includegraphics[width=0.38\textwidth]{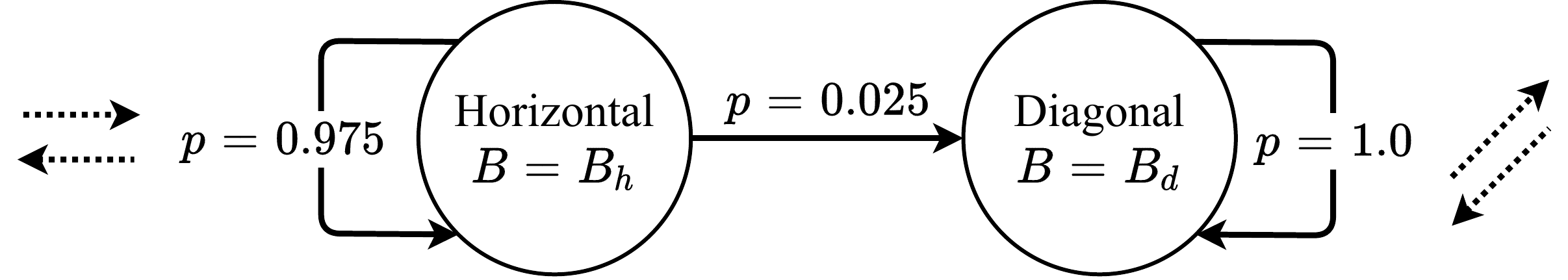}
    \caption{Markov-chain representation of crossing pedestrian motion.}
    \label{fig:markov_chain}
\end{figure}%
\subsection{Mobile \Systems Simulations - Gaussian Mixture Model}\label{sec:results_gmm}
We now modify the distribution to incorporate a probability that the pedestrians will cross. We encode this scenario with a Markov Chain (see Fig.~\ref{fig:markov_chain}) that changes pedestrian movement from horizontal to diagonal in addition to the Gaussian process noise of the previous simulations. We define the pedestrian dynamics as
\begin{equation}
    \b{p}_{k + 1} = \b{p}_k + (B\b{v} + \bm{\delta}_{w, k}) dt, \quad \bm{\delta}_{w, k} \sim \mathcal{N}(\b{0}, \bm{\Sigma}_{w, k}),
\end{equation}
where $B$ is either $B_h = \begin{bmatrix} 1 & 0 \end{bmatrix}^T$ or $B_d = \begin{bmatrix} \frac{1}{\sqrt{2}} & \frac{1}{\sqrt{2}} \end{bmatrix}^T$ depending on the state of the Markov Chain. The uncertainties associated with this motion can be modeled as a Gaussian Mixture Model (GMM), where each possible state transition in the Markov Chain leads to a separate mode with an associated probability (in total $21$ modes). For example, the mode where the pedestrian crosses after two steps occurs with probability $p_2 = (1 - 0.025)0.025$. The setup of these simulations is visualized in Fig.~\ref{fig:binomial}. We apply the Gaussian baseline to the GMM distribution by formulating the constraints at a risk of $\epsilon_k$ for all modes, which can lead to conservatism when multiple modes influence the plan. \hh{Since previous simulations showed that S-MPCC performs worse than the other two methods, we do not include it in this comparison.} We validate in this environment with $8$ pedestrians.

\begin{figure*}
    \centering
    \begin{subfigure}[b]{0.3\textwidth}
         \centering
         \includegraphics[width=\textwidth]{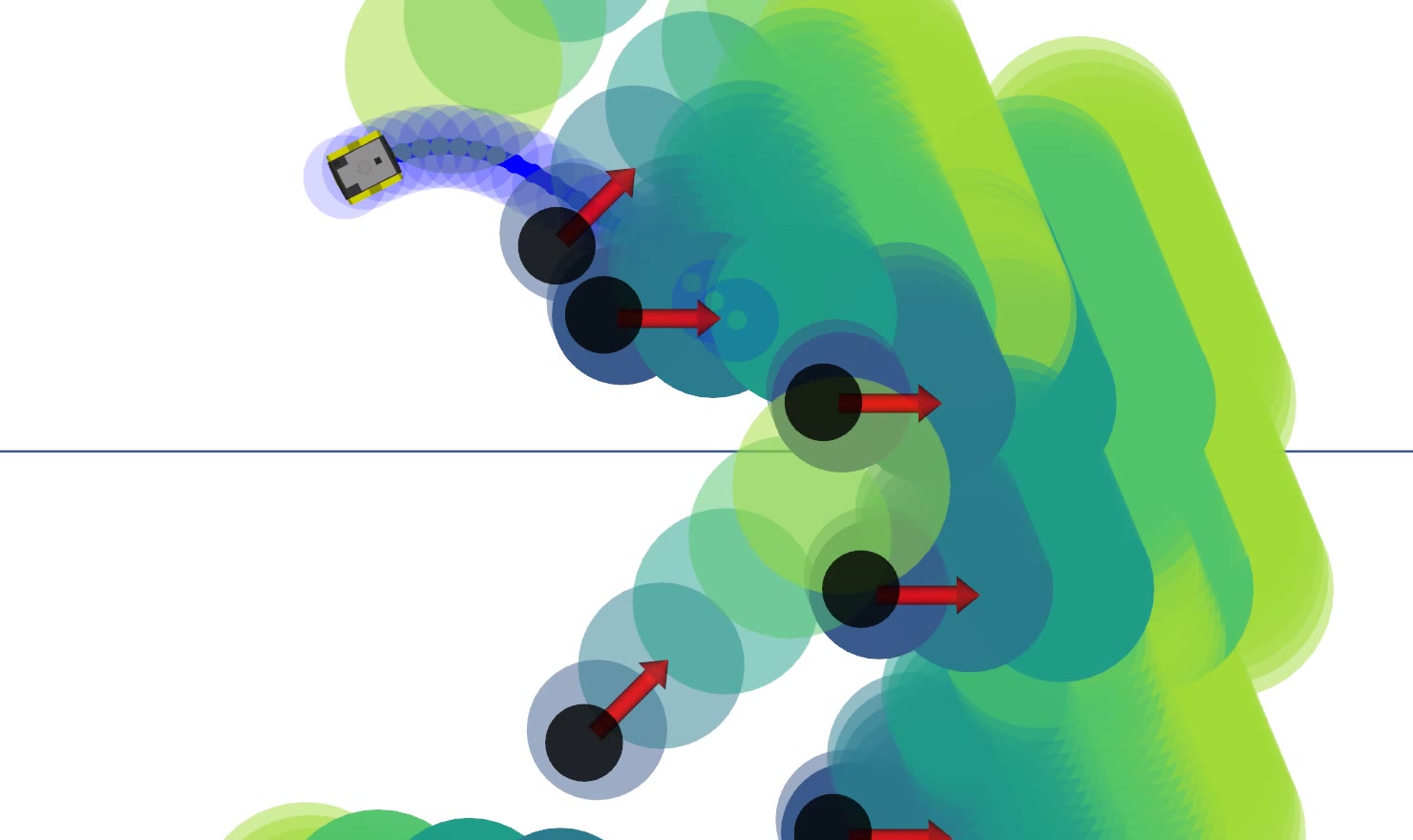}
            \caption{Gaussian baseline at $\epsilon_k=0.0003125$.}
         \label{fig:binomial_ellipsoid}
     \end{subfigure}
     \hspace{1pt}
     \begin{subfigure}[b]{0.3\textwidth}
         \centering
         \includegraphics[width=\textwidth]{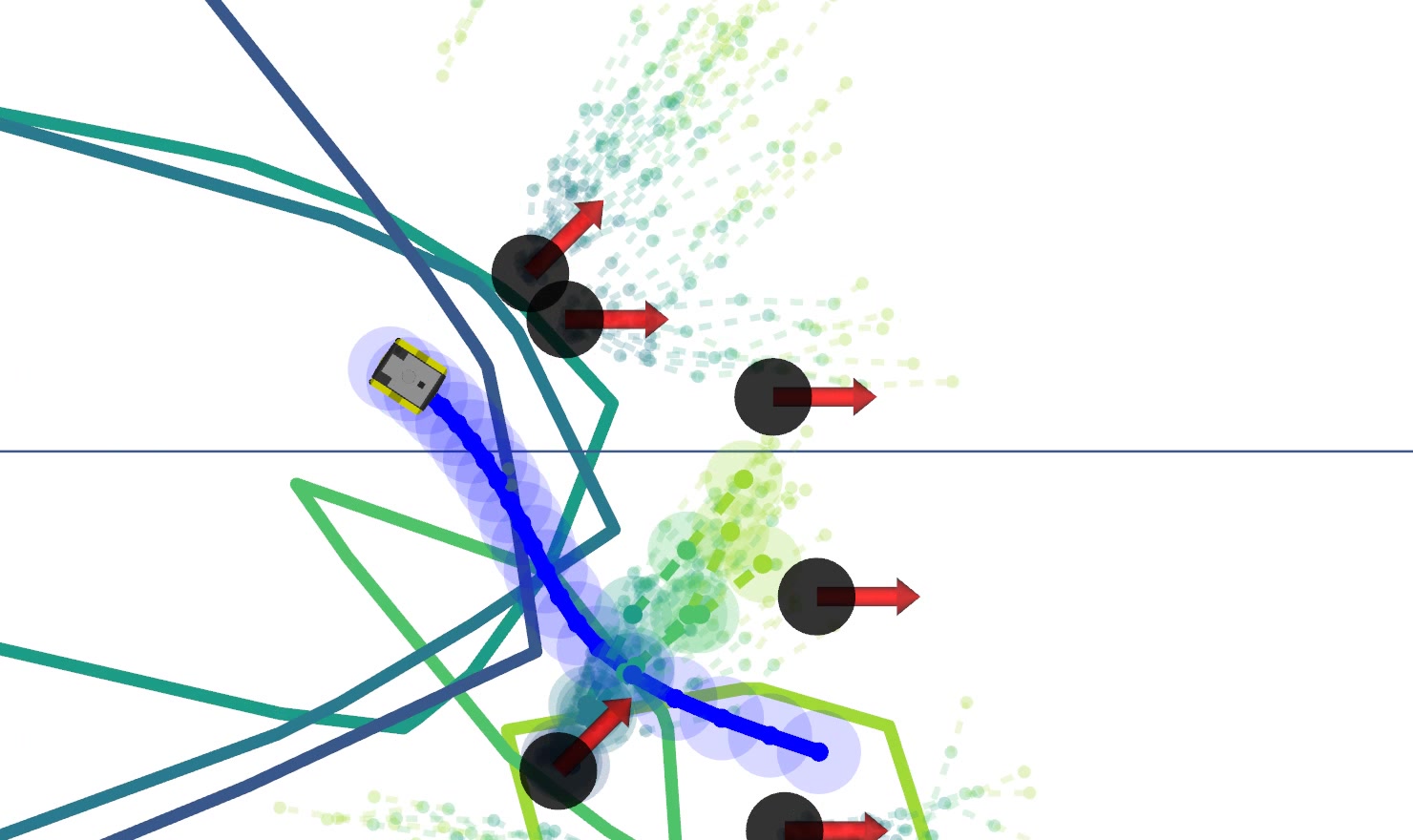}
            \caption{SH-MPC at $\epsilon=0.05$.}
         \label{fig:binomial_shmpc}
     \end{subfigure}
    \caption{Snapshot of the simulation where pedestrian motion follows a GMM using the same visualizations as in Fig.~\ref{fig:gaussian}. \h{\mbox{(a) Level} sets of each mode are visualized. (b) Sampled trajectories consist of crossing and non-crossing behavior.}}
    \label{fig:binomial}
\end{figure*}%

\begin{table*}
    \centering
    \caption{\h{Results for the multimodal simulation with $8$ pedestrians. Displayed results follow the notation in Table \ref{tab:gaussian}, with minimum distance replaced by the number of collisions.}}
\begin{tabular}{|l|c@{\hspace{1pt}}c@{\hspace{1pt}}l@{\hspace{2pt}}c|c@{\hspace{1pt}}c@{\hspace{1pt}}l@{\hspace{2pt}}c|c|c|c|c|}
\hline \textbf{Method} & \multicolumn{4}{c|}{\textbf{Max CP$_k$/Spec. (\%)}} & \multicolumn{4}{c|}{\textbf{Max CP/Spec. (\%)}} & \textbf{Dur. [s]} &\textbf{Trav. [m]} &\textbf{Collisions} &\textbf{Runtime (Max) [ms]}\\\hline
Gaussian ($\epsilon_k=0.05$) & 0.1042 &/ &0.0500 &(208) & 0.3346 &/ &- &(-) & 16.98 (4.69) & 20.27 (1.17) & 8 & 144 (451)\\
\hhline{|=============|}
Gaussian ($\epsilon_k=0.0025$) & 0.0050 &/ &0.0025 &(198) & 0.0215 &/ &- &(-) & 17.97 (4.94) & 20.51 (1.09) & 6 & 150 (447)\\
\hhline{|=============|}
Gaussian ($\epsilon_k=0.0003125$) & 0.0006 &/ &0.0025 &(24) & 0.0028 &/ &- &(-) & 18.59 (5.32) & 20.66 (1.11) & \textbf{0} & 148 (409)\\\hline
\hh{SH-MPC ($\epsilon = 0.05$)} & \hh{0.0057} &/ &- &(-) & \hh{0.0126} &/ &\hh{0.0500} &\hh{(25)} & \hh{\textbf{18.02} (3.66) }& \hh{\textbf{18.74} (0.23) }& 3 & \hh{\textbf{29} (71)}\\\hline
\end{tabular}
    \label{tab:binomial_crowded}
\end{table*}%

Results are summarized in Table~\ref{tab:binomial_crowded}. We again validate against the baseline with $\epsilon_k = 0.0003125$. \hbox{SH-MPC} outperforms the baselines on almost all metrics. We also note that the computation times of the Gaussian method are excessive due to the many modes to be considered, while the computation times of SH-MPC are unaffected. Some collisions occur for \h{the higher risk} methods in this environment. Collisions are observed when a pedestrian starts crossing towards the \system. In this case the predictions change between two \ho{instances of the control problem} and make the optimization infeasible, freezing the robot in place. %

\begin{figure}[t!]
    \centering
    \begin{subfigure}[t]{0.12\textwidth}
    \centering
         \scalebox{1}[1]{\includegraphics[width=\textwidth]{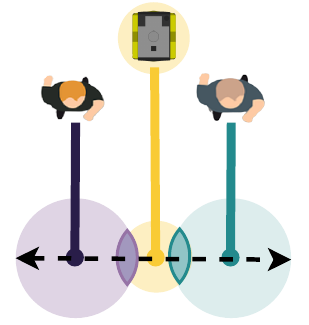}}
            \caption{\h{Example case}}%
         \label{fig:marginal_a}
     \end{subfigure}
     \hspace{2pt}
     \begin{subfigure}[t]{0.31\textwidth}
         \centering
         \includegraphics[width=\textwidth, trim={0.5cm 0cm 1.5cm 0cm}, clip]{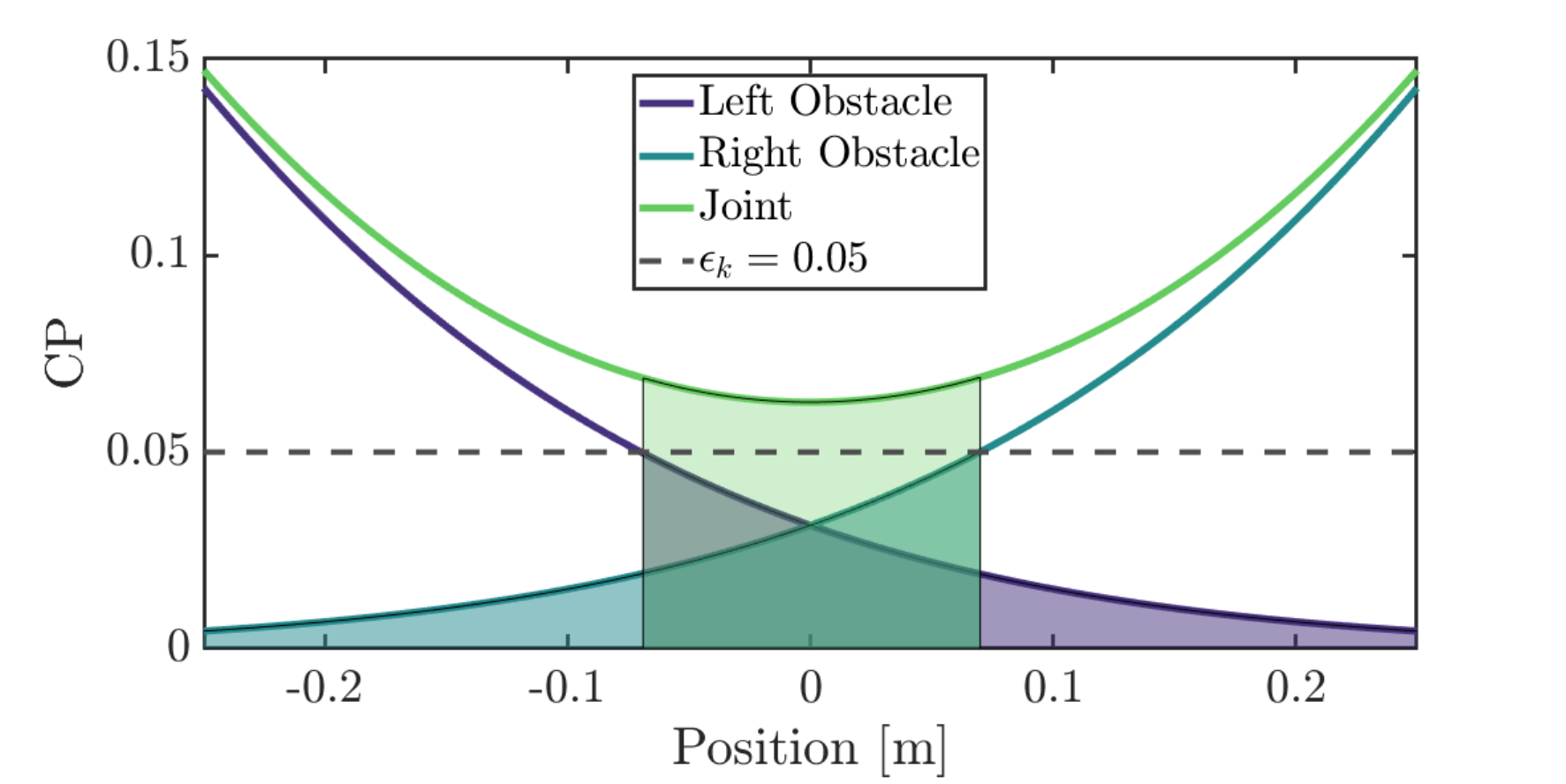}
        \caption{Joint and marginalized risk in the direction of the black dashed arrow in (a).}%
         \label{fig:marginal_b}
     \end{subfigure}
    \caption{A 1D illustration of the case where two obstacles constrain the \system. Even though the marginal probability of collision for each obstacle is less than $\epsilon_k$ (shaded tails) in the center region, the joint probability of collision in the feasible region (green shaded area) is larger than $\epsilon_k$.}
    \label{fig:marginal_obstacle}
\end{figure}

\subsection{\ho{Empirical Analysis and Sensitivity}}%
Fig~\ref{fig:risk_figure} visualizes the empirical risk distribution of the multimodal 8 pedestrian simulation (Sec.~\ref{sec:results_gmm}) with respect to the support of the SP (Eq.~\ref{eq:scenario_program}). This distribution empirically confirms that higher support leads to higher risk on average as well as longer tails. %

\ho{We validate the sensitivity of SH-MPC to varying risk \mbox{specifications} ($\epsilon$) and horizon lengths ($N$) in the multimodal 8 pedestrian simulations. Results are depicted in Fig.~\ref{fig:sensitivity}. Fig.~\ref{fig:sensitivity_epsa} shows that reducing the specified risk results in longer task durations. It also shows that the approach becomes more conservative for lower risk specifications. In accordance with our approach, Fig~\ref{fig:sensitivity_N} indicates that changing the horizon length does not significantly affect the CP. Instead, the trajectories become more cautious when the same risk must be guaranteed over a longer duration. We observe in Fig.~\ref{fig:sensitivity_eps_runtime} and \ref{fig:sensitivity_N_runtime} that the computation times are mainly increasing when the specified risk is reduced. This is due to its relation with the sample size.}

\begin{figure}[t]
    \centering
        \includegraphics[width=0.39\textwidth,trim={0.5cm, 0.2cm, 1.5cm, 0.6cm}, clip]{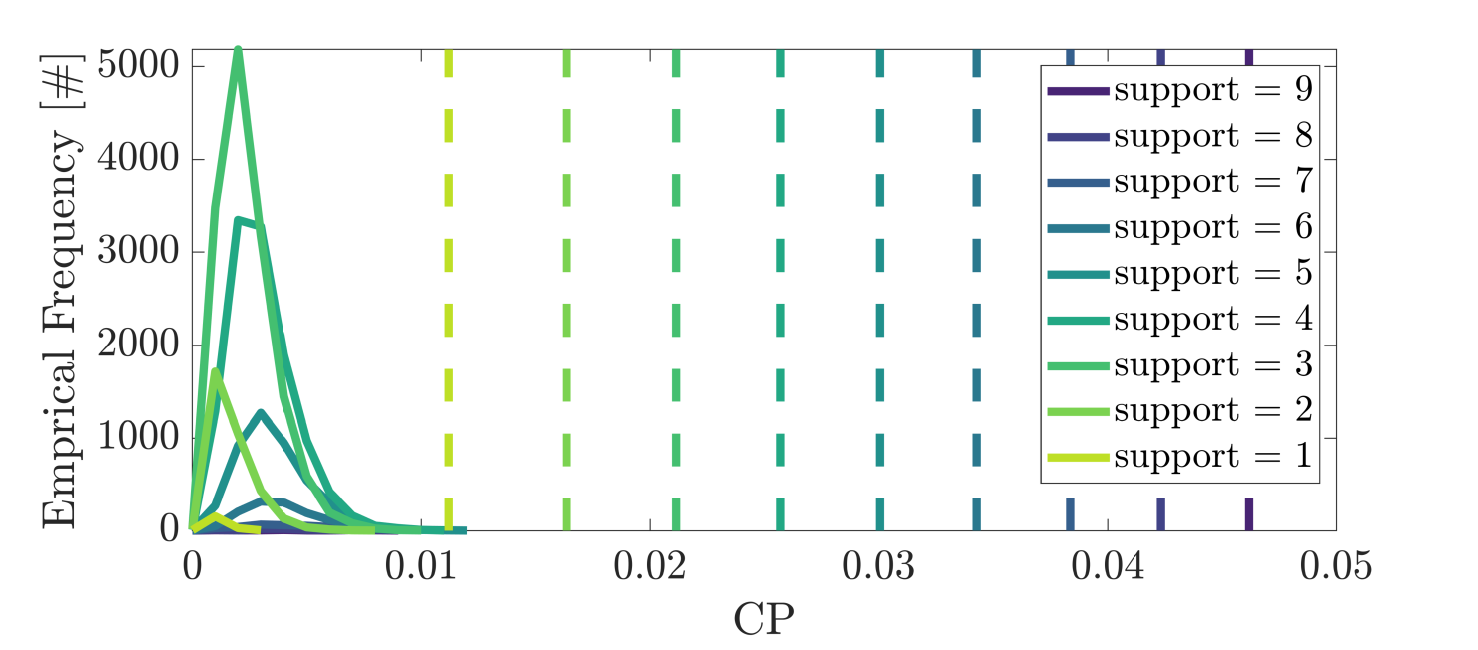}%
    \caption{The empirical distribution of the trajectory CP obtained by SH-MPC for each support value under pedestrian motion distributed as a Gaussian Mixture Model. Dashed vertical lines denote the theoretical CP bound for each support.}
    \label{fig:risk_figure}
\end{figure}
\begin{figure}[t]
    \centering
    \begin{subfigure}[b]{0.23\textwidth}
         \centering
         \includegraphics[width=\textwidth]{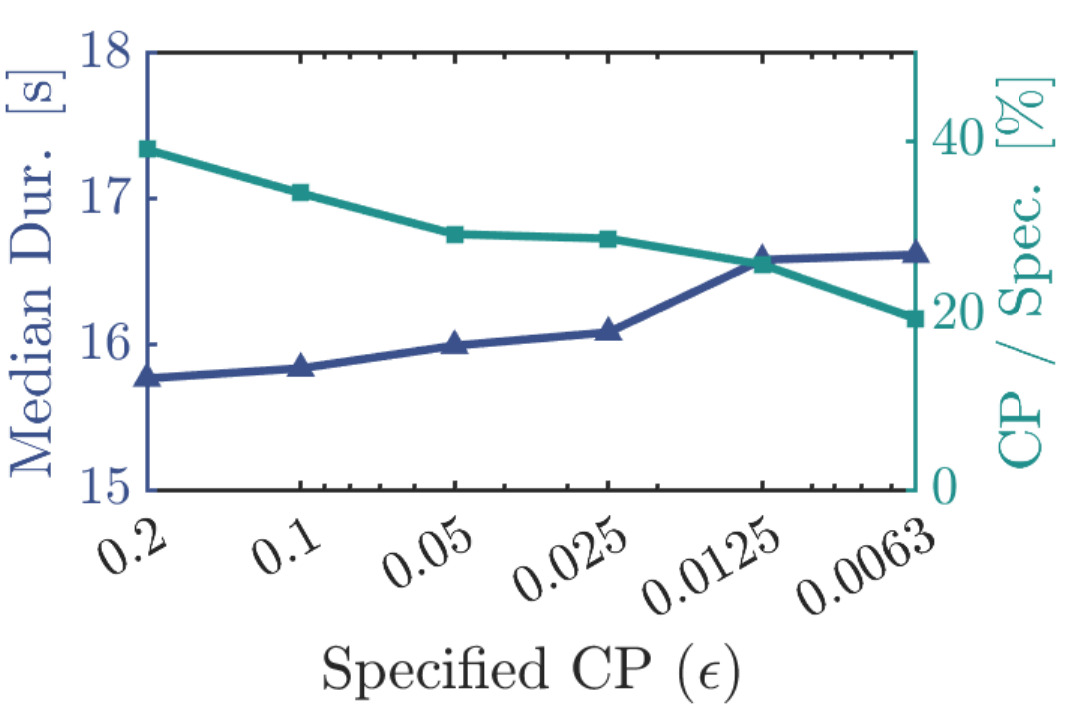}
            \caption{}
         \label{fig:sensitivity_epsa}
     \end{subfigure}
         \begin{subfigure}[b]{0.23\textwidth}
         \centering
         \includegraphics[width=\textwidth]{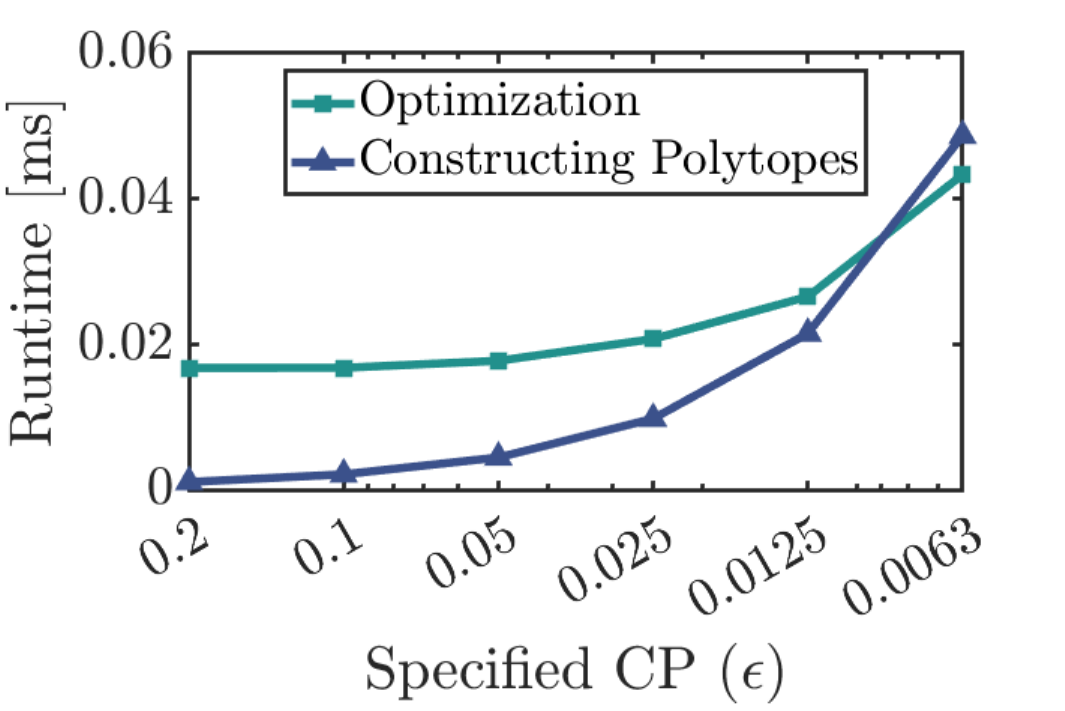}
            \caption{}
         \label{fig:sensitivity_eps_runtime}
     \end{subfigure}
     \newline
     \begin{subfigure}[b]{0.23\textwidth}
         \centering
         \includegraphics[width=\textwidth]{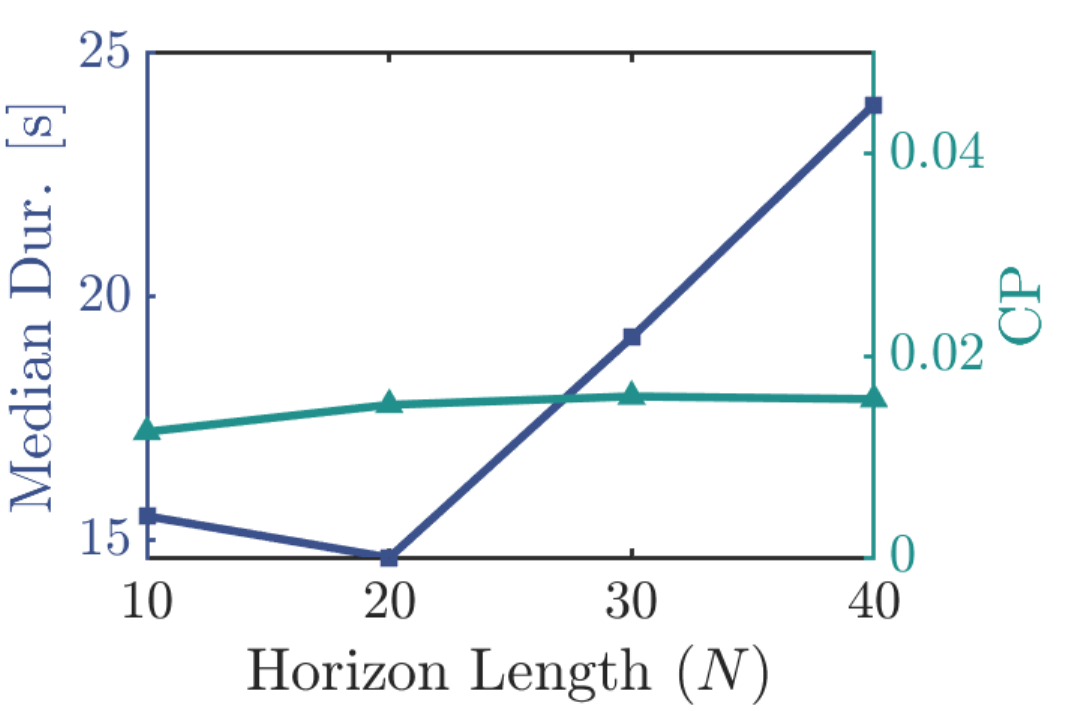}
            \caption{}
         \label{fig:sensitivity_N}
     \end{subfigure}
     \begin{subfigure}[b]{0.23\textwidth}
         \centering
         \includegraphics[width=\textwidth]{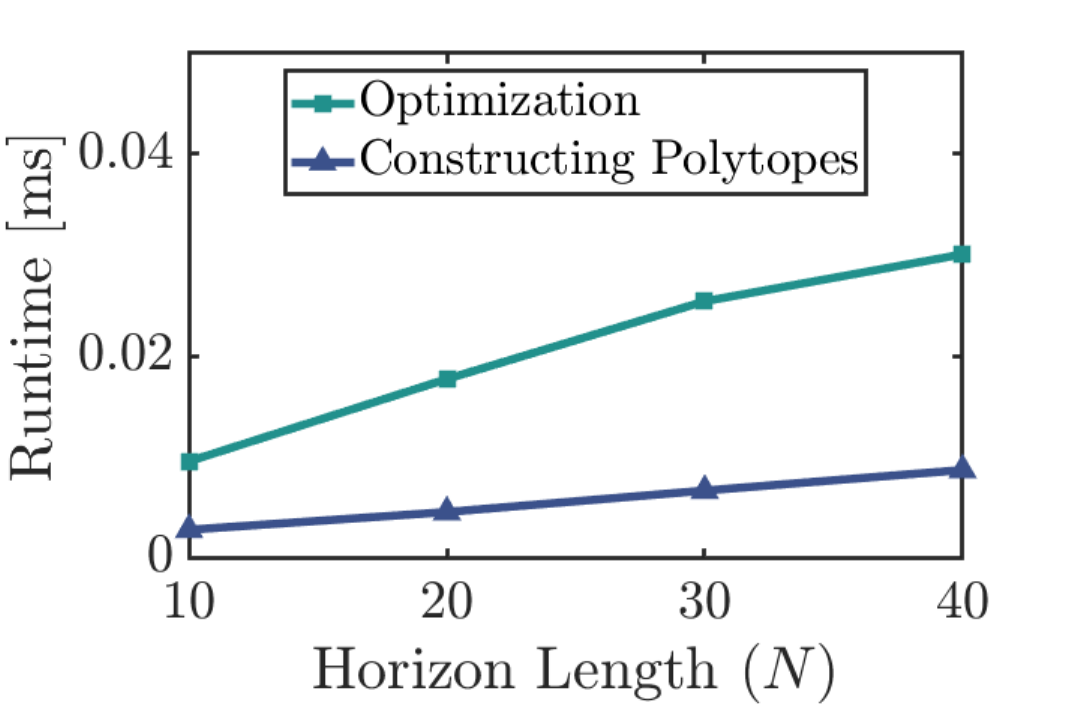}
        \caption{}
        \label{fig:sensitivity_N_runtime}
     \end{subfigure}
    \caption{\ho{Sensitivities of the empirical CP and runtimes with respect to the specified CP ($\epsilon$) and the horizon length ($N$), evaluated over $10$ experiments in the setting of Sec.~\ref{sec:results_gmm}. Task duration is denoted ``Dur.''. With ``CP'' we denote the mean over the experiments of the maximum trajectory CP. The runtimes are separated in solving the optimization and computing free-space polytopes \eqref{eq:polytope} from the scenarios.}}
    \label{fig:sensitivity}
\end{figure}

\subsection{Autonomous Navigation in an Urban Environment}
\ho{SH-MPC can be applied to \h{different robot morphologies and scenarios}. As demonstration, we deploy our approach on a simulated self-driving vehicle in Carla simulator~\cite{dosovitskiy_carla_2017}.} The dynamics are modeled with a second order bicycle model~\cite{kong_kinematic_2015} and the collision region consists of $3$ discs. We construct a collision-free polytope for each of the discs. The pedestrians are programmed to follow the same dynamics as in Sec.~\ref{sec:results_gmm}, i.e., a GMM modeling crossing behavior. \h{We do not model the interaction between the vehicle and the pedestrians}. The control frequency is $10$~Hz. We measured the computation time to be \h{$88$}~ms on average and \h{$135$}~ms maximum. Fig.~\ref{fig:carla} visualizes snapshots of the simulations. \ho{A video of the simulations is provided with this paper~\cite{o_de_groot_scenario-based_2022}.} \h{In Case~A (see Fig.~\ref{fig:carla-a}), the planner keeps sufficient distance to let the pedestrians cross, while driving as close to the path as is safe. In Case~B (see Fig.~\ref{fig:carla-b}) the vehicle passes behind the pedestrians while keeping distance to the pedestrian that is not crossing.}%

\begin{figure}[t]
    \centering
    \begin{subfigure}[b]{0.40\textwidth}
        \centering
        \includegraphics[width=\textwidth]{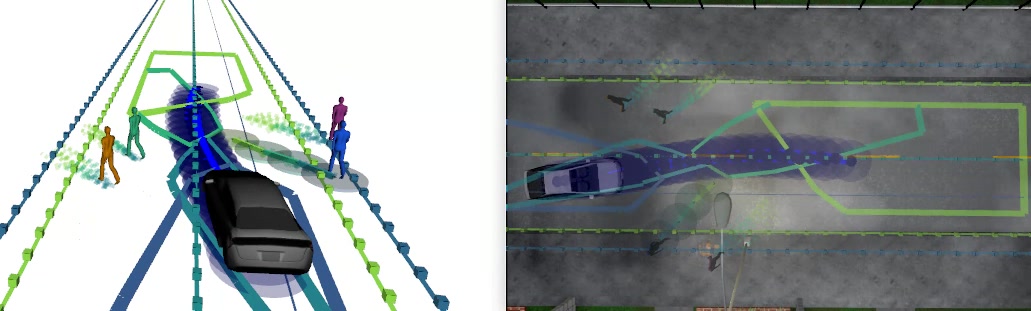}
        \caption{\h{A snapshot of the proposed approach in the Carla simulator.}}
        \label{fig:t1}
     \end{subfigure}
    \newline
     \begin{subfigure}[b]{0.18\textwidth}
        \centering
        \includegraphics[trim={1.6cm 0.8cm 4cm 1.5cm},clip,width=\textwidth]{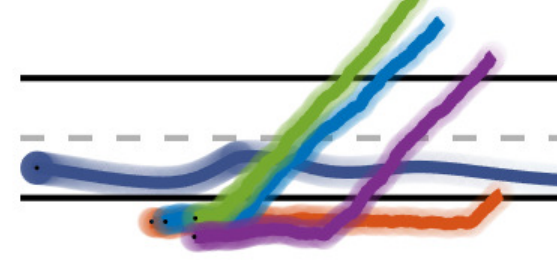}
        \caption{\h{Case A}}
        \label{fig:carla-a}
     \end{subfigure}
     \hspace{2pt}
     \begin{subfigure}[b]{0.18\textwidth}
        \centering
        \includegraphics[trim={1.6cm 0.8cm 4cm 1.5cm},clip,width=\textwidth]{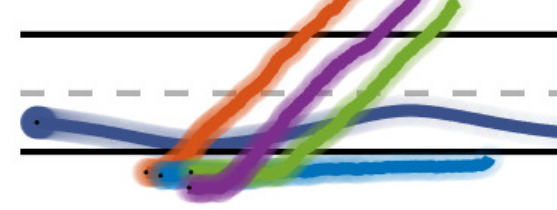}
        \caption{\h{Case B}}
        \label{fig:carla-b}
     \end{subfigure}
    \caption{\h{(a) Snapshot from Carla simulations with visualizations as in Fig.~\ref{fig:binomial_shmpc} for the frontal vehicle disc in stages $0, 6, 12$ and $19$. Plots (b) and (c) show the observed trajectories of the vehicle (dark blue) and pedestrians (other colors) in two cases.}}
    \label{fig:carla}
\end{figure}

\section{Discussion}
\ho{As expected from the theoretical analysis, our experiments showed that SH-MPC can consistently bound the CP of the overall trajectory. Where methods that impose marginal constraints need to decide between performance ($\epsilon_k = \epsilon$) and safety ($\epsilon_k = \frac{\epsilon}{NM}$), SH-MPC makes this \hbox{trade-off} more explicit by ensuring that the CP remains consistent under different operating conditions, such as with regards to number of obstacles, probability distributions, and the horizon length.}

\h{The gap between the risk guarantee and the obtained risk can be reduced, for example, by assuming some knowledge of the distribution or by running multiple scenario programs in parallel. Alternatively, the risk could be analyzed in continuous time (see for example~\cite{frey_collision_2020}) to reduce discretization errors.}

The proposed method is widely applicable. \h{It handles dynamic obstacles (e.g., cyclists, cars or non-cooperative robots) and controls systems with nonlinear dynamics. In addition, the joint distribution of the uncertainty can capture interactions of dynamic obstacles with other obstacles or the robot (e.g., to predict that pedestrians evade other pedestrians and the vehicle). It cannot yet account for interaction during planning, where the dynamics of the robot and pedestrians directly influence each other, as the probability measure $\mathbb{P}$ cannot depend on the optimization variables in scenario optimization.}

\h{The guarantees provided in this paper rely on an accurate model of the uncertainty, which may be challenging to obtain, for example in the case of human motion prediction. Nevertheless, the proposed method provides a planner that attains a desired level of risk with respect to the predicted probability distribution. The prediction model could also be replaced by recorded samples, in line with the typical scenario approach, to reduce modeling errors and provide formal guarantees with respect to the true motion of the obstacles.}

\ho{In terms of computational efficiency, we showed that our approach is online capable and scalable under typical operating conditions. For extreme low risk specifications (e.g., $\epsilon \leq 5\cdot 10^{-3}$), computational requirements may become excessive as a result of the increase in sample size. This can be addressed}\lf{, for example, by either} \ho{pruning the samples, given that only the extreme samples are of interest (see for example~\cite{de_groot_scenario-based_2021}), or by solving an approximate scenario optimization~\cite{sartipizadeh_approximate_2020}. In addition, most computations of SH-MPC are parallel linear computations (for each sample), which} \lf{potentially} \ho{leave room for further optimization, e.g., by delegating computations to a Graphical Processing Unit (GPU).}

\section{Conclusions}\label{sec:conclusion}
We presented a novel method for planning under uncertainty, Safe Horizon Model Predictive Control (SH-MPC), that bounds the collision probability of the planned trajectory over its duration and with respect to all obstacles. The method uses a scenario optimization formulation where samples of the involved uncertainty are used as constraints to limit the collision probability of the motion plan. The number of samples is the main indicator for the collision probability and under our approach could be computed before deploying the controller.

Our simulations, \h{with a mobile robot and an autonomous vehicle,} showed that SH-MPC better approximates the collision probability over \h{the duration of the motion plan} than existing methods that rely on the marginal probability of collision. The main baseline, that achieved tight evaluations of the risk for each time step, was shown to be conservative over the duration of its motion plan and there was significant variation in its overall risk when we varied the number of obstacles or their distribution. The overall risk of SH-MPC remained less conservative and more consistent between different environments and distributions, which resulted in faster trajectories when the environment was crowded. In addition, we showed excellent scaling of the computation time with respect to the number of obstacles and under varying distributions.

\section*{Acknowledgments}
The authors would like to thank Bruno Brito for assisting in the simulated experiments.

\newpage
\bibliographystyle{IEEEtran}
\bibliography{abrv, references_oscar}

\begin{IEEEbiography}[{\includegraphics[width=1in,height=1.25in,clip,keepaspectratio]{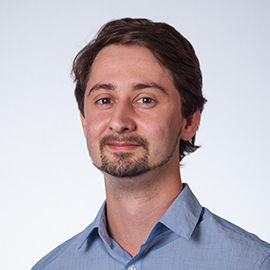}}]{Oscar de Groot} received both the B.Sc. degree in electrical engineering, in 2016, and the M.Sc. degree in systems \& control, in 2019, from the Delft University of Technology, Delft, The Netherlands. He is currently pursuing a Ph.D. in motion planning for autonomous vehicles in urban environments at the department of Cognitive Robotics at the Delft University of Technology. His research interests include probabilistic safe motion planning, scenario optimization, model predictive control and self-driving vehicles.
\end{IEEEbiography}
\begin{IEEEbiography}[{\includegraphics[width=1in,height=1.25in,clip,keepaspectratio]{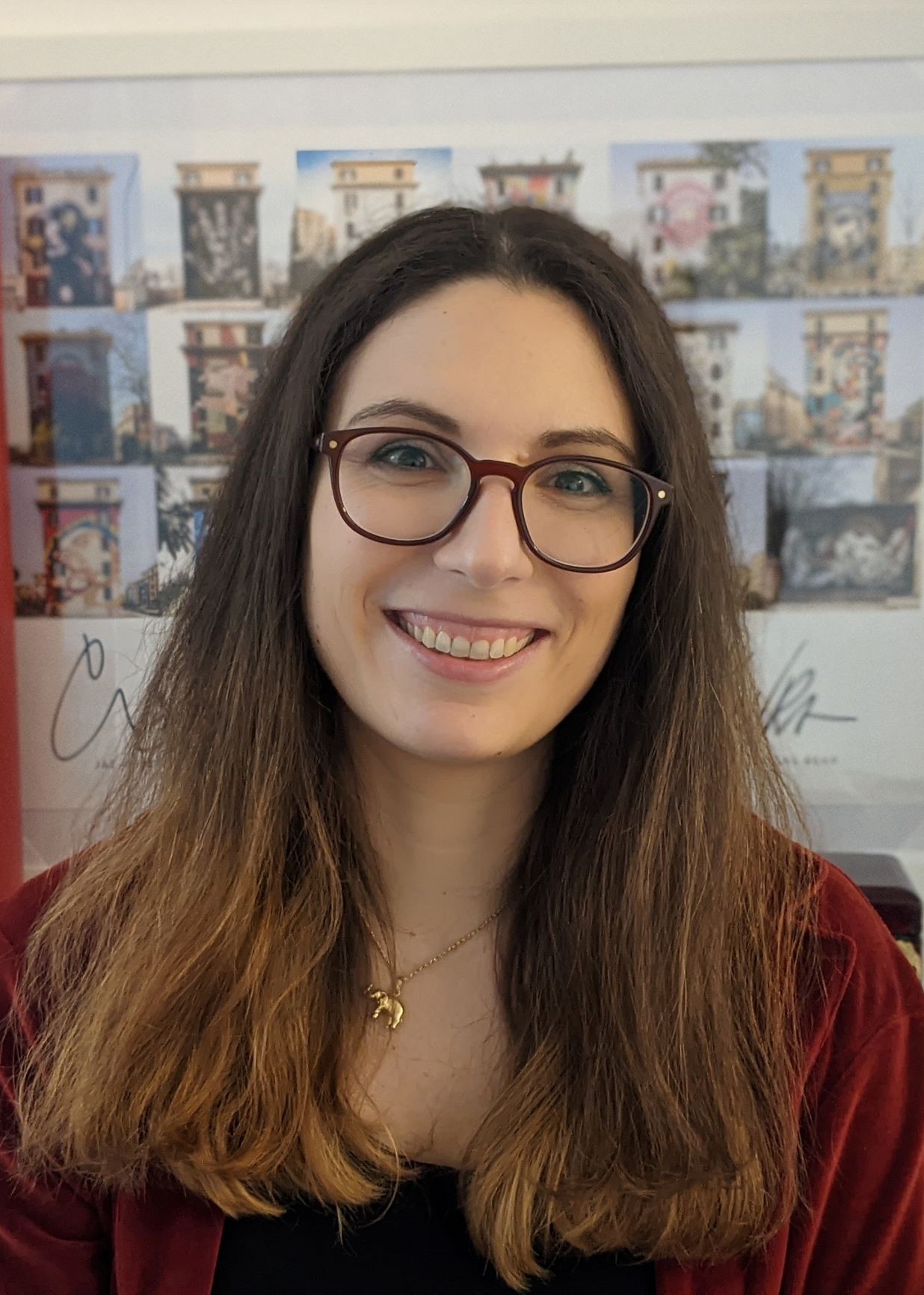}}]{Laura Ferranti} received her PhD from Delft University of Technology, Delft, The Netherlands, in
2017. She is currently an assistant professor in the Cognitive
Robotics (CoR) Department, Delft University of Technology, Delft, The
Netherlands. She is the recipient of an NWO Veni Grant from The
Netherlands Organisation for Scientific Research (2020), and of the Best Paper
Award in Multi-robot Systems at ICRA 2019.
Her research interests include optimization and optimal control,
model predictive control, reinforcement learning, embedded optimization-based
control with application in flight control, maritime transportation, robotics, and automotive.   
\end{IEEEbiography}
\begin{IEEEbiography}[{\includegraphics[width=1in,height=1.25in,clip,keepaspectratio]{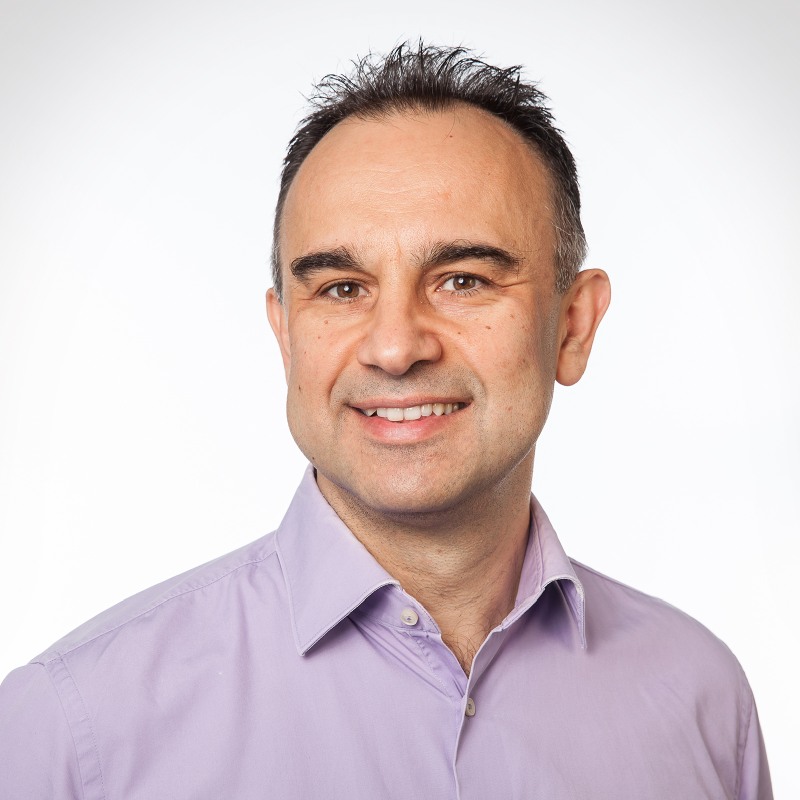}}]{Dariu Gavrila}
 received the Ph.D. degree in computer science from Univ. of Maryland at College Park, USA, in 1996. From 1997 until 2016, he was
with Daimler R\&D, Ulm, Germany, where he became a Distinguished Scientist. He led the vision-based pedestrian detection research, which was commercialized 2013-2014 in various Mercedes-Benz models. In 2016, he moved to TU Delft, where he since heads the Intelligent Vehicles group as a Full Professor. His current research deals with sensor-based detection of humans and analysis of behavior in the context of self-driving vehicles. He was awarded the Outstanding Application Award 2014 and the Outstanding Researcher Award 2019, both from the IEEE Intelligent Transportation Systems Society.
\end{IEEEbiography}
\begin{IEEEbiography}[{\includegraphics[width=1in,height=1.25in,clip,keepaspectratio]{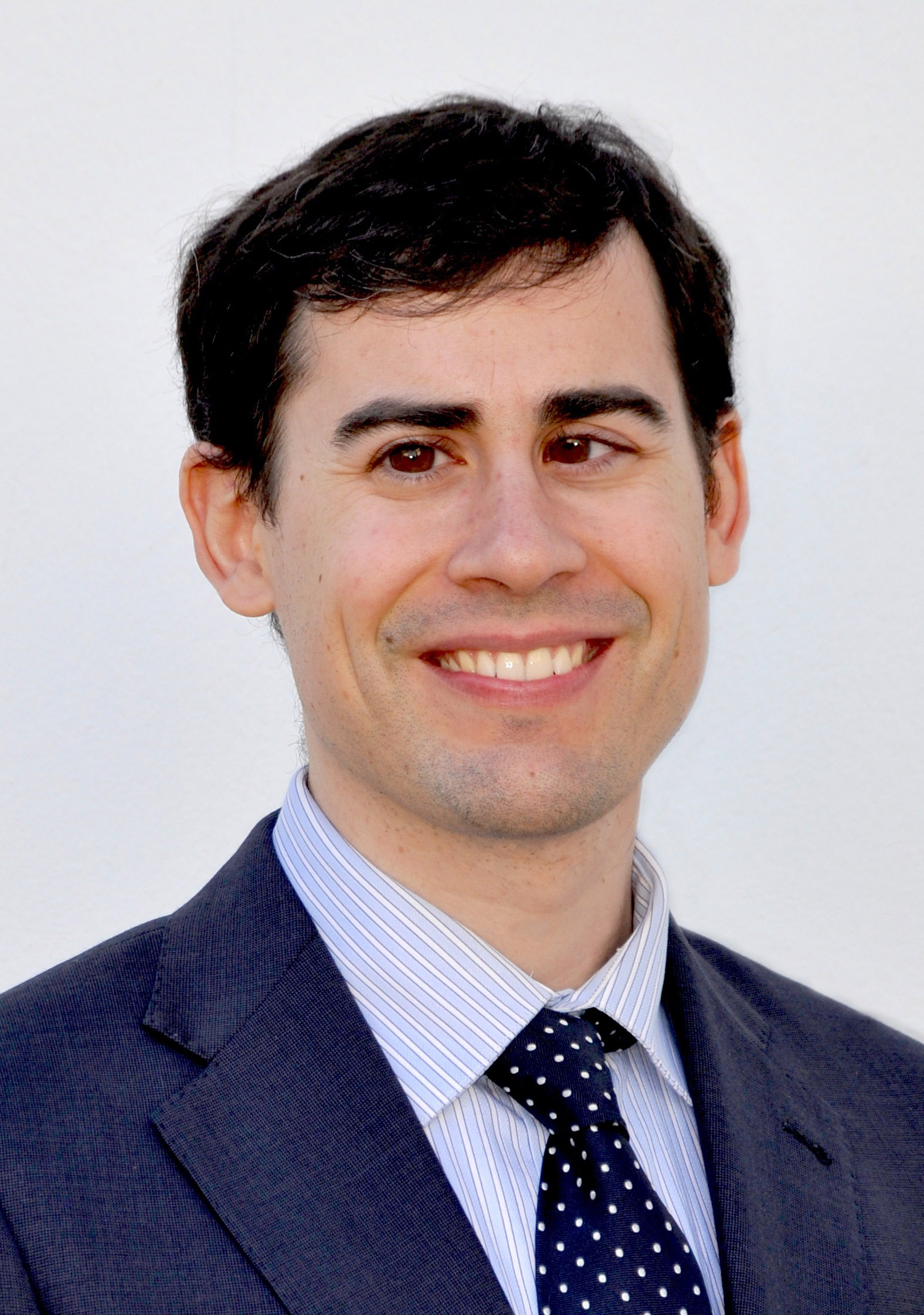}}]{Javier Alonso-Mora}
 is an Associate Professor at the Cognitive Robotics department of the Delft University of Technology, and a Principal Investigator at the Amsterdam Institute for Advanced Metropolitan Solutions (AMS Institute).
Before joining TU Delft, Dr. Alonso-Mora was a Postdoctoral Associate at the Massachusetts Institute of Technology (MIT). He received his Ph.D. degree in robotics from ETH Zurich,

His main research interest is in navigation, motion planning and control of autonomous mobile robots, with a special emphasis on multi-robot systems, on-demand transportation and robots that interact with other robots and humans in dynamic and uncertain environments. He is the recipient of an ERC Starting Grant (2021), the ICRA Best Paper Award on \ho{Multi-Robot} Systems (2019), an Amazon Research Award (2019) and a talent scheme VENI \ho{Grant} from the Netherlands Organisation for Scientific Research (2017).
\end{IEEEbiography}

\appendix
\section{\hh{Appendix}}\label{ap:shadow}
\hh{In practice, linearization of the collision avoidance constraints results in a support that consist of a small subset of the scenarios. To support this observation, we consider here how the constraint formulation impacts the support.}

\begin{definition}\label{def:shadow}
\hh{The \textit{shadow} of a scenario $\b{\delta}^{(i)}$ under the constraints $\b{\theta} \in \Theta_{\delta^{(i)}}$ is a region $S_{\b{\delta}^{(i)}} \subseteq \Delta$ such that if another scenario satisfies $\b{\delta} \in S_{\delta^{(i)}}$, then either $\b{\delta}^{(i)}$ or $\b{\delta}$ is redundant. That is,
\begin{equation*}
S_{\b{\delta}^{(i)}} = \{\b{\delta}\in\Delta \ | \ \Theta_{\b{\delta}^{(i)}} \subset \Theta_{\b{\delta}}\} \ \cup \ \{\b{\delta}\in\Delta \ | \ \Theta_{\b{\delta}} \subset \Theta_{\b{\delta}^{(i)}}\}.
\end{equation*}}
\end{definition}%
\hh{An example shadow is visualized in Fig.~\ref{fig:shadow_a} for the proposed constraints. Note that in Fig.~\ref{fig:shadow_b}, the shadow always occupies a non zero area. This is a result of the linearization and a set of box constraints on the position that limit the range in which constraints are considered. In this case we can prove the following.}
\begin{figure}[b]
    \centering
    \begin{subfigure}[t]{0.22\textwidth}
         \centering
         \includegraphics[width=\textwidth,trim={-4cm 0cm -4cm 0cm},clip]{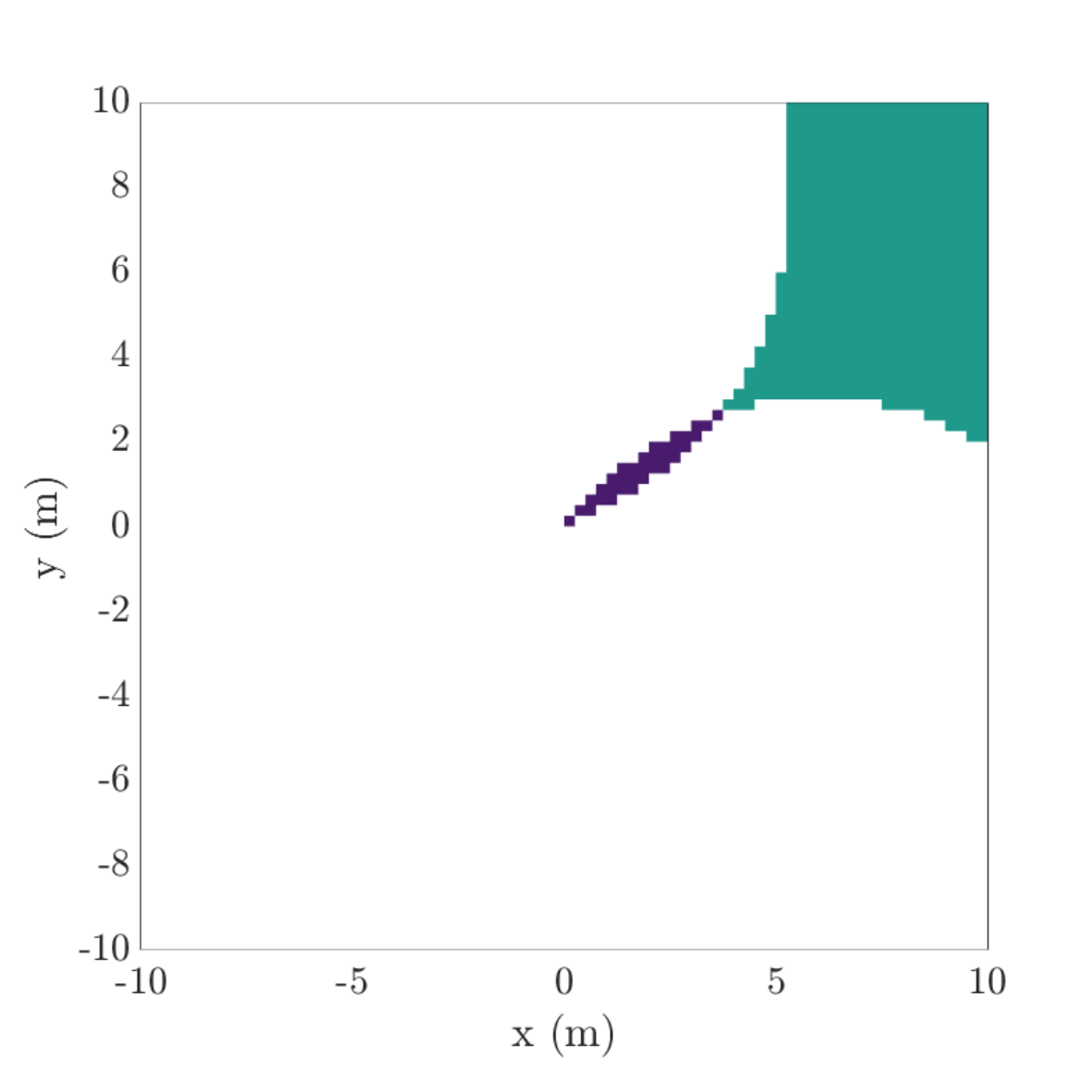}
        \caption{\hh{Shadow for a sample at $(2.5, 3.5)$ in blue (dominated by) and green (dominates).}}
         \label{fig:shadow_a}
     \end{subfigure}
     \hspace{1pt}
    \begin{subfigure}[t]{0.22\textwidth}
    \centering
         \includegraphics[width=\textwidth,trim={-4cm 0cm -4cm 0cm},clip]{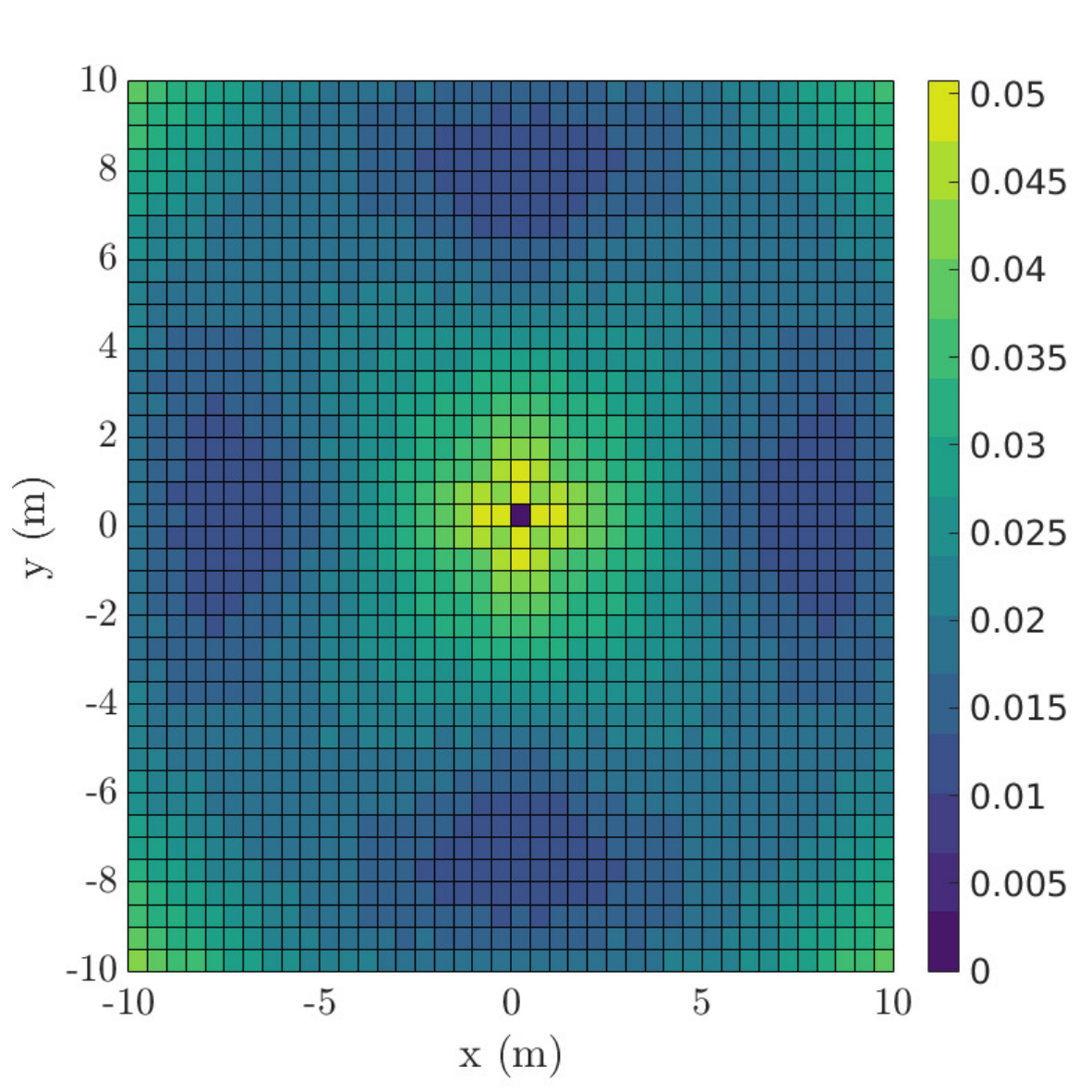}
            \caption{\hh{The area occupied by the shadow for each point within the box-constraints.}}
         \label{fig:shadow_b}
     \end{subfigure}
    \caption{\hh{Empirical 2-D visualizations of the shadow region for linearized constraints \eqref{eq:hyperplane_definition}.}}
    \label{fig:shadow}
\end{figure}%

\hh{\begin{theorem}\label{theory:redundant_scenarios}
    Suppose that the shadow is non-empty for all possible samples in the support (i.e., $S_{\b{\delta}} \neq \emptyset, \ \forall \b{\delta} \in \Delta$), then the probability that out of $S$ samples, none of the samples are redundant goes to $0$ exponentially, for $S\to\infty$.
\end{theorem}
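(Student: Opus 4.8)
The plan is to reduce the claim to a geometric bound that exploits the independence of the samples, with the positive probability mass of the shadow providing the decay rate. First I would translate the shadow into a statement about redundancy. By Definition~\ref{def:shadow}, if two distinct samples $\b{\delta}^{(i)}$ and $\b{\delta}^{(j)}$ satisfy $\b{\delta}^{(j)}\in S_{\b{\delta}^{(i)}}$, then one of the inclusions $\Theta_{\b{\delta}^{(i)}}\subset\Theta_{\b{\delta}^{(j)}}$ or $\Theta_{\b{\delta}^{(j)}}\subset\Theta_{\b{\delta}^{(i)}}$ holds. Intersecting with the remaining scenario constraints preserves such an inclusion, so the sample with the larger (less restrictive) constraint set is implied by its partner and is therefore redundant in the full SP. Hence the event that \emph{none} of the $S$ samples is redundant is contained in the event that $\b{\delta}^{(j)}\notin S_{\b{\delta}^{(i)}}$ for every ordered pair $i\neq j$.

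Next I would invoke independence on disjoint pairs. Grouping the i.i.d.\ samples into $\lfloor S/2\rfloor$ disjoint pairs $(\b{\delta}^{(2m-1)},\b{\delta}^{(2m)})$, the shadow region $S_{\b{\delta}^{(2m-1)}}$ is a deterministic (measurable) function of $\b{\delta}^{(2m-1)}$ alone, so the events $\{\b{\delta}^{(2m)}\notin S_{\b{\delta}^{(2m-1)}}\}$ are mutually independent across $m$. Since ``no sample redundant'' forces each of these events, the product-measure bound
\[
\mathbb{P}^{S}\!\left[\text{no sample redundant}\right]\;\le\;\prod_{m=1}^{\lfloor S/2\rfloor}\mathbb{P}\!\left[\b{\delta}^{(2m)}\notin S_{\b{\delta}^{(2m-1)}}\right]\;=\;(1-q)^{\lfloor S/2\rfloor}
\]
follows, where $q\coloneqq\mathbb{E}_{\b{\delta}'}\!\left[\mathbb{P}(S_{\b{\delta}'})\right]$ is the expected probability mass carried by the shadow of a randomly drawn sample (well defined by Tonelli once measurability is noted).

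It then remains to show $q>0$, which yields the exponential rate. This is where the hypothesis $S_{\b{\delta}}\neq\emptyset$ must be upgraded to positive measure: I would argue that for the linearized half-space constraints in Eq.~\eqref{eq:hyperplane_definition}, restricted to the box constraints on the position, the inclusion of one half-space in another holds on an \emph{open}, full-dimensional set of sample positions --- exactly the non-zero area seen in Fig.~\ref{fig:shadow_b}. Consequently $\mathbb{P}(S_{\b{\delta}'})>0$ for $\mathbb{P}$-almost every $\b{\delta}'$ in the support whenever $\mathbb{P}$ admits a density there, and since $\mathbb{P}(S_{\b{\delta}'})$ is an almost surely strictly positive random variable, its expectation $q$ is strictly positive. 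Then $(1-q)^{\lfloor S/2\rfloor}\to 0$ exponentially as $S\to\infty$.

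The main obstacle is precisely this last step. Non-emptiness of $S_{\b{\delta}}$ by itself permits a measure-zero shadow (e.g.\ a single boundary point), for which the argument collapses; the substance of the proof is showing that the geometry of the clipped half-space inclusions makes $S_{\b{\delta}}$ full-dimensional, so that it carries positive mass under any absolutely continuous $\mathbb{P}$. Establishing measurability of $\b{\delta}'\mapsto\mathbb{P}(S_{\b{\delta}'})$, needed to define $q$ and apply the independence factorization, is routine but should be stated explicitly.
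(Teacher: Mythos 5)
Your proposal is correct, and it takes a genuinely different route from the paper's own proof. The paper argues sequentially: it considers the probability $P_i$ that a new sample falls outside the aggregated shadow $\bigcup_{j \le i} S_{\b{\delta}^{(j)}}$, observes that this union strictly grows each time a non-redundant sample is drawn (so $P_{i+1} < P_i$ and $P_i < 1$), writes the probability of no redundancy as $\prod_{i=1}^{S} P_i$, and finally bounds this product by $(P_1)^S$ to get the exponential rate. Your disjoint-pairing argument replaces this with $\lfloor S/2 \rfloor$ genuinely independent events $\{\b{\delta}^{(2m)} \notin S_{\b{\delta}^{(2m-1)}}\}$, giving the bound $(1-q)^{\lfloor S/2 \rfloor}$ with $q = \mathbb{E}\left[\mathbb{P}(S_{\b{\delta}'})\right]$. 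What your route buys is rigor on the independence step: the paper's $P_i$ are really conditional probabilities depending on the realized samples, and treating $\prod_i P_i$ as a product of deterministic monotone factors glosses over this, whereas your pair events factor exactly under the product measure $\mathbb{P}^S$, at the harmless cost of halving the exponent. Your first step (intersecting the inclusion $\Theta_{\b{\delta}^{(i)}} \subset \Theta_{\b{\delta}^{(j)}}$ with the remaining constraints to conclude redundancy in the full SP) is also made explicit, where the paper leaves it implicit in Definition~\ref{def:shadow}. Finally, your closing observation identifies a real gap that the paper's proof shares rather than avoids: non-emptiness $S_{\b{\delta}} \neq \emptyset$ alone does not yield $P_i < 1$ (nor your $q > 0$) if shadows can carry zero probability mass --- the paper's leap from strict set inclusion of aggregated shadows to strict decrease of $P_i$ makes exactly the assumption you flag. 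Both proofs need the positive-mass strengthening you describe (full-dimensionality of the clipped half-space inclusions under an absolutely continuous $\mathbb{P}$, consistent with Fig.~\ref{fig:shadow_b}), so stating and discharging it, together with the measurability of $\b{\delta}' \mapsto \mathbb{P}(S_{\b{\delta}'})$, is a genuine improvement over the published argument.
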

\begin{proof}
Given $S$ samples, a new sample $\b{\delta}$ is not redundant if it falls outside of the aggregated shadow, i.e., if $\b{\delta} \notin \bigcup_{i = 1}^S S_{\b{\delta}^{(i)}}$. The associated probability is
\begin{equation}
    P_S = \mathbb{P}\left[\b{\delta} \in \Delta \ | \ \b{\delta} \notin \bigcup_{i = 1}^S S_{\b{\delta}^{(i)}}\right].
\end{equation}
Since $S_{\b{\delta}} \neq \emptyset, \ \forall \b{\delta} \in \Delta$, the aggregated shadow grows when sample $S+1$ is not redundant, i.e., $\bigcup_{i = 1}^S S_{\b{\delta}^{(i)}} \subset \bigcup_{i = 1}^{S+1} S_{\b{\delta}^{(i)}}$. Therefore, $P_{S+1} < P_{S}$ and additionally $P_i < 1$ for $i > 0$. We obtain the following result,
\begin{equation}
\lim_{S\to\infty} \mathbb{P}^S[\textrm{No redundant scenarios}] = \lim_{S\to\infty}\prod_{i = 1}^SP_i = 0,
\end{equation}
implying that the probability of sampling no redundant scenarios goes to zero. This probability is upper bounded by $(P_1)^S$, i.e., it converges at least exponentially.
\end{proof}}

\hh{Although this does not prove that the support is bounded for finite $S$, it shows that it is very likely that redundant scenarios are sampled, which are not of support.}

\end{document}